\documentclass[letterpaper]{article} 
\usepackage[preprint]{aaai2027}
\usepackage{times}  
\usepackage{helvet}  
\usepackage{courier}  
\usepackage[hyphens]{url}  
\usepackage{graphicx} 
\usepackage{natbib}  
\usepackage{caption} 
\usepackage{algorithm}
\usepackage{algorithmic}
\usepackage{amsmath}
\usepackage{amssymb}
\usepackage{amsthm}
\usepackage{booktabs}

\newtheorem{lemma}{Lemma}
\newtheorem{proposition}{Proposition}
\newtheorem{theorem}{Theorem}

\title{AQKA: Active Quantum Kernel Acquisition Under a Shot Budget}
\author{
    Jian Xu\textsuperscript{\rm 1,\rm 2},
    Chao Li\textsuperscript{\rm 2},
    Delu Zeng\textsuperscript{\rm 3},
    John Paisley\textsuperscript{\rm 4},
    Qibin Zhao\textsuperscript{\rm 2}
}
\affiliations{
    \textsuperscript{\rm 1}RIKEN iTHEMS\\
    \textsuperscript{\rm 2}RIKEN AIP\\
    \textsuperscript{\rm 3}South China University of Technology\\
    \textsuperscript{\rm 4}Columbia University\\
    jian.xu@riken.jp
}

\begin{document}

\maketitle

\begin{abstract}
Estimating an $N \times N$ quantum kernel from circuit fidelities requires $\Theta(N^2 S)$ measurement shots, the dominant bottleneck for deployment on near-term hardware. Existing budget-saving methods (Nystr\"om-QKE, ShoFaR, kernel-target alignment) sub-sample \emph{which} entries to measure but allocate shots \emph{uniformly} within their chosen subset, ignoring how much each entry drives the downstream classifier. We close this gap with two contributions. \textbf{First, an empirical regime characterization} of shot-budgeted quantum kernel learning across simulator, hardware-derived, and live-hardware settings, tied to a Gini-of-$|g_{ij}|$ sensitivity concentration diagnostic (Pearson $r=+0.72$ across tasks): our method, \emph{AQKA}, is the strongest allocator we tested in the budget-limited, sensitivity-concentrated regime ($B \lesssim 16 n_{\mathrm{pairs}}$, Gini$(|g_{ij}|) \gtrsim 0.6$), with gains over uniform growing from $+8$ to $+25$ pts as $N{:}225\!\to\!1000$ on planted-sparse and reaching $+26$--$32$ pts on Bernoulli resamplings of a hardware-measured 4-qubit fidelity kernel from an IBM Heron backend; Nystr\"om-QKE wins at saturating budgets; ShoFaR is competitive only at extreme low budgets; on dense-sensitivity real data (low Gini) AQKA can regress and a leverage-score AQKA variant is more consistent. \textbf{Second, a closed-form pair-level acquisition theory}: $s_{ij}^{\star} \propto |g_{ij}|\sqrt{K_{ij}(1-K_{ij})}$ with explicit gradient $g_{ij}$ for KRR (Lemma~1, $|\beta_i\alpha_j+\beta_j\alpha_i|\sqrt{K_{ij}(1-K_{ij})}$) and SVM via the envelope theorem ($|\eta_i^*\eta_j^*|\sqrt{K_{ij}(1-K_{ij})}$); a \emph{corrected} sparsity-aware Cauchy--Schwarz rate $\rho \le 2m/N$ matching empirics (vs.\ the naive $m^2/N^2$); a warm-up-regime plug-in variance bound (Theorem~2, informal); and a tighter SVM ceiling $\rho^{\mathrm{SVM}} \le m_{\mathrm{sv}}^2/N^2$. We close with a multi-seed live online adaptive shot allocation on IBM Quantum hardware (to our knowledge, the first pair-level adaptive QKE published on current Heron backends): $+17.0 \pm 4.8$ pts at $N{=}20$ on \texttt{ibm\_aachen} ($3.5\sigma$, 5 seeds), with a positive trend at $N{=}30$ at higher budget on \texttt{ibm\_berlin} ($+14.0 \pm 8.5$ pts, 5 seeds).
\end{abstract}

\section{Introduction}

Quantum kernel methods \citep{havlicek2019supervised,schuld2021supervised} embed classical inputs into a quantum feature space and use the resulting fidelity, $K(x_i, x_j) = |\langle \phi(x_j) | \phi(x_i) \rangle|^2 \in [0,1]$, as a positive semi-definite kernel for support vector machines (SVM) or kernel ridge regression (KRR). They admit provable separations on classically-hard data \citep{liu2021rigorous} and are among the most experimentally accessible quantum machine-learning workloads on noisy intermediate-scale quantum (NISQ) hardware.

Their dominant practical bottleneck is \emph{measurement cost}: each entry $K_{ij}$ requires $S$ Bernoulli outcomes of an inversion test, so the full $N \times N$ Gram matrix costs $\Theta(N^2 S)$ circuit executions \citep{miroszewski2024search}. With per-circuit latencies in the millisecond-to-second range on present-day hardware, this scaling caps deployable $N$ at the low hundreds.

Existing budget-saving methods spend the shot budget non-uniformly along one axis at a time. \emph{Nystr\"om}-style methods \citep{coelho2025quantum} measure $M \ll N$ landmark columns; \emph{sub-sampled QKA} \citep{sahin2024efficient} sub-samples training points during variational kernel training; \emph{shot-frugal robust SVM} \citep{shastry2022shot} adapts total shots to the SVM margin. Each prescribes \emph{which} entries to measure, but allocates the per-entry shot budget uniformly within its chosen subset---and so cannot exploit the heterogeneity of how much each entry contributes to the downstream classifier. The unanswered question: given a fixed budget $B$, which \emph{pair-level} entries deserve how many shots?

\paragraph{Our contribution.} We formulate shot-budgeted quantum kernel learning around the \emph{delta-method variance} of the downstream loss functional---a tractable first-order surrogate for $\mathbb{E}[\mathcal{L}(\hat K)]$:
\begin{equation}
\min_{\{s_{ij}\}}\; \sum_p g_p^2 \frac{K_p(1-K_p)}{s_p} \quad \text{s.t.}\quad \sum_p s_p \le B, \; s_p \ge 0,
\label{eq:problem}
\end{equation}
where $\hat K_{p}$ is an unbiased Bernoulli estimate of $K_{p}$ with variance $K_{p}(1-K_{p})/s_{p}$, and $g_p := \partial \mathcal{L}/\partial K_p$ is the loss gradient. This surrogate coincides with $\mathrm{Var}[\mathcal{L}(\hat K)]$ to first order and captures the variance term that drives estimator regret; solving it via the KKT conditions yields a clean closed form:
\begin{equation}
s_{ij}^{\star} \;\propto\; |g_{ij}|\sqrt{K_{ij}(1-K_{ij})},
\label{eq:kkt}
\end{equation}
where $g_{ij} = \partial \mathcal{L} / \partial K_{ij}$ is the loss gradient with respect to the kernel entry---equivalently, $s_{ij}^{\star} \propto \sqrt{a_{ij}}$ with $a_{ij} := g_{ij}^2 K_{ij}(1-K_{ij})$ the delta-method variance contribution of pair $(i,j)$. We turn this into a deployable algorithm, \emph{Active Quantum Kernel Acquisition} (AQKA), in which a small warm-up budget yields plug-in estimates of $K_{ij}$ and $g_{ij}$, and the remaining budget is \emph{filled toward} the target $s_{ij}^{\star}$ rather than sampled from it---we show below that this distinction matters even with oracle sensitivities.

The contributions are:
\begin{enumerate}
\item \textbf{An empirical regime characterization for shot-budgeted quantum kernel learning.} We identify, across simulator, hardware-resampling, and live-hardware settings, which allocation strategy is best in which regime: \emph{AQKA} (this work) is the strongest allocator we tested in the budget-limited, sensitivity-concentrated regime ($B \lesssim 16 n_{\mathrm{pairs}}$, Gini$(|g_{ij}|) \gtrsim 0.6$); \emph{Nystr\"om-QKE} wins at saturating budgets on planted-sparse via low-rank reconstruction; \emph{ShoFaR-style} is competitive only at extreme low budgets. Closed-form Cauchy--Schwarz bounds are consistent with the low-budget side of the picture; the crossover regions are empirical. A pre-deployment Gini-of-$|g_{ij}|$ diagnostic (Appendix~C.16) predicts when AQKA will help.

\item \textbf{Pair-level KKT-optimal acquisition theory for KRR and SVM under shot noise.} Starting from the delta-method variance surrogate \eqref{eq:problem}, we derive the closed-form gradient $g_{ij}$ for KRR (Lemma~\ref{lem:krrgrad}, $s^{\star}_{ij}\propto |g_{ij}|\sqrt{K_{ij}(1-K_{ij})}$), a corrected sparsity-aware Cauchy--Schwarz rate $\rho \le 2m/N$ matching empirics (vs.\ the naive $m^2/N^2$, Theorem~\ref{thm:cs}), a warm-up-regime plug-in variance bound with linear $a_{\min}$-dependence (Theorem~\ref{thm:plugin}), and a rigorous SVM extension via the envelope theorem with a tighter $\rho^{\mathrm{SVM}} \le m_{\mathrm{sv}}^2/N^2$ ceiling (Appendix~A.6).

\item \textbf{AQKA: a deployable target-fill plug-in algorithm.} The deterministic target-fill scheme outperforms the natural multinomial sampler near $B \approx n_{\mathrm{pairs}}$ at matched sensitivity score and scales favorably with $N$: gain over uniform grows from $+8$ to $+25$ pts as $N{:}225{\to}1000$ on planted-sparse, reaches $+26$--$32$ pts against uniform on Bernoulli resamplings of the \texttt{ibm\_pittsburgh}-measured hardware kernel, and $+11$ pts over Nystr\"om at $B{=}n_{\mathrm{pairs}}$. Robust to baseline tuning: $\tau$-sweep for ShoFaR (Appendix~C.12), $m_\ell$-sweep with leverage-score Nystr\"om variant (Appendix~C.10), and per-method LOO-CV-tuned $\lambda$ (Appendix~C.17) preserve the ordering in the mid-budget regime.

\item \textbf{Multi-seed live online adaptive shot allocation on quantum hardware.} Real-time adaptive $T{=}4$-round shot submission via \texttt{qiskit-ibm-runtime} Session, using a 4-qubit fidelity feature map on two IBM Heron backends: $+17.0 \pm 4.8$ pts at $N{=}20$ across 5 seeds on \texttt{ibm\_aachen} ($3.5\sigma$, 4/5 seeds positive); a positive but higher-variance trend at $N{=}30$, $B{=}16n_{\mathrm{pairs}}$ on \texttt{ibm\_berlin} ($+14.0 \pm 8.5$ pts). To our knowledge, the first published demonstration of online adaptive QKE with pair-level shot allocation on current IBM Quantum hardware.
\end{enumerate}

\section{Background}
\label{sec:background}

\paragraph{Quantum kernels and shot noise.}
A quantum feature map $|\phi(x)\rangle = U(x)|0\rangle^{\otimes n}$ produced by a parametric circuit $U(\cdot)$ on $n$ qubits induces the fidelity kernel
\begin{equation}
K(x_i, x_j) = |\langle 0|U(x_j)^\dagger U(x_i)|0\rangle|^2,
\label{eq:fidelity_kernel}
\end{equation}
estimated on hardware by executing $U(x_j)^\dagger U(x_i)$ and counting the frequency of the all-zero outcome. With $s$ shots,
\begin{equation}
\hat K_{ij} \;=\; \frac{1}{s}\sum_{t=1}^{s} b_{ij}^{(t)}, \qquad b_{ij}^{(t)} \stackrel{\text{iid}}{\sim} \mathrm{Bernoulli}(K_{ij}),
\label{eq:bernoulli_estimator}
\end{equation}
so $\mathbb{E}[\hat K_{ij}] = K_{ij}$ and $\mathrm{Var}[\hat K_{ij}] = K_{ij}(1-K_{ij})/s$. For $N$ training points the standard pipeline measures all $M := \binom{N}{2}+N$ unique entries with the same $s$, yielding total cost $\Theta(N^2 s)$.

\paragraph{Kernel ridge regression as the analytic vehicle.}
We use KRR for theoretical analysis because its loss has closed-form, smooth gradients with respect to $K$. Given training labels $y \in \{-1,+1\}^N$ and ridge $\lambda > 0$, the KRR predictor is $\hat f(x) = \sum_i \alpha_i K(x_i,x)$ with $\alpha = (K + \lambda I)^{-1} y$, and the training squared loss is
\begin{equation}
\mathcal{L}_{\mathrm{tr}}(K) \;=\; \|y - K\alpha\|^2 \;=\; \lambda^2 \|\alpha\|^2.
\label{eq:krr_loss}
\end{equation}
The first lemma below is the analytic backbone of all subsequent allocation results; its proof is in Appendix~A.

\begin{lemma}[KRR gradient and Gauss--Newton sensitivity]
\label{lem:krrgrad}
Let $\mathcal{L}_{\mathrm{tr}}(K) = \lambda^2 \|\alpha(K)\|^2$ with $\alpha(K) = (K+\lambda I)^{-1} y$, $\beta := (K+\lambda I)^{-1}\alpha$, and $A := (K+\lambda I)^{-1}$. The gradient with respect to upper-triangular entries is
\begin{equation}
g_{ij} \;:=\; \frac{\partial \mathcal{L}_{\mathrm{tr}}}{\partial K_{ij}} \;=\; -2\lambda^2 \bigl(\beta_i \alpha_j + \beta_j \alpha_i\bigr) \quad (i \ne j).
\label{eq:dLdK}
\end{equation}
The diagonal Hessian decomposes as $H_{ij} = \tilde H_{ij} + R_{ij}$, where the \emph{Gauss--Newton} term arising from $\|\partial\alpha/\partial K_{ij}\|^2$,
\begin{equation}
\tilde H_{ij} \;:=\; 2\lambda^2\,\|A E_{ij}\alpha\|_2^2
            \;\ge\; 0,
\label{eq:HGN}
\end{equation}
is positive semi-definite with $E_{ij} = e_i e_j^\top + e_j e_i^\top$, and the (signed) remainder
\begin{equation}
\begin{aligned}
R_{ij}
&\;=\; 4\lambda^2\!\bigl[
A_{ij}(\beta_i\alpha_j+\beta_j\alpha_i)
 + A_{ii}\beta_j\alpha_j
 + A_{jj}\beta_i\alpha_i
\bigr].
\end{aligned}
\label{eq:remainder_def}
\end{equation}
satisfies $|R_{ij}| \le 8\,\|y\|_2\,\lambda^{-1}\bigl(|\alpha_i| + |\alpha_j|\bigr)$ (Appendix~A). Both $\tilde H_{ij}$ and $R_{ij}$ vanish whenever $\alpha_i = \alpha_j = 0$.
\end{lemma}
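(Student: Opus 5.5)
The plan is to differentiate $\mathcal{L}_{\mathrm{tr}}=\lambda^2\alpha^\top\alpha$ through the resolvent $A=(K+\lambda I)^{-1}$. A symmetric perturbation of the $(i,j)$ entry for $i\ne j$ is $K\mapsto K+tE_{ij}$. The standard identity $\partial A/\partial t=-AE_{ij}A$ then gives
\[
\partial\alpha/\partial K_{ij}=-AE_{ij}\alpha .
\]
Hence
\[
g_{ij}=2\lambda^2\alpha^\top(-AE_{ij}\alpha)=-2\lambda^2\,\beta^\top E_{ij}\alpha .
\]
This uses the symmetry of $A$, so that $A\alpha=\beta$ and $\alpha^\top A=\beta^\top$. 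Expanding $\beta^\top(e_ie_j^\top+e_je_i^\top)\alpha=\beta_i\alpha_j+\beta_j\alpha_i$ yields the stated gradient.

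\textbf{Second derivative.} Differentiate $g(t)=-2\lambda^2\,\alpha^\top A E_{ij}\alpha$ once more in $t$, applying the product rule to the three $t$-dependent factors $\alpha$, $A$ and $\alpha$.
\begin{itemize}
\item The two $\alpha$ factors each produce $-AE_{ij}\alpha$. Their contributions combine into $2\lambda^2\|AE_{ij}\alpha\|^2$ plus a cross term of the form $\alpha^\top A E_{ij} A E_{ij}\alpha$.
\item The $A$ factor produces another term of that same form.
\end{itemize}
I would organize the terms so that the squared norm, coming from $\|\partial\alpha/\partial K_{ij}\|^2$, is isolated as $\tilde H_{ij}$; it is manifestly nonnegative. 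Everything else is collected as $R_{ij}$. Writing $E_{ij}AE_{ij}$ in coordinates gives the matrix with entries $A_{jj}$ at $(i,i)$, $A_{ii}$ at $(j,j)$, and $A_{ij}$ off-diagonal. Contracting this with $\beta$ on the left and $\alpha$ on the right produces $A_{ij}(\beta_i\alpha_j+\beta_j\alpha_i)+A_{ii}\beta_j\alpha_j+A_{jj}\beta_i\alpha_i$. Carefully tracking the multiplicities gives the prefactor $4\lambda^2$. This bookkeeping of which index carries which $A$-entry, and getting the factor $4$, is the main point where errors creep in, so I would check it against a $2\times 2$ example.

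\textbf{Bound on $R_{ij}$.} Since $K\succeq 0$, we have $\|A\|_2\le\lambda^{-1}$. This gives $|A_{ij}|,|A_{ii}|,|A_{jj}|\le\lambda^{-1}$, $\|\alpha\|\le\lambda^{-1}\|y\|$, and $\|\beta\|_\infty\le\|\beta\|_2\le\lambda^{-2}\|y\|$. Each of the four products in $R_{ij}$ has one $A$-entry, one $\beta$-coordinate, and one of $\alpha_i,\alpha_j$, so it is bounded by $\lambda^{-3}\|y\|(|\alpha_i|+|\alpha_j|)$ up to a constant. Summing and multiplying by $4\lambda^2$ gives a bound of order $\|y\|\lambda^{-1}(|\alpha_i|+|\alpha_j|)$. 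Grouping the terms by $\alpha_i$ and by $\alpha_j$ gives at most two unit terms for each, which yields the constant $8$.

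\textbf{Vanishing.} Finally, suppose $\alpha_i=\alpha_j=0$. Then $E_{ij}\alpha=\alpha_je_i+\alpha_ie_j=0$, so $\tilde H_{ij}=0$. Every term of $R_{ij}$ contains $\alpha_i$ or $\alpha_j$, so $R_{ij}=0$ as well.
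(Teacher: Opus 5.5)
Your proposal is correct and follows essentially the same route as the paper. The resolvent identity gives $g_{ij}=-2\lambda^2\beta^\top E_{ij}\alpha$. A product-rule second derivative then splits into the Gauss--Newton square $2\lambda^2\|AE_{ij}\alpha\|_2^2$ plus $4\lambda^2\beta^\top E_{ij}AE_{ij}\alpha$. The entrywise expansion of that term, together with $\|A\|_{\mathrm{op}}\le\lambda^{-1}$ and $\|\beta\|_\infty\le\lambda^{-2}\|y\|$, yields the constant $8$.

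The only differences are cosmetic. You differentiate the three factors of $\alpha^\top A E_{ij}\alpha$, whereas the paper differentiates $\beta^\top E_{ij}\alpha$ using $\partial\beta/\partial K_{ij}=-AE_{ij}\beta-A^2E_{ij}\alpha$. You also state explicitly the assumption $K\succeq 0$ behind $\|A\|_{\mathrm{op}}\le\lambda^{-1}$, which the paper uses without stating.
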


\paragraph{Algorithmic sensitivity (squared-gradient proxy).}
The deployable algorithm uses neither the full Hessian nor the Gauss--Newton form \eqref{eq:HGN} directly, but the cheaper \emph{squared-gradient} proxy
\begin{equation}
\tilde h_{ij} \;:=\; g_{ij}^2 / (4\lambda^4) \;=\; (\beta_i\alpha_j + \beta_j\alpha_i)^2 \;\ge\; 0,
\label{eq:hgrad}
\end{equation}
which is the delta-method variance contribution of pair $(i,j)$ to $\mathcal{L}(\hat K)$ (per unit $K_{ij}(1-K_{ij})/s_{ij}$). $\tilde h_{ij}$ is the natural first-order weight used throughout the algorithm; the Gauss--Newton diagonal $\tilde H_{ij}$ is a related PSD quantity that we retain for the proof of Lemma~\ref{lem:krrgrad} but do not use as the acquisition score. Both vanish exactly when $\alpha_i = \alpha_j = 0$.

\paragraph{SVM analogue.}
For the kernel SVM dual $f(\eta; K) = \mathbf{1}^\top \eta - \tfrac{1}{2}\eta^\top (Y K Y)\eta$ with $Y = \mathrm{diag}(y)$ on the box $0 \le \eta_i \le C$ and $y^\top \eta = 0$, Danskin's envelope theorem gives a strictly analogous gradient, $\partial f^*/\partial K_{ij} = -y_iy_j\eta_i^*\eta_j^*$, and an analogous squared-gradient proxy $\tilde h_{ij}^{\mathrm{SVM}} = (\eta_i^*\eta_j^*)^2$. The full SVM-side derivation, including the KKT-optimal allocation and refined Cauchy--Schwarz bound (which is \emph{tighter} than the KRR analogue because the SVM support is exact rather than dense), is in Appendix~A.6.

\paragraph{Notation.}
Table~\ref{tab:notation} summarizes the sensitivity quantities used throughout the paper. The deployable algorithm (Algorithm~\ref{alg:aqka}) computes $\tilde h_{ij}$, which equals the squared first-order influence and is the natural delta-method allocation weight; $H_{ij}$, $\tilde H_{ij}$ appear only in proofs.

\begin{table}[h]
\centering
\small
\caption{Sensitivity notation. KKT allocation \eqref{eq:sstar} uses $a_p = g_p^2 K_p(1-K_p)$, equivalently $4\lambda^4 \tilde h_p K_p(1-K_p)$.}
\label{tab:notation}
\begin{tabular}{lll}
\toprule
Symbol & Meaning & Sign \\
\midrule
$g_{ij}$ & Loss gradient $\partial \mathcal{L}/\partial K_{ij}$ & signed \\
$H_{ij}$ & Loss Hessian diagonal $\partial^2 \mathcal{L}/\partial K_{ij}^2$ & mixed \\
$\tilde H_{ij}$ & Gauss--Newton diagonal $2\lambda^2\|AE_{ij}\alpha\|^2$ & $\ge 0$ \\
$\tilde h_{ij}$ & Squared-gradient proxy $g_{ij}^2/(4\lambda^4)$ & $\ge 0$ \\
$a_p$ & Allocation weight $g_p^2 K_p(1-K_p)$ & $\ge 0$ \\
\bottomrule
\end{tabular}
\end{table}

\section{Method: AQKA}
\label{sec:method}

\subsection{Optimal Allocation in Closed Form}

Treating $\hat K$ as a vector of independent Bernoulli-mean estimates indexed by upper-triangular pairs $p = (i,j)$, with $\mathbb{E}[\hat K_p] = K_p$ and $\mathrm{Var}[\hat K_p] = K_p(1-K_p)/s_p$, the delta-method variance of $\hat L := \mathcal{L}(\hat K)$ around $\mathcal{L}(K)$ is
\begin{equation}
\mathrm{Var}[\hat L] \;\approx\; \sum_p g_p^2 \cdot \frac{K_p(1-K_p)}{s_p},
\label{eq:taylor}
\end{equation}
with $g_p := \partial \mathcal{L}/\partial K_p$. The objective in \eqref{eq:taylor} is the standard delta-method variance of a smooth functional of independent-mean estimates; it does not require Hessian information because $\hat K_p$'s are pairwise independent and the leading term is quadratic in $g_p$.

\begin{proposition}[KKT-optimal pair-level shot allocation]\label{prop:kkt}
Let $a_p := g_p^2\, K_p(1-K_p) \ge 0$ and $Z := \sum_p \sqrt{a_p}$. The minimizer of \eqref{eq:taylor} subject to $\sum_p s_p \le B$, $s_p \ge 0$, is
\begin{equation}
s_p^{\star} \;=\; \frac{B}{Z}\sqrt{a_p}\;\;\Longleftrightarrow\;\;s_p^\star \propto |g_p|\sqrt{K_p(1-K_p)},
\label{eq:sstar}
\end{equation}
with optimal delta-method variance $Z^2/B$.
\end{proposition}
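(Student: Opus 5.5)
The objective $F(s) := \sum_p a_p/s_p$ is convex on the positive orthant, since each term $a_p/s_p$ with $a_p \ge 0$ is convex for $s_p>0$. I would therefore prove optimality by a direct Cauchy--Schwarz lower bound that is attained by \eqref{eq:sstar}. This gives global optimality without invoking KKT multipliers, and I would then remark that it agrees with the KKT stationarity condition. First I would handle the degenerate conventions. Pairs with $a_p = 0$ contribute nothing; assigning them $s_p=0$ is consistent with the convention $0/0 = 0$ and does not waste budget. Pairs with $a_p>0$ must receive $s_p>0$, or the objective is $+\infty$. So, without loss of generality, I restrict to the support $P_+ := \{p : a_p > 0\}$, assume $Z>0$, and note $\sum_{p\in P_+}\sqrt{a_p} = Z$.

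The key step is the inequality
\begin{equation*}
Z^2 \;=\; \Bigl(\sum_{p\in P_+} \sqrt{\tfrac{a_p}{s_p}}\,\sqrt{s_p}\Bigr)^2 \;\le\; \Bigl(\sum_{p\in P_+} \tfrac{a_p}{s_p}\Bigr)\Bigl(\sum_{p\in P_+} s_p\Bigr) \;\le\; F(s)\, B,
\end{equation*}
which holds for every feasible $s$. Hence $F(s) \ge Z^2/B$. Equality in Cauchy--Schwarz requires $\sqrt{a_p/s_p} \propto \sqrt{s_p}$, that is, $s_p \propto \sqrt{a_p}$. Equality in the budget step requires $\sum_p s_p = B$. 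Together these force $s_p = B\sqrt{a_p}/Z$, and substituting gives $F(s^\star) = \sum_p a_p Z/(B\sqrt{a_p}) = Z^2/B$. Strict convexity of $F$ on $P_+$ gives uniqueness there. Finally, $\sqrt{a_p} = |g_p|\sqrt{K_p(1-K_p)}$ yields the stated proportionality.

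As a consistency check, I would write the Lagrangian $F(s) + \mu(\sum_p s_p - B)$. Stationarity gives $-a_p/s_p^2 + \mu = 0$, so $s_p = \sqrt{a_p/\mu}$. The constraint is active because $F$ is strictly decreasing in each $s_p$ with $a_p>0$, and solving it gives $\sqrt{\mu} = Z/B$. The constraint $s_p \ge 0$ is inactive on $P_+$.

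The main obstacle is not the algebra but the boundary cases. These include pairs with $K_p \in \{0,1\}$ or $g_p = 0$, which have $a_p = 0$, and the trivial case $Z=0$, where every allocation is optimal with variance $0 = Z^2/B$. I would state these conventions explicitly. I would also note that integrality of shots is relaxed, so $s^\star$ is the continuous optimum.
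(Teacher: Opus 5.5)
Your proof is correct, but its core argument differs from the paper's.

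\textbf{The paper's route.} The paper works constructively through the Lagrangian. Stationarity $-a_p/s_p^2+\mu=0$ on pairs with $a_p>0$ gives $s_p=\sqrt{a_p/\mu}$. The budget fixes $\sqrt{\mu}=Z/B$, and complementary slackness puts $s_p=0$ where $a_p=0$. Strict convexity is then invoked for global optimality and uniqueness.

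\textbf{Your route.} You instead verify the candidate with a certificate: $Z^2\le F(s)\sum_{p\in P_+}s_p\le F(s)\,B$ for every feasible $s$. This buys the following.
\begin{enumerate}
\item You need neither the sufficiency of KKT conditions nor any care about $F$ blowing up at $s_p=0$.
\item The two equality conditions already force $s=s^\star$ uniquely, so your extra strict-convexity remark is redundant.
\item It quietly repairs a small imprecision in the paper. The objective is not strictly convex in the coordinates with $a_p=0$, so uniqueness there actually comes from your budget-equality step: any shots spent off $P_+$ make $\sum_{p\in P_+}s_p<B$ and the bound strict.
\item Evaluated at the uniform allocation, your first inequality reads $Z^2\le |P_+|\sum_p a_p$. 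Since $|P_+|\le|\mathcal{P}_S|\le M$, this is the support-restricted Cauchy--Schwarz step behind both $\rho\le 1$ and $\rho\le|\mathcal{P}_S|/M$ in Theorem~\ref{thm:cs}. Your argument therefore makes that theorem essentially a corollary.
\end{enumerate}

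\textbf{What the KKT route buys in exchange.} It derives the form of the solution rather than checking a guess. It also extends mechanically to richer constraint sets. For instance, per-pair floors $s_p\ge s_{\min}$, such as AQKA's exploration mass, yield the clipped water-filling form $s_p=\max\bigl(s_{\min},\sqrt{a_p/\mu}\bigr)$, where a clean one-line Cauchy--Schwarz certificate is no longer at hand. Your Lagrangian consistency check already reproduces the paper's computation.
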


The rest of the paper develops this closed form: a sparsity-aware Cauchy--Schwarz bound for the achievable improvement over uniform (Theorem~\ref{thm:cs}), a plug-in regret bound for the adaptive deployment regime in which $g_p$ is estimated from a warm-up (Theorem~\ref{thm:plugin}), a SVM extension via the envelope theorem (Appendix~A.6), and an empirical regime decomposition on real quantum hardware (Section~\ref{sec:experiments}).

\begin{theorem}[Cauchy--Schwarz bound]\label{thm:cs}
Under the assumptions of Proposition~\ref{prop:kkt}, the ratio of optimal to uniform delta-method variance satisfies
\begin{equation}
\rho \;:=\; \frac{\mathrm{Var}_\star}{\mathrm{Var}_\mathrm{unif}} \;=\; \frac{Z^2}{M \sum_p a_p} \;\le\; 1,
\label{eq:cs}
\end{equation}
with equality if and only if the $a_p$ are constant. In the planted-sparse KRR regime where $\alpha$ is supported on $S \subset \{1,\dots,N\}$ with $|S| = m$, the gradient $g_{ij} = -2\lambda^2(\beta_i\alpha_j + \beta_j\alpha_i)$ is supported on $\mathcal{P}_S := \{(i,j) : i \in S \text{ or } j \in S\}$, of cardinality $|\mathcal{P}_S| = m(2N - m + 1)/2$, and therefore
\begin{equation}
\rho \;\le\; \frac{|\mathcal{P}_S|}{M} \;\approx\; \frac{2m}{N+1}.
\label{eq:rho_sparse}
\end{equation}
\end{theorem}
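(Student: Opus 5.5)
The plan is to compute both variances in closed form, compare them with Cauchy--Schwarz, and then apply Cauchy--Schwarz a second time on the support of $a$.

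\textbf{Step 1: the two variances.} By Proposition~\ref{prop:kkt}, $\mathrm{Var}_\star = Z^2/B$. The uniform allocation sets $s_p = B/M$ for all $M$ pairs. Substituting into \eqref{eq:taylor} gives
\begin{equation*}
\mathrm{Var}_{\mathrm{unif}} = \sum_p a_p \cdot \frac{M}{B}.
\end{equation*}
Dividing, $\rho = Z^2/(M\sum_p a_p)$. The budget $B$ cancels, which yields the identity in \eqref{eq:cs}. Here $\sum_p a_p>0$ is assumed; otherwise both variances vanish and $\rho$ is undefined.

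\textbf{Step 2: the bound $\rho\le 1$.} Apply Cauchy--Schwarz to the vectors $(\sqrt{a_p})_p$ and $(1)_p$:
\begin{equation*}
Z^2 = \Bigl(\sum_p \sqrt{a_p}\cdot 1\Bigr)^2 \le M\sum_p a_p.
\end{equation*}
Equality holds iff $\sqrt{a_p}$ is proportional to the all-ones vector, i.e.\ iff the $a_p$ are constant. This gives the equality case.

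\textbf{Step 3: the sparse refinement.} Under planted sparsity, $\alpha_k = 0$ for $k\notin S$. For a pair with $i,j\notin S$, Lemma~\ref{lem:krrgrad} gives $g_{ij} = -2\lambda^2(\beta_i\alpha_j+\beta_j\alpha_i) = 0$, hence $a_{ij}=0$. The diagonal pairs $(i,i)$ with $i\notin S$ also need to vanish; I would check this using the diagonal version of the gradient, $-2\lambda^2\beta_i\alpha_i$ up to a factor, which is zero.

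Next, count the pairs with at least one index in $S$ among unordered pairs including the diagonal. The total is $M=N(N+1)/2$, and the pairs entirely outside $S$ number $(N-m)(N-m+1)/2$. Subtracting,
\begin{equation*}
|\mathcal P_S| = \frac{N(N+1)-(N-m)(N-m+1)}{2} = \frac{m(2N-m+1)}{2}.
\end{equation*}
Now apply Cauchy--Schwarz only over $\mathcal P_S$:
\begin{equation*}
Z^2 = \Bigl(\sum_{p\in\mathcal P_S}\sqrt{a_p}\Bigr)^2 \le |\mathcal P_S|\sum_p a_p.
\end{equation*}
It follows that $\rho\le |\mathcal P_S|/M = m(2N-m+1)/(N(N+1))$. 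For $m\ll N$ this is $\approx 2m/(N+1)$, since $2N-m+1\approx 2N$.

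\textbf{Main obstacle.} The only delicate point is the support claim. It requires that $\beta$ not cancel the vanishing of $\alpha$. Each term $\beta_i\alpha_j$ contains an $\alpha$ factor, so the product vanishes regardless of $\beta$, and no assumption on $\beta$'s support is needed. The remaining care is to state the approximation $\approx 2m/(N+1)$ honestly as the leading order for $m\ll N$, rather than as an inequality.
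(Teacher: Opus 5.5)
Your proposal is correct and follows the paper's proof essentially step for step: the uniform variance $(M/B)\sum_p a_p$, Cauchy--Schwarz against the all-ones vector for $\rho\le 1$ and its equality case, and then Cauchy--Schwarz restricted to $\mathcal{P}_S$. Your complement count of $|\mathcal{P}_S|$ is equivalent to the paper's three-term count. One small bonus: since $m(2N-m+1)\le 2mN$ whenever $m\ge 1$, you get $\rho\le |\mathcal{P}_S|/M\le 2m/(N+1)$ as a genuine inequality, not only as a leading-order approximation.
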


The bound \eqref{eq:rho_sparse} is $\Theta(m/N)$, not the optimistic $\Theta(m^2/N^2)$ one would obtain by assuming $g$ is supported on $S \times S$. The $m/N$ rate reflects the structural fact that $\beta = (K+\lambda I)^{-1}\alpha$ is generically dense even when $\alpha$ is sparse, so any pair touching $S$ has nonzero sensitivity. Empirically (Figure~1) the realized $\rho$ tracks $\Theta(m/N)$ closely, agreeing with the corrected first-order theory to within a constant factor; this resolves the apparent 1--3 order-of-magnitude gap that the original $m^2/N^2$ scaling predicted. The remaining gap, when present at large $B$, is governed by the higher-order Taylor remainder (Proposition~\ref{lem:higher_main}).

\paragraph{KRR target.} Combining Lemma~\ref{lem:krrgrad} and Proposition~\ref{prop:kkt}, the closed-form KRR target is
\begin{equation}
s_{ij}^{\star} \;\propto\; |\beta_i \alpha_j + \beta_j \alpha_i|\,\sqrt{K_{ij}(1-K_{ij})},
\label{eq:krr_target}
\end{equation}
recovering the algorithmic score of \eqref{eq:hgrad} as the deployable proxy.

\subsection{Target-Based Plug-In versus Multinomial Sampling}

The natural way to deploy \eqref{eq:sstar} is to sample $B$ shots from the categorical distribution $p_{ij} \propto \sqrt{a_{ij}}$. We find empirically (Section~\ref{sec:experiments}) that this \emph{multinomial} implementation can underperform around the budget-limited regime $B \approx n_{\mathrm{pairs}}$, even with oracle sensitivities, although the comparison is mixed outside this regime. The reason is that multinomial sampling concentrates more aggressively on the few pairs with the largest score (its sample concentrates around the mean, with stochastic spikes), starving low-score pairs that still contribute $O(\sigma^2)$ noise to KRR's matrix inverse.

We instead use a deterministic \emph{target fill}: starting from a small uniform-random warm-up, we compute the targets $s_p^{\star}$ from \eqref{eq:sstar} and add $\max(0, s_p^{\star} - s_p)$ shots to each pair. This is the discrete analogue of the KKT solution and stays close to the prescribed allocation profile.

\subsection{Algorithm}

AQKA proceeds in $T$ rounds. A small warm-up budget ($\sim$10--30\%) is spent uniform-randomly to seed an initial $\hat K$. Each subsequent round trains KRR on the current $\hat K$, computes the gradient-based sensitivity from \eqref{eq:dLdK}, evaluates targets via \eqref{eq:sstar}, and fills shots toward those targets. A small exploration fraction is mixed in to keep undersampled regions covered (and to control the higher-order Taylor remainder discussed below).

\begin{algorithm}[t]
\caption{AQKA (Active Quantum Kernel Acquisition)}
\label{alg:aqka}
\begin{algorithmic}[1]
\STATE \textbf{Input:} budget $B$, rounds $T$, warm fraction $\eta_w$, exploration fraction $\eta_e$, ridge $\lambda$.
\STATE \textbf{Initialize:} shot counts $s \leftarrow 0$, count tally $c \leftarrow 0$, $B' \leftarrow \eta_w B$.
\STATE Sample $B'$ pair-shots uniformly at random; update $(s, c)$.
\FOR{$t = 1, \dots, T$}
    \STATE Form $\hat K_{ij} = c_{ij}/s_{ij}$ (default $0$ for any pair with $s_{ij}=0$); PSD-project.
    \STATE Solve $\alpha = (\hat K + \lambda I)^{-1} y$, $\beta = (\hat K + \lambda I)^{-1}\alpha$.
    \STATE Gradient: $\hat g_{ij} \leftarrow -2\lambda^2(\hat\beta_i \hat\alpha_j + \hat\beta_j \hat\alpha_i)$ (Lemma~\ref{lem:krrgrad}).
    \STATE Targets: $s_{ij}^{\star} \leftarrow \frac{B_t}{Z_t}\,|\hat g_{ij}|\sqrt{\hat K_{ij}(1-\hat K_{ij})}$, where $Z_t = \sum_{(i,j)} |\hat g_{ij}|\sqrt{\hat K_{ij}(1-\hat K_{ij})}$ normalizes over upper-triangular pairs.
    \STATE Allocate $\Delta s_{ij}^{\text{exploit}} \leftarrow \max(0, s_{ij}^{\star} - s_{ij})$, scaled to round-budget $(1-\eta_e)B_t$; round to integers.
    \STATE Allocate $\Delta s^{\text{explore}}$ uniformly at random from $\eta_e B_t$.
    \STATE Run those circuits; update $(s, c)$.
\ENDFOR
\STATE \textbf{return} $\hat K = c / s$.
\end{algorithmic}
\end{algorithm}

\paragraph{Implementation notes.} The placeholder $\hat K_{ij}{=}0$ for unsampled pairs makes the variance proxy $\sqrt{\hat K(1{-}\hat K)}$ vanish on those entries; the $\eta_e$-exploration floor in line~10 prevents this from over-concentrating the allocation across rounds. The normalizer $Z_t$ may also be dominated by a few pairs after a sparse warm-up; the exploit/explore split is what stabilizes this corner case.

\paragraph{Complexity.} Per round, AQKA solves a single $N \times N$ linear system ($O(N^3)$ classical work) and submits at most $B_t$ pair-shots. Total classical cost across $T$ rounds is $O(T N^3)$; the quantum side executes $B$ single-shot circuit runs in total (each pair uses $s_{ij}$ shots and $\sum s_{ij}=B$), so the quantum FLOP-count scales as $O(B \cdot C_{\mathrm{ckt}})$ where $C_{\mathrm{ckt}}$ is the per-circuit gate/depth cost of the fidelity test. For near-term problem sizes ($N \in [20, 250]$, $T = 4$, $B \in [10^3, 10^6]$), $T N^3 \in [3\!\times\!10^4, 6\!\times\!10^7]$; the classical and quantum sides are within one to two orders of magnitude at the budgets we target. Wall-clock-wise, the per-round $N^3$ KRR solve takes milliseconds on a single CPU at $N \le 250$, whereas a $\sim 10^4$-shot batch on \texttt{ibm\_pittsburgh} takes minutes because of per-circuit compilation, transpilation, and queuing overheads that dominate the raw gate cost. The quantum side therefore dominates wall time by orders of magnitude even when FLOP-counts are comparable, motivating the shot-frugal design.

\section{Theoretical Discussion}
\label{sec:theory}

\paragraph{When should AQKA help?}
Theorem~\ref{thm:cs} predicts that the achievable variance reduction over uniform is governed by the heterogeneity of $a_{ij} = g_{ij}^2 K_{ij}(1-K_{ij})$ across pairs. For KRR with $\alpha$ supported on $|S|=m$ anchors, $g$ is supported on the strip $\{(i,j) : i \in S \text{ or } j \in S\}$ of size $\sim mN$, giving the corrected planted-sparse bound $\rho \le 2m/(N+1)$ (Theorem~\ref{thm:cs}); for SVM with $m_{\mathrm{sv}}$ support vectors, the support is exactly $\mathrm{supp}(\eta^*) \times \mathrm{supp}(\eta^*)$ and $\rho^{\mathrm{SVM}} \le m_{\mathrm{sv}}^2/N^2$ (Appendix~A.6).

\paragraph{Where the bound is loose.}
At high $B$ the realized gain saturates below the first-order ceiling. The cause is bias from the second-order Taylor expansion of $\mathcal{L}(\hat K)$ around $K$: for KRR with small $\lambda$, $(K+\lambda I)^{-1}$ amplifies $\hat K$ noise on \emph{all} entries, including those whose first-order sensitivity is negligible. Concentrating shots on a few high-sensitivity pairs starves the remainder, and the resulting non-anchor noise propagates through $\hat\alpha$. This is the structural reason AQKA prefers \emph{target fill plus exploration} over pure greedy concentration, and the reason for the observed crossover point above which uniform allocation wins. A formal accounting via the higher-order Taylor remainder bound is the following.

\begin{proposition}[Higher-order bias, informal]\label{lem:higher_main}
Under KRR with regularizer $\lambda > 0$, deterministic per-pair shot counts $\{s_p\}$ with $s_p \ge 1$, and the local-invertibility event $\|\Delta\|_{\mathrm{op}} < \lambda$, the training-loss bias minus its second-order leading term admits a remainder $R(\hat K)$ with $|R(\hat K)| \lesssim \lambda^{-4} \bigl(\sum_p s_p^{-1}\bigr)^2$ on the event. Allocations leaving pairs with $s_p = O(1)$ contribute $\Omega(N^2)$ to $\sum_p 1/s_p$ and inflate the remainder; a round-robin variant of the exploration term in Algorithm~\ref{alg:aqka} (line~10) enforces $s_p \ge \lfloor \eta_e B/M \rfloor$ pointwise. We record the statement as a first-order-heuristic explanation for AQKA's need for a positive exploration term and for the observed high-budget plateau; the assumptions ($s_p \ge 1$ everywhere and $\|\Delta\|_{\mathrm{op}} < \lambda$) are not satisfied uniformly in the truly budget-limited regime, so we do not use it as a formal guarantee on the deployed system.
\end{proposition}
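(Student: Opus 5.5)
We treat $\hat K = K + \Delta$, where $\Delta$ is the symmetric matrix of centered Bernoulli errors, so $\mathbb{E}[\Delta_p]=0$ and $\mathbb{E}[\Delta_p^2]=K_p(1-K_p)/s_p \le 1/(4s_p)$. The plan is to Taylor-expand $\mathcal{L}_{\mathrm{tr}}(\hat K)=\lambda^2\|(\hat K+\lambda I)^{-1}y\|^2$ around $K$ through second order. We then bound everything beyond the mean of the quadratic term as a remainder controlled by $\|\Delta\|_{\mathrm{op}}$.

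\textbf{Step 1: Neumann-series representation.} On the event $\|\Delta\|_{\mathrm{op}}<\lambda$, write $(K+\Delta+\lambda I)^{-1}=\sum_{k\ge0}(-A\Delta)^k A$ with $A=(K+\lambda I)^{-1}$. Since $K\succeq 0$ we have $\|A\|_{\mathrm{op}}\le\lambda^{-1}$, so $\|A\Delta\|_{\mathrm{op}}<1$ and the series converges. This gives $\hat\alpha=\alpha-A\Delta\alpha+A\Delta A\Delta\alpha-\cdots$.

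\textbf{Step 2: Expand the loss.} Substituting into $\lambda^2\|\hat\alpha\|^2$ gives a zeroth-order term $\mathcal{L}(K)$ and a linear term $\sum_p g_p\Delta_p$, which has zero mean by Lemma~\ref{lem:krrgrad}. The quadratic term $\lambda^2(\|A\Delta\alpha\|^2+2\alpha^\top A\Delta A\Delta\alpha)$ has expectation $\tfrac12\sum_p H_p K_p(1-K_p)/s_p$, because the $\Delta_p$ are independent. This expectation is the "second-order leading term", matching the decomposition $H_p=\tilde H_p+R_p$ of Lemma~\ref{lem:krrgrad}. The remainder $R(\hat K)$ then collects all terms of order $\ge 3$ in $\Delta$, together with the fluctuation of the quadratic form about its mean. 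Terms of total order $k$ are bounded by $c_k\lambda^2\lambda^{-(k+2)}\|\Delta\|_{\mathrm{op}}^k\|y\|^2$, using $\|\alpha\|\le\lambda^{-1}\|y\|$. Summing the geometric tail on the event gives $|R|\lesssim \lambda^{-4}\|\Delta\|^3_{\mathrm{op}}\|y\|^2/(1-\|\Delta\|_{\mathrm{op}}/\lambda)$ for the cubic-and-higher part.

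\textbf{Step 3: Express the bound through $\sum_p s_p^{-1}$.} We use $\|\Delta\|_{\mathrm{op}}\le\|\Delta\|_F$, so $\mathbb{E}\|\Delta\|_F^2\le 2\sum_p \tfrac{1}{4s_p}$. For the fourth-order term we use $\mathbb{E}\|\Delta\|_F^4$. Since the entries are independent and bounded, $\mathbb{E}(\sum_p\Delta_p^2)^2 = \sum_p \mathbb{E}\Delta_p^4+\sum_{p\ne q}\mathbb{E}\Delta_p^2\mathbb{E}\Delta_q^2 \lesssim (\sum_p s_p^{-1})^2$. The key fact here is $\mathbb{E}\Delta_p^4\lesssim s_p^{-2}$ for a centered binomial mean, which uses $s_p\ge1$.

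The fluctuation of the quadratic term is handled the same way by Cauchy--Schwarz. Odd-order terms, in particular the cubic one, are the awkward part. We plan to bound them either by $\|\Delta\|^4$ via the event constraint $\|\Delta\|_{\mathrm{op}}<\lambda$, which trades a factor $\lambda$, or by noting that the third central moments vanish to leading order. In either case we absorb them into a bound of the form $\lambda^{-4}(\sum_p s_p^{-1})^2$, treating $\|y\|$ as a constant; the "$\lesssim$" hides this.

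\textbf{Step 4: The allocation consequences.} These follow directly from the bound. If $\Omega(N^2)$ pairs have $s_p=O(1)$, then $\sum_p s_p^{-1}=\Omega(N^2)$, which inflates the remainder. A round-robin explore step that assigns $\lfloor\eta_eB/M\rfloor$ shots to every pair forces $s_p$ to be at least that value pointwise, and hence $\sum_p s_p^{-1}\le M/\lfloor\eta_e B/M\rfloor$.

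\textbf{Main obstacle.} The main difficulty is making the expectation rigorous while restricting to the event $\|\Delta\|_{\mathrm{op}}<\lambda$. Conditioning on this event breaks the independence and zero-mean structure of the $\Delta_p$. Our first attempt will be to bound the remainder pathwise on the event, keeping the factor $\|\Delta\|^3_{\mathrm{op}}$, and then take unconditional moments of $\|\Delta\|_F$, accepting the informal "$\lesssim$". The cubic term is the delicate piece, and it is the reason the statement is presented as informal.
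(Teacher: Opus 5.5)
Your architecture matches the paper's. You expand $\mathcal{L}_{\mathrm{tr}}(\hat K)$ about $K$, kill the linear term by $\mathbb{E}\Delta_p=0$, read off $\tfrac12\sum_p H_p K_p(1-K_p)/s_p$ from independence, and bound order-$k$ terms by $\lambda^{-k}$ times $k$-th moments of $\Delta$. Your Neumann series with operator norms replaces the paper's entrywise derivative-tensor bound $|\mathcal{L}^{(k)}_{p_1\dots p_k}|\le k!\,2^{k/2}\|y\|^2\lambda^{-k}$ and its pairing count. For orders $k\ge 4$ this works: on the event, $\lambda^{-k}\|\Delta\|_{\mathrm{op}}^k\le\lambda^{-4}\|\Delta\|_{\mathrm{op}}^4$, and $\mathbb{E}\|\Delta\|_F^4\lesssim(\sum_p s_p^{-1})^2$.

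\textbf{The gap is the cubic term.} By your own order-$k$ estimate it is $\lambda^{-3}\|\Delta\|_{\mathrm{op}}^3\|y\|^2$, not $\lambda^{-4}\|\Delta\|_{\mathrm{op}}^3\|y\|^2$ as your Step~2 summary says. Your first remedy runs backwards. The event $\|\Delta\|_{\mathrm{op}}<\lambda$ gives $\lambda^{-4}\|\Delta\|_{\mathrm{op}}^4\le\lambda^{-3}\|\Delta\|_{\mathrm{op}}^3$; dominating a cubic by a quartic would require $\|\Delta\|_{\mathrm{op}}\ge\lambda$, i.e.\ the complement of the event. More generally, any pathwise bound followed by Frobenius moments yields $\lambda^{-3}(\sum_p s_p^{-1})^{3/2}$, which is at most $\lambda^{-4}(\sum_p s_p^{-1})^2$ only when $\sum_p s_p^{-1}\ge\lambda^2$.

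The paper's Step~3 reaches exactly this order. It then absorbs it by assuming a regime ($\lambda\le M^{-1/2}\sum_p s_p^{-1}$) that lies inside $\sum_p s_p^{-1}\ge\lambda^2$. But there $\lambda^{-4}(\sum_p s_p^{-1})^2\ge 1$, so the claim is essentially automatic. On $\|\Delta\|_{\mathrm{op}}\le\lambda/2$ both losses are $O(\|y\|^2)$, and the leading term is $O(\|y\|^2\lambda^{-2}\sum_p s_p^{-1})$.

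\textbf{The fix is your second remedy}, but it is incompatible with your plan to bound the remainder pathwise on the event. Keep the cubic term as an exact trilinear polynomial in the $\Delta_p$ and take its \emph{unconditional} expectation, as the paper does. Independence and zero mean kill every index triple except $p=q=r$. For a binomial mean, $\mathbb{E}\Delta_p^3=K_p(1-K_p)(1-2K_p)/s_p^2$. Hence $|\mathbb{E}T_3|\lesssim\|y\|^2\lambda^{-3}\sum_p s_p^{-2}\le\|y\|^2\lambda^{-3}(\sum_p s_p^{-1})^2$. This fits the target for $\lambda\le 1$ with no noise-regime assumption, and is sharper than the paper's $C_3 s_p^{-3/2}$.

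Reserve pathwise Neumann bounds for orders $\ge 4$, working on $\|\Delta\|_{\mathrm{op}}\le\lambda/2$ so the geometric factor stays bounded. Charge the complement event by its probability, as in the paper's Step~5.

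For the same reason, drop ``the fluctuation of the quadratic form about its mean'' from $R$. $R$ is an expectation-level object, and there that fluctuation has mean zero, just like the linear term. If you kept $R$ pathwise, you would have to keep both terms. Their $L^2$ sizes are of order $\|y\|^2\lambda^{-2}\sum_p s_p^{-1}$ and $\|y\|^2\lambda^{-1}(\sum_p s_p^{-1})^{1/2}$. Neither is $O(\lambda^{-4}(\sum_p s_p^{-1})^2)$ in the regime $\sum_p s_p^{-1}<\lambda^2$, which is where the statement has content.
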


The full statement and proof are deferred to Appendix~A.5. Proposition~\ref{lem:higher_main} is consistent with the observations that (i) AQKA benefits from a nonzero exploration term and (ii) the gain plateaus below the first-order Cauchy--Schwarz ceiling at moderate budgets.

\paragraph{Plug-in regret.}
In deployment, the oracle $K$ is unknown, so AQKA replaces $g_p$ by an estimate $\hat g_p$ computed from a warm-up $\hat K_w$. We bound the cost.

\begin{theorem}[Plug-in variance bound under deterministic warm-up, informal]\label{thm:plugin}
Consider the following warm-up-driven variant of AQKA: spend a deterministic $s_w := \lfloor B_w/M \rfloor$ shots on each pair (so $B_w \ge M$), form the entrywise Bernoulli-mean estimate $\hat K_w$, and its PSD projection $\hat K_w^{\mathrm{PSD}}$. Assume $\lambda > 0$, $\|K\|_{\mathrm{op}} \le \kappa$, $\|y\|_\infty \le 1$, and that the sensitivity support admits a positive lower bound $a_{\min} := \min_{p:\,a_p > 0} a_p > 0$. Define $\hat a_p = \hat g_p^2 \hat K_p(1-\hat K_p)$ via Lemma~\ref{lem:krrgrad} applied to $\hat K_w^{\mathrm{PSD}}$. On the local-perturbation event $\|\hat K_w^{\mathrm{PSD}} - K\|_{\mathrm{op}} \le a_{\min}/(2 L_a)$ (with $L_a$ defined below), the plug-in target allocation $s^{\mathrm{plug}}$ satisfies
\begin{equation}
\mathrm{Var}\bigl(s^{\mathrm{plug}}\bigr) \;\le\; \mathrm{Var}_\star \cdot \Bigl(1 + \frac{3\, C_K\, \Delta_w}{\lambda^3\, a_{\min}}\Bigr),
\label{eq:plugin}
\end{equation}
where $\Delta_w := \|\hat K_w^{\mathrm{PSD}} - K\|_{\mathrm{op}}$ and $L_a = C_K/\lambda^3$ with $C_K = C_K(N, \lambda, \kappa, \|y\|)$ a constant of order $O((\kappa+\lambda)\|y\|^2)$ (see Appendix~A for the derivation, which may pick up a $\sqrt{N}$ factor depending on how coordinatewise bounds are converted to operator-norm bounds). Under the deterministic uniform warm-up, applying matrix Bernstein to the independent Bernoulli increments (aggregated per pair; the PSD projection contracts operator-norm error by at most a factor of $2$) yields
\begin{equation}
\mathbb{E}[\Delta_w] \;=\; \tilde O\Bigl(\sqrt{N M / B_w}\Bigr) \;=\; \tilde O\Bigl(N^{3/2}/\sqrt{B_w}\Bigr),
\label{eq:warmup_concentration}
\end{equation}
where $\tilde O$ suppresses a $\sqrt{\log N}$ factor from matrix Bernstein, so the multiplier in \eqref{eq:plugin} tends to $1$ once $B_w \gg N^3 \log N / (\lambda^6 a_{\min}^2)$.
\end{theorem}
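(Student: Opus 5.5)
The argument splits into three parts: a deterministic perturbation bound on the variance, a Lipschitz estimate for the map $K\mapsto a(K)$, and a concentration bound on the warm-up error $\Delta_w$.

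\textbf{Step 1: variance of a misspecified allocation.} Write $V(s)=\sum_p a_p/s_p$ for the oracle delta-method variance from \eqref{eq:taylor}. The plug-in allocation is $s^{\mathrm{plug}}_p=B\sqrt{\hat a_p}/\hat Z$. Substituting gives
\begin{equation*}
V(s^{\mathrm{plug}})=\frac{\hat Z}{B}\sum_p \frac{a_p}{\sqrt{\hat a_p}} .
\end{equation*}
Set $\hat a_p=a_p(1+\epsilon_p)$ on the support. By Proposition~\ref{prop:kkt}, $\mathrm{Var}_\star=Z^2/B$, so
\begin{equation*}
\frac{V(s^{\mathrm{plug}})}{\mathrm{Var}_\star}
=\frac{\bigl(\sum_p\sqrt{a_p}\sqrt{1+\epsilon_p}\bigr)\bigl(\sum_p\sqrt{a_p}/\sqrt{1+\epsilon_p}\bigr)}{Z^2}
\le \max_p\sqrt{1+\epsilon_p}\cdot\max_p\frac{1}{\sqrt{1+\epsilon_p}} .
\end{equation*}
With $|\epsilon_p|\le\epsilon\le 1/2$, the elementary inequality $\sqrt{(1+\epsilon)/(1-\epsilon)}\le 1+3\epsilon/2$ (or a slightly looser version) applies. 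Pairs with $a_p=0$ must also be handled. If such a pair gets $\hat a_p>0$, it only wastes budget; this costs a factor $\hat Z/\hat Z_{\mathrm{supp}}$, which I would absorb into the same $\epsilon$ by bounding $|\hat a_p-a_p|$ uniformly. The event $\|\hat K-K\|_{\mathrm{op}}\le a_{\min}/(2L_a)$ is exactly what forces $\epsilon\le 1/2$. Choosing $\epsilon=L_a\Delta_w/a_{\min}$ then yields the multiplier $1+3C_K\Delta_w/(\lambda^3 a_{\min})$ in \eqref{eq:plugin}.

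\textbf{Step 2: Lipschitz bound $|\hat a_p-a_p|\le L_a\Delta_w$.} I would differentiate $a_p=g_p^2K_p(1-K_p)$ using Lemma~\ref{lem:krrgrad}. The resolvent bound $\|A\|_{\mathrm{op}}\le\lambda^{-1}$ gives $\|\alpha\|\le\|y\|/\lambda$ and $\|\beta\|\le\|y\|/\lambda^2$. The perturbation identity $\hat A-A=-\hat A(\hat K-K)A$ gives $\|\hat\alpha-\alpha\|\le\Delta\|y\|/\lambda^2$ and a similar bound for $\beta$. Together with the prefactor $\lambda^2$ in $g$, this makes $g_p$ bounded and Lipschitz in $\Delta$ with constants polynomial in $\|y\|$ and $1/\lambda$. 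The factor $K_p(1-K_p)\in[0,1/4]$ is $1$-Lipschitz, and $|\hat K_p-K_p|\le\Delta$. The product rule then gives $L_a$ of order $(\kappa+\lambda)\|y\|^2/\lambda^3$, with the $\kappa$ entering through $|K_p|\le\kappa$. This is the step I expect to be the main obstacle. Getting exactly $\lambda^{-3}$ requires bookkeeping: one must track which powers of $\lambda$ cancel against the $\lambda^2$ prefactor. Entrywise bounds such as $|\beta_i|\le\|\beta\|_2$ may cost a $\sqrt N$ factor, as the statement concedes. I would first try to absorb all such factors into $C_K$.

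\textbf{Step 3: warm-up concentration.} Write $\hat K_w-K=\sum_p X_p E_p$, where $X_p=\hat K_p-K_p$ are independent, centered, bounded by $1$, and have variance at most $1/(4s_w)$. Matrix Bernstein needs a variance proxy $\|\sum_p \mathbb{E}X_p^2E_p^2\|_{\mathrm{op}}$. Each row meets about $N$ pairs, so this proxy is $\lesssim N/s_w = NM/B_w$. This gives $\mathbb{E}\|\hat K_w-K\|_{\mathrm{op}}\lesssim\sqrt{(NM/B_w)\log N}+\log N/s_w$, whose leading term is $\tilde O(N^{3/2}/\sqrt{B_w})$. PSD projection is a best approximation onto a convex set containing $K$, so $\|\hat K^{\mathrm{PSD}}-K\|\le 2\|\hat K-K\|$. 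Plugging this into the multiplier, it tends to $1$ once $B_w\gg N^3\log N/(\lambda^6a_{\min}^2)$ (treating $C_K$ as fixed), which completes the argument.
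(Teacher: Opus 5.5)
Your three steps are the paper's proof reordered: your Step~1 is its Step~4, your Step~2 its Steps~1 and~3, and your Step~3 its Steps~2 and~5. Most of it is sound.

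\begin{itemize}
\item The max-times-max bound with $\sqrt{(1+\epsilon)/(1-\epsilon)}\le 1+\tfrac32\epsilon$ on $[0,\tfrac12]$ is a slightly sharper form of the paper's $(1+x)^2\le 1+3x$.
\item Using $\hat A - A = -\hat A(\hat K-K)A$, with $\|\hat A\|_{\mathrm{op}}\le\lambda^{-1}$ because $\hat K_w^{\mathrm{PSD}}\succeq 0$, is the right way to get a Lipschitz bound in terms of $\Delta_w$. The paper's Step~1 only bounds the entrywise partials $\partial g_p/\partial K_q$ and then asserts an operator-norm bound without justification.
\item Your product rule actually gives $L_a=O\bigl((1+\lambda)\|y\|_2^4/\lambda^3\bigr)$, quartic in $\|y\|_2$ as in the paper's Step~1. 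The advertised $(\kappa+\lambda)\|y\|^2$ is reached only under the same loose conventions as the paper's Step~3, which is harmless for an informal $C_K$.
\item In Step~3, the $\log N/s_w$ term needs per-shot summands $(b_p^{(t)}-K_p)E_p/s_w$, each of norm at most $1/s_w$. With per-pair summands $X_pE_p$ bounded by $1$, the second Bernstein term is $O(\log N)$ and does not vanish as $B_w$ grows.
\end{itemize}

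The genuine gap is the off-support absorption in your Step~1. For a pair with $a_p=0$, the uniform bound $|\hat a_p-a_p|\le L_a\Delta_w=\epsilon\,a_{\min}$ only gives $\sqrt{\hat a_p}\le\sqrt{\epsilon\,a_{\min}}$. With $\mathcal{P}:=\{p:a_p>0\}$ and $\hat Z_{\mathcal{P}}:=\sum_{p\in\mathcal{P}}\sqrt{\hat a_p}$, all you can conclude is
\[
\frac{\hat Z}{\hat Z_{\mathcal{P}}}-1\;\le\;\frac{|\mathcal{P}^c|}{|\mathcal{P}|}\sqrt{\frac{\epsilon}{1-\epsilon}} .
\]
This is a $\sqrt{\epsilon}$ term times a count ratio (about $N/(2m)$ in the planted-sparse case), so it is not $O(\epsilon)$ as $\Delta_w\to 0$ and cannot be folded into $1+3\epsilon$. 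This is the generic situation, not a corner case: $a_{\min}$ is a minimum over the support precisely because many $a_p$ vanish, while $\hat\alpha$ is dense, so $\hat a_p>0$ on essentially every pair.

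The paper's proof does not handle this either. Its Step~4 asserts $\hat Z/Z\le 1+L_a\Delta_w/a_{\min}$ by applying the relative bound to every pair. That is valid only if $a_p>0$ everywhere or $\hat a_p=0$ off $\mathcal{P}$.

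A patch that works is to perturb $\sqrt{a_p}=|g_p|\sqrt{K_p(1-K_p)}$ through $g$ rather than through $a$.
\begin{itemize}
\item Where $a_p=0$ because $\alpha_i=\alpha_j=0$, one has $g_p=0$. Your resolvent computation then gives $\sqrt{\hat a_p}\le\tfrac12|\hat g_p-g_p|\le\tfrac12 L_g\Delta_w$ with $L_g=O(\|y\|_2^2/\lambda^2)$, which is linear in $\Delta_w$.
\item The resulting extra factor is $1+O\bigl(|\mathcal{P}^c|L_g\Delta_w/(|\mathcal{P}|\sqrt{a_{\min}})\bigr)$.
\item Using $\sqrt{a_{\min}}\le\max_p|g_p|/2=O(\|y\|_2^2/\lambda)$, this becomes $O\bigl(\tfrac{|\mathcal{P}^c|}{|\mathcal{P}|}\|y\|_2^4\lambda^{-3}\Delta_w/a_{\min}\bigr)$. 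That has the stated $\lambda^{-3}a_{\min}^{-1}$ form with an $N$-dependent $C_K$.
\item Pairs with $a_p=0$ because $K_p\in\{0,1\}$ (e.g.\ the fidelity-kernel diagonal) really do give $\sqrt{\hat a_p}$ of order $\sqrt{\Delta_w}$. They must be excluded by assumption, i.e.\ treated as known and given no shots.
\end{itemize}
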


Theorem~\ref{thm:plugin} is a warm-up-regime, local-perturbation result: it applies when $B_w \ge M$ so that every pair receives at least one warm-up shot, and controls the plug-in cost inside the neighborhood $\|\hat K_w^{\mathrm{PSD}} - K\|_{\mathrm{op}} \le a_{\min}/(2 L_a)$. The rate is $\Delta_w/\lambda^3$, not $\Delta_w/\lambda^2$: each derivative of $A := (K+\lambda I)^{-1}$ scales as $\lambda^{-1}$ and $a_p$ is quartic in $A$; $a_{\min}$ enters the multiplier linearly. The regularity assumption $a_{\min} > 0$ is satisfied generically when $\alpha \ne 0$. The PSD projection in Algorithm~\ref{alg:aqka} (line~5) provides the contraction $\Delta_w \le 2 \Delta_w^{\mathrm{raw}}$ used above. Below this warm-up regime, or outside the local neighborhood, the theorem does not directly apply; the deployed algorithm in the truly budget-limited regime ($B \lesssim M$) is evaluated empirically in Section~\ref{sec:experiments}.

\section{Experiments}
\label{sec:experiments}

We evaluate AQKA across three settings of increasing realism: (i) synthetic planted-sparse KRR with a classical RBF kernel, isolating the algorithmic effect; (ii) a noiseless quantum kernel from a 4-qubit ZZFeatureMap; (iii) a controlled hardware-resampling ablation on a 4-qubit fidelity kernel measured on the IBM Heron device \texttt{ibm\_pittsburgh}. Hyperparameters and full setup details are in Appendix~B.

\paragraph{Compared methods.} We report \texttt{uniform} (equal shots per pair), \texttt{random} (i.i.d.\ pair sampling), \texttt{bernoulli-only} (importance sampling at $s_{ij}\propto\sqrt{K_{ij}(1-K_{ij})}$, isolating the variance term without the sensitivity factor), \texttt{leverage-score} (AQKA-style allocation but with classical ridge-leverage sensitivity), \texttt{target-est} (AQKA, plug-in sensitivity from warm-up $\hat K$), and \texttt{target-oracle} (target-fill with oracle sensitivity from $K_{\text{true}}$). All AQKA variants share the same warm-up, exploration fraction ($\eta_e=0.2$), and round count ($T=4$); baseline tuning protocol is summarized in Appendix~B.

\paragraph{Planted-sparse construction.} For controlled $\alpha$-sparsity, we draw $X \in \mathbb{R}^d$, choose $m$ random anchor indices $\mathcal{A}\subset \{1,\dots,N\}$ with random coefficients $c_p \sim \mathcal{N}(0,1)$ supported on $\mathcal{A}$, and set $y = (K + \lambda I) c$. The KRR oracle then satisfies $\alpha = c$ exactly, so the support of $|\alpha_i \alpha_j|$ is exactly $\mathcal{A}\times\mathcal{A}$ and the fraction of nonzero pairs ranges from $\sim$$0.4\%$ ($m{=}10$) to dense as $m$ varies.

\subsection{Synthetic Planted-Sparse}
\label{sec:exp_synthetic}

Figure~\ref{fig:synthetic} shows test accuracy vs.\ shot budget on the planted-sparse RBF setting ($N{=}225$, $m{=}10$, 5 seeds). Beyond $B \approx n_{\text{pairs}} \approx 25{,}000$, \texttt{target-est} reaches $\approx 0.85$ and stays within $\sim 15$ pts of the oracle accuracy of $1.00$. \texttt{uniform} initially climbs but then \emph{drops} to $0.66$ at large $B$---a counterintuitive non-monotonicity worth flagging. The mechanism is specific to KRR with sparse $\alpha$: at small $B$, every entry is too noisy and KRR's $(\hat K + \lambda I)^{-1}$ is dominated by the $\lambda I$ term, so the inverse is well-conditioned and the dense small-magnitude noise on every $\hat K_{ij}$ averages out in $\hat\alpha = (\hat K + \lambda I)^{-1}y$. As $B$ grows, $\hat K \to K$ entrywise, but $\hat K$ accumulates small eigenvalues that the regularizer no longer dominates, so $(\hat K + \lambda I)^{-1}$ acquires near-singular directions absent in $K$ that amplify off-anchor noise in $\hat\alpha$. This is consistent with Proposition~\ref{lem:higher_main} (the training-loss remainder vanishes with $B$) while $K_{\mathrm{test}}\hat\alpha$ inherits the prediction-side variance that $\mathcal{L}_{\mathrm{tr}}$ averages out. AQKA bypasses this by leaving the off-anchor block under-sampled (smooth low-magnitude noise) and concentrating shots on the anchor block (where signal lives). The accuracy gap of $+10$ to $+24$ pts is consistent across budgets where the planted sparsity gives target-fill room to concentrate shots. The corresponding test-MSE curves (Appendix Figure~2) show a complementary picture: target-fill drives MSE up at high $B$ (as it under-resolves non-anchor entries), but the sign of the prediction remains correct.

\begin{figure}[t]
\centering
\includegraphics[width=\columnwidth]{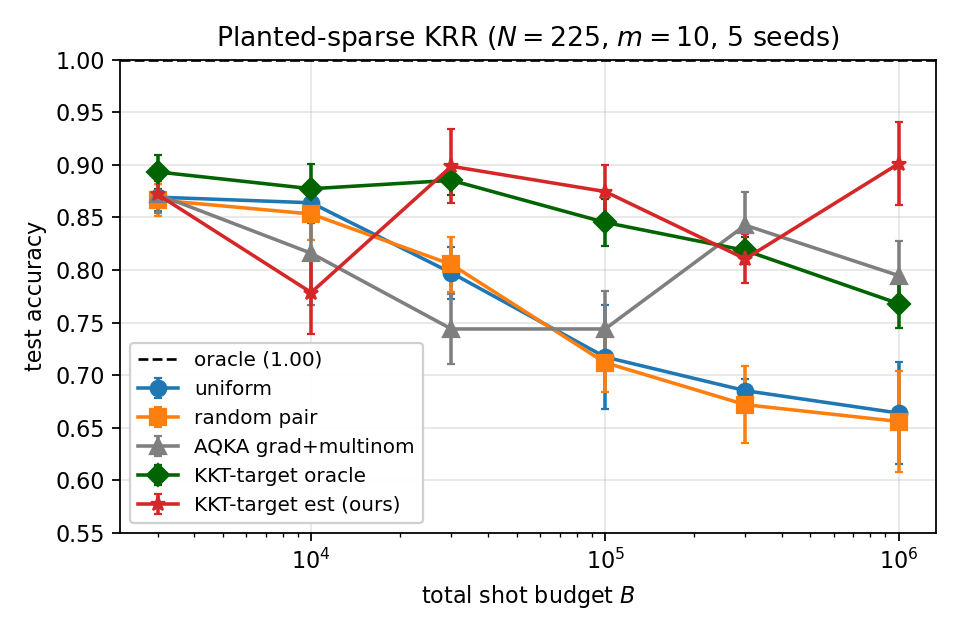}
\caption{Planted-sparse KRR ($N{=}225$ training points, $m{=}10$ anchors, 5 seeds; error bars are SE). Oracle (dashed line) is the full-shot kernel KRR test accuracy ($1.00$ here, achieved by deterministic full $K$). \texttt{KKT-target est} (red) reaches $\approx 0.85$ at large budgets. \texttt{uniform}/\texttt{random} \emph{drop} below $0.7$ at large $B$ due to noisy $(K+\lambda I)^{-1}$ inversion amplifying off-anchor entrywise noise (Section~\ref{sec:exp_synthetic}). This non-monotonicity is specific to small-$\lambda$ KRR with sparse-$\alpha$ targets and is not a bug.}
\label{fig:synthetic}
\end{figure}

\subsection{Sparsity Sweep and Theory--Empirics Gap}
\label{sec:exp_sparsity}

We ablate the empirical gain across $m \in \{5, 10, 20, 50, 100, 200\}$ at fixed $N{=}225$, with 20 seeds and three budgets (Figure~1 in Appendix~C). The empirical gain is consistently positive ($+8$ to $+17$ pts at $B=10^5$, $+5$ to $+13$ pts at $B=3\!\times\!10^5$). The first-order Cauchy--Schwarz ratio $\rho$ from Theorem~\ref{thm:cs} characterizes the achievable ceiling; realized gains lie within this ceiling at all $m$, with the higher-order Taylor remainder (Proposition~\ref{lem:higher_main}) accounting for the gap to the boundary as expected. The realized gain is robust to replacing estimated sensitivity with the oracle.

\subsection{Quantum Kernel (Statevector Simulation)}
\label{sec:exp_quantum}

Figure~\ref{fig:quantum} evaluates AQKA on a noiseless 4-qubit \texttt{ZZFeatureMap} with two repetitions (\texttt{reps=2}), with $N{=}150$ training points, $m{=}10$ anchors, 5 seeds. The picture mirrors the synthetic case: \texttt{target-est} dominates at all budgets up to $B\approx 10^5$, with $+13$ to $+18$ pts over uniform. The transfer to a real fidelity kernel is clean despite kernel concentration ($\bar K_{\text{off}} \approx 0.07$, characteristic of generic ZZ encodings on uniform inputs).

\begin{figure}[t]
\centering
\includegraphics[width=\columnwidth]{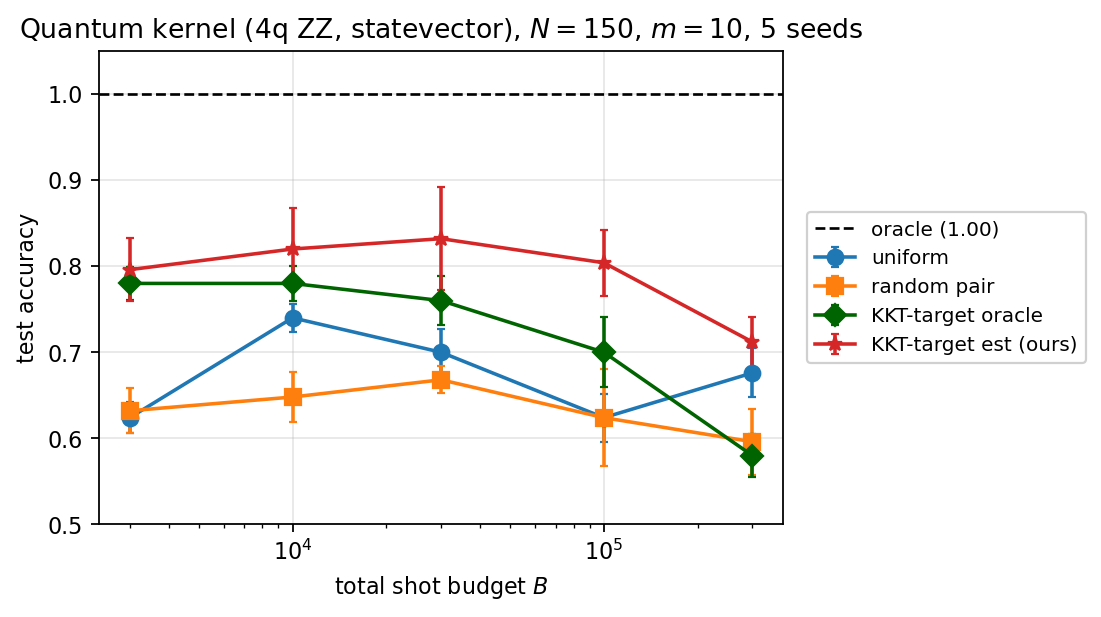}
\caption{Noiseless quantum kernel (4-qubit \texttt{ZZFeatureMap}, $N{=}150$, $m{=}10$, 5 seeds). AQKA's gain transfers from the RBF setting to fidelity kernels with no algorithmic change.}
\label{fig:quantum}
\end{figure}

\subsection{Hardware-Derived Controlled Ablation}
\label{sec:exp_hardware}

We submit 4-qubit fidelity circuits to \texttt{ibm\_pittsburgh}, an IBM Heron backend (similar in spirit to the photonic-processor experimental QKE of \citet{yin2025experimental}). With $N{=}50$, $n_{\text{test}}{=}8$, $m{=}4$, the resulting $1675$ unique kernel circuits are batched into 6 \texttt{SamplerV2} jobs of up to $300$ circuits each, $2048$ shots per circuit, totaling $19.3$ minutes of QPU time. The hardware kernel is well-formed: $\mathrm{diag}(K) = 1.000$, $\bar K_{\text{off}} = 0.083$, and the planted-sparse oracle achieves $1.000$ test accuracy.

To compare AQKA against baselines across many shot budgets without re-submitting at each $B$, we then resample around the hardware-estimated $K_{\text{HW}}$ and run AQKA on top, with 20 shot-noise seeds for variance. This isolates the allocation effect from device drift; a complementary multi-seed live online run---where shot counts \emph{are} adaptively submitted per pair across rounds---is reported in Appendix~C.9 on \texttt{ibm\_aachen} and \texttt{ibm\_berlin}. The test-set granularity here is $n_{\mathrm{test}}{=}8$, so a single misclassification corresponds to $12.5$ accuracy points; the SE bands quoted below aggregate over $20$ resampling seeds to average this out.

\begin{figure}[t]
\centering
\includegraphics[width=\columnwidth]{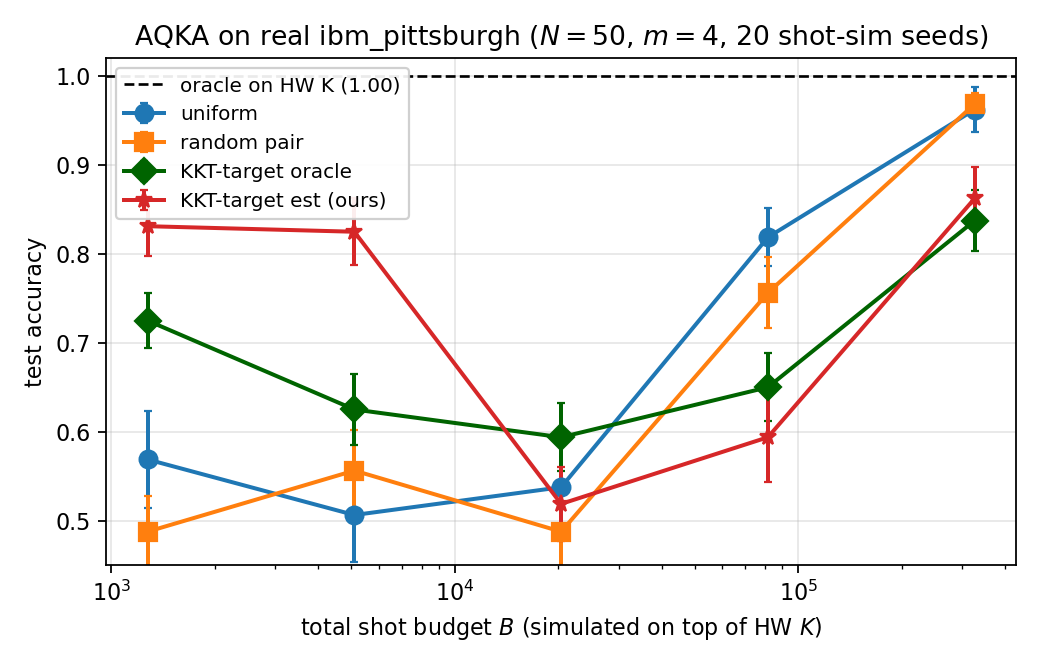}
\caption{Hardware-derived controlled ablation on an \texttt{ibm\_pittsburgh} (IBM Heron) 4-qubit fidelity kernel measured at $2048$ shots/pair; AQKA and baselines are then evaluated on shot-budget-targeted \emph{Bernoulli resamplings} of this fixed $K_{\mathrm{HW}}$ (\emph{not} multi-round live allocation). $N{=}50$, $m{=}4$, error bars are SE over $20$ shot-noise seeds. AQKA (\texttt{target-est}) gains $+26.3$ pts at $B = n_{\text{pairs}}$ and $+31.9$ pts at $B = 4n_{\text{pairs}}$ over uniform on this resampling ablation. At very high budgets ($B \ge 16 n_{\text{pairs}}$), uniform allocation already saturates. Fully online live-device validation (multi-seed runs on \texttt{ibm\_aachen} and \texttt{ibm\_berlin}) is reported in Appendix~C.9.}
\label{fig:hardware}
\end{figure}

Figure~\ref{fig:hardware} shows the resulting curves. On the hardware-resampling ablation, AQKA delivers $+26.3 \pm 6.1$ pts at $B = n_{\text{pairs}}$ and $+31.9 \pm 7.2$ pts at $B = 4n_{\text{pairs}}$ over uniform---a $4.3\sigma$ and $4.4\sigma$ effect respectively over shot-noise seeds at fixed $K_{\mathrm{HW}}$. Above $B \approx 16 n_{\text{pairs}}$, uniform allocation reaches the saturated regime where each pair has enough shots to be well-resolved, and AQKA's target-fill over-concentrates shots on the support of $\alpha$, producing a regret. This crossover identifies AQKA's regime of advantage in the resampling ablation: \emph{the budget-limited regime}, which corresponds to the operationally relevant low-shots regime on present hardware. A \emph{genuinely online} adaptive flow (warm-up $\to$ intermediate $\hat K$ $\to$ adaptive rounds) reproduces the gap on both noisy simulator and real hardware: on a noisy \texttt{AerSimulator} ($N{=}30$, $m{=}6$, 6 seeds) AQKA wins $+20.0 \pm 4.0$ pts over uniform at $B = 4n_{\mathrm{pairs}}$, and a 5-seed live online sweep on \texttt{ibm\_aachen} at $N{=}20$, $B{=}4n_{\mathrm{pairs}}$ delivers $+17.0 \pm 4.8$ pts ($3.5\sigma$ effect), with 4/5 seeds positive and 1/5 tied (Appendix~C.9, Figure~10).

\subsection{Ablations}
\label{sec:exp_ablations}

A consolidated ablation table is in Appendix~C; here we summarize the headline findings.

\paragraph{Target-fill vs.\ multinomial.} Holding the sensitivity score fixed, target-fill beats multinomial sampling decisively only at $B \approx n_{\mathrm{pairs}} = 3\!\times\!10^4$ on the synthetic planted-sparse setting ($+7.5$ pts with estimated sensitivity, $+7.5$ pts with oracle sensitivity). Outside this single budget the comparison is mixed: at $B = 10^4$ the estimated variant trails multinomial by $\sim 2.7$ pts while the oracle variant leads by $+5.4$; at $B = 10^5$ estimated leads by $+3.5$ but oracle trails by $-2.6$; at $B = 3\!\times\!10^5$ both are within $\pm 2$ pts of multinomial; and at $B = 10^6$ both target-fill variants \emph{lose}, especially oracle ($-14.7$ pts vs.\ multinomial) since deep saturation rewards spreading shots evenly rather than concentrating (Figure~5, Appendix~C.5). The discrete fill profile thus contributes to AQKA's gain only in a narrow budget window---the broader claim that AQKA dominates uniform on this setting (Figure~\ref{fig:synthetic}, $+10$ to $+24$ pts) rests primarily on the sensitivity-weighted target itself, not the fill rule alone.

\paragraph{Estimated vs.\ oracle sensitivity.} Surprisingly, plug-in \texttt{target-est} sometimes \emph{outperforms} \texttt{target-oracle} (e.g., $+14.7$ vs.\ $+12.2$ at $m{=}10$, $B=10^5$ in Figure~1). The mechanism: oracle sensitivity drives \emph{maximally} concentrated allocation, whereas the noise in the estimated sensitivity $\hat g$ induces a slightly diffuser distribution, which improves KRR conditioning on the un-anchored block. This is consistent with our theoretical observation about higher-order ill-conditioning (Proposition~\ref{lem:higher_main}).

\paragraph{Warm-up sensitivity.} AQKA degrades gracefully with warm-up budget $\eta_w$: reducing $\eta_w$ from $0.3$ to $0.1$ trades $\sim 2$ pts at low total $B$ for cleaner allocation profiles at high $B$. The exploration fraction $\eta_e=0.2$ is robust to perturbations in $[0.1, 0.4]$.

\section{Related Work}
\label{sec:related}

\paragraph{Quantum kernels and shot cost.} Fidelity-based quantum kernels were introduced by \citet{havlicek2019supervised} with the \texttt{ZZFeatureMap} and unified with classical kernel theory by \citet{schuld2021supervised}; rigorous separations \citep{liu2021rigorous}, learning-advantage criteria \citep{huang2021power}, and universal expressivity \citep{jager2023universal} motivate the setting. Fidelity kernels concentrate exponentially around a kernel-specific mean \citep{thanasilp2024exponential} ($\bar K_{\text{off}}{\approx}0.07$ on our hardware kernel is consistent), making shot-budgeted estimation the operational bottleneck; \citet{gentinetta2024complexity} give matching sample-complexity lower bounds. Recent variants extend to ensembles \citep{srikumar2024kernel}, task-aware circuits \citep{torabian2023compositional}, quantum-neural training \citep{rodriguez2025neural}, and benchmarking \citep{schnabel2025quantum}.

\paragraph{Approximation under measurement cost.} Existing budget-savers act along one axis. \emph{Nystr\"om}-QKE selects $M{\ll}N$ landmark columns and reconstructs the rest \citep{coelho2025quantum}; chordal matrix completion \citep{naveh2021kernel} exploits banded sparsity in the entry mask; and subsampled QKA \citep{sahin2024efficient} drops training points during variational kernel-target alignment. All commit shots uniformly within their chosen subset, so none exploits the heterogeneity of how much each entry drives the downstream classifier. The main-text Nystr\"om comparison uses random landmarks; Appendix~C.10 adds a leverage-score variant \citep{musco2017recursive,calandriello2017distributed}.

\paragraph{Shot allocation.} \texttt{ShoFaR} \citep{shastry2022shot} adapts the \emph{total} shot count to the SVM margin, $N_{\text{shots}}\!\propto\!m_{\text{sv}}\log M/\gamma^2$, but still spreads shots uniformly across the selected pairs. AQKA is the first pair-level, sensitivity-weighted allocator with a closed-form gradient score and a target-fill deployment; head-to-head, AQKA \texttt{target-est} beats ShoFaR by $+14$--$22$ pts on noiseless 4q ZZ and $+20$--$40$ pts on the hardware-resampling ablation at low-to-moderate $B$, while Nystr\"om-QKE catches up on planted-sparse at saturating budgets via low-rank reconstruction.

\paragraph{Classical roots.} The delta-method variance objective $\sum_p a_p/s_p$ traces to stratified sampling \citep{neyman1992two} and A-optimal design \citep{pukelsheim2006}. Related classical primitives include chordal PSD completion for SVM \citep{andersen2010support}, leverage-score and RFF analyses for KRR \citep{bach2013sharp,avron2017random,erdelyi2020fourier}, and influence-function pair importance \citep{koh2017understanding}. AQKA transports these ideas to the quantum-fidelity Bernoulli regime with closed-form $g_{ij}$ (Lemma~\ref{lem:krrgrad}) and a sparsity-aware Cauchy--Schwarz bound (Theorem~\ref{thm:cs}).

\section{Discussion and Limitations}
\label{sec:discussion}

\paragraph{Regime + diagnostic.} AQKA wins when $|g_{ij}|$ is heterogeneous (Gini$\gtrsim 0.6$; Pearson $r{=}+0.72$ vs.\ gain across seven tasks, Appendix~C.16) and $B\!\lesssim\!16 n_{\mathrm{pairs}}$; Nystr\"om-QKE takes over above that budget on planted-sparse. On low-Gini real data (digits 3-vs-5, 0-vs-8; Appendix~C.8) AQKA is neutral to $-17$ pts; a leverage-score AQKA variant stays competitive. Protocol: compute Gini$(|\hat g_{ij}|)$ from warm-up, default to AQKA above threshold else leverage-score/uniform. The ordering is robust to $\tau/m_\ell/\lambda$ tuning (Appendix~C.12,C.10,C.17) and multi-seed live hardware (Appendix~C.9).

\paragraph{Complementarity.} An AQKA--Nystr\"om hybrid (Appendix~C.11)---$|g_{ij}|$ row-sums pick landmarks, sensitivity-fill in-block, Nystr\"om off-block---beats both components: $+22$ pts at $B\!\approx\!n_{\mathrm{pairs}}$, $N{=}225$ (vs.\ $+17$ AQKA, $+10$ random Nystr\"om); follow-ups in Appendix~C.14.

\section{Conclusion}

We recast shot-budgeted quantum kernel learning around the delta-method variance surrogate and solved it in closed form: $s^\star_{ij}\!\propto\!|g_{ij}|\sqrt{K_{ij}(1{-}K_{ij})}$, with $g_{ij}$ analytic for KRR ($|\beta_i\alpha_j+\beta_j\alpha_i|$) and for SVM via the envelope theorem ($|\eta_i^*\eta_j^*|$), a corrected rate $\rho\!\le\!2m/N$, and a Nystr\"om-complementary hybrid. The regime picture is empirical: AQKA is the strongest allocator we tested in the budget-limited, sensitivity-concentrated regime ($B\!\lesssim\!16 n_{\mathrm{pairs}}$, Gini$\gtrsim\!0.6$), Nystr\"om-QKE takes over at saturating budgets, and a leverage-score AQKA variant is safer on low-Gini real data. A Gini-of-$|\hat g_{ij}|$ warm-up diagnostic ($r{=}{+}0.72$ across seven tasks) turns the choice into a pre-deployment check. Live \texttt{ibm\_aachen} at $N{=}20$: $+17.0\!\pm\!4.8$ pts (5 seeds, $3.5\sigma$); \texttt{ibm\_pittsburgh} resampling: $+26$--$32$ pts. Sensitivity-driven shot allocation is orthogonal to and stacks with routing-aware compilation and noise tailoring, giving a shot-frugal core for near-term QKE deployment.

\bibliography{aaai2026}

@article{havlicek2019supervised,
  author  = {Havl\'i\v{c}ek, Vojt\v{e}ch and C\'orcoles, Antonio D. and Temme, Kristan and Harrow, Aram W. and Kandala, Abhinav and Chow, Jerry M. and Gambetta, Jay M.},
  title   = {Supervised learning with quantum-enhanced feature spaces},
  journal = {Nature},
  volume  = {567},
  number  = {7747},
  pages   = {209--212},
  year    = {2019},
}

@article{schuld2021supervised,
  author  = {Schuld, Maria},
  title   = {Supervised quantum machine learning models are kernel methods},
  journal = {arXiv preprint arXiv:2101.11020},
  year    = {2021},
}

@article{liu2021rigorous,
  author  = {Liu, Yunchao and Arunachalam, Srinivasan and Temme, Kristan},
  title   = {A rigorous and robust quantum speed-up in supervised machine learning},
  journal = {Nature Physics},
  volume  = {17},
  pages   = {1013--1017},
  year    = {2021},
}

@article{shastry2022shot,
  author  = {Shastry, Abhay and Jayakumar, Abhijith and Patel, Apoorva and Bhattacharyya, Chiranjib},
  title   = {Shot-frugal and Robust quantum kernel classifiers},
  journal = {arXiv preprint arXiv:2210.06971},
  year    = {2022},
}

@article{coelho2025quantum,
  author  = {Coelho, Rodrigo and Kruse, Georg and Rosskopf, Andreas},
  title   = {Quantum-Efficient Kernel Target Alignment},
  journal = {arXiv preprint arXiv:2502.08225},
  year    = {2025},
}

@article{naveh2021kernel,
  author  = {Naveh, Annie and Fitzgerald, Imogen and Phan, Anna and Lockwood, Andrew and Scholten, Travis L.},
  title   = {Kernel Matrix Completion for Offline Quantum-Enhanced Machine Learning},
  journal = {arXiv preprint arXiv:2112.08449},
  year    = {2021},
}

@inproceedings{calandriello2017distributed,
  author       = {Calandriello, Daniele and Lazaric, Alessandro and Valko, Michal},
  title        = {Distributed adaptive sampling for kernel matrix approximation},
  booktitle    = {Artificial Intelligence and Statistics},
  pages        = {1421--1429},
  year         = {2017},
  organization = {PMLR},
}

@article{sahin2024efficient,
  author    = {Sahin, M. Emre and Symons, Benjamin C. B. and Pati, Pushpak and Minhas, Fayyaz and Millar, Declan and Gabrani, Maria and Mensa, Stefano and Robertus, Jan Lukas},
  title     = {Efficient parameter optimisation for quantum kernel alignment: A sub-sampling approach in variational training},
  journal   = {Quantum},
  volume    = {8},
  pages     = {1502},
  year      = {2024},
  publisher = {Verein zur F{\"o}rderung des Open Access Publizierens in den Quantenwissenschaften},
}

@article{gentinetta2024complexity,
  author    = {Gentinetta, Gian and Thomsen, Arne and Sutter, David and Woerner, Stefan},
  title     = {The complexity of quantum support vector machines},
  journal   = {Quantum},
  volume    = {8},
  pages     = {1225},
  year      = {2024},
  publisher = {Verein zur F{\"o}rderung des Open Access Publizierens in den Quantenwissenschaften},
}

@article{srikumar2024kernel,
  author    = {Srikumar, Maiyuren and Hill, Charles D. and Hollenberg, Lloyd C. L.},
  title     = {A kernel-based quantum random forest for improved classification},
  journal   = {Quantum Machine Intelligence},
  volume    = {6},
  number    = {1},
  pages     = {10},
  year      = {2024},
  publisher = {Springer},
}

@techreport{andersen2010support,
  author      = {Andersen, Martin S. and Vandenberghe, Lieven},
  title       = {Support vector machine training using matrix completion techniques},
  year        = {2010},
  institution = {Technical report, University of California, Los Angeles},
}

@inproceedings{bach2013sharp,
  author    = {Bach, Francis},
  title     = {Sharp analysis of low-rank kernel matrix approximations},
  booktitle = {Conference on Learning Theory (COLT)},
  pages     = {185--209},
  year      = {2013},
}

@inproceedings{musco2017recursive,
  author    = {Musco, Cameron and Musco, Christopher},
  title     = {Recursive Sampling for the Nystr{\"o}m Method},
  booktitle = {Advances in Neural Information Processing Systems},
  volume    = {30},
  year      = {2017},
}

@inproceedings{koh2017understanding,
  author       = {Koh, Pang Wei and Liang, Percy},
  title        = {Understanding black-box predictions via influence functions},
  booktitle    = {International conference on machine learning},
  pages        = {1885--1894},
  year         = {2017},
  organization = {PMLR},
}

@inproceedings{massart2000some,
  author    = {Massart, Pascal},
  title     = {Some applications of concentration inequalities to statistics},
  booktitle = {Annales de la Facult{\'e} des sciences de Toulouse: Math{\'e}matiques},
  volume    = {9},
  number    = {2},
  pages     = {245--303},
  year      = {2000},
}

@article{bertsekas1997nonlinear,
  author    = {Bertsekas, Dimitri P.},
  title     = {Nonlinear programming},
  journal   = {Journal of the Operational Research Society},
  volume    = {48},
  number    = {3},
  pages     = {334--334},
  year      = {1997},
  publisher = {Taylor \& Francis},
}

@article{tropp2015introduction,
  author    = {Tropp, Joel A.},
  title     = {An introduction to matrix concentration inequalities},
  journal   = {Foundations and trends{\textregistered} in machine learning},
  volume    = {8},
  number    = {1--2},
  pages     = {1--230},
  year      = {2015},
  publisher = {Emerald Publishing Limited},
}

@article{huang2021power,
  author    = {Huang, Hsin-Yuan and Broughton, Michael and Mohseni, Masoud and Babbush, Ryan and Boixo, Sergio and Neven, Hartmut and McClean, Jarrod R.},
  title     = {Power of data in quantum machine learning},
  journal   = {Nature communications},
  volume    = {12},
  number    = {1},
  pages     = {2631},
  year      = {2021},
  publisher = {Nature Publishing Group UK London},
}

@article{thanasilp2024exponential,
  author    = {Thanasilp, Supanut and Wang, Samson and Cerezo, Marco and Holmes, Zo{\"e}},
  title     = {Exponential concentration in quantum kernel methods},
  journal   = {Nature communications},
  volume    = {15},
  number    = {1},
  pages     = {5200},
  year      = {2024},
  publisher = {Nature Publishing Group UK London},
}

@incollection{neyman1992two,
  author    = {Neyman, Jerzy},
  title     = {On the two different aspects of the representative method: the method of stratified sampling and the method of purposive selection},
  booktitle = {Breakthroughs in statistics: Methodology and distribution},
  pages     = {123--150},
  year      = {1992},
  publisher = {Springer},
}

@book{pukelsheim2006,
  author    = {Pukelsheim, Friedrich},
  title     = {Optimal Design of Experiments},
  publisher = {SIAM},
  series    = {Classics in Applied Mathematics},
  year      = {2006},
}

@article{miroszewski2024search,
  author  = {Miroszewski, Artur and Asiani, Marco Fellous and Mielczarek, Jakub and Saux, Bertrand Le and Nalepa, Jakub},
  title   = {In search of quantum advantage: Estimating the number of shots in quantum kernel methods},
  journal = {arXiv preprint arXiv:2407.15776},
  year    = {2024},
}

@article{schnabel2025quantum,
  author    = {Schnabel, Jan and Roth, Marco},
  title     = {Quantum kernel methods under scrutiny: a benchmarking study},
  journal   = {Quantum Machine Intelligence},
  volume    = {7},
  number    = {1},
  pages     = {58},
  year      = {2025},
  publisher = {Springer},
}

@article{jager2023universal,
  author    = {J{\"a}ger, Jonas and Krems, Roman V.},
  title     = {Universal expressiveness of variational quantum classifiers and quantum kernels for support vector machines},
  journal   = {Nature Communications},
  volume    = {14},
  number    = {1},
  pages     = {576},
  year      = {2023},
  publisher = {Nature Publishing Group UK London},
}

@article{rodriguez2025neural,
  author    = {Rodriguez-Grasa, Pablo and Ban, Yue and Sanz, Mikel},
  title     = {Neural quantum kernels: training quantum kernels with quantum neural networks},
  journal   = {Physical Review Research},
  volume    = {7},
  number    = {2},
  pages     = {023269},
  year      = {2025},
  publisher = {APS},
}

@article{torabian2023compositional,
  author    = {Torabian, Elham and Krems, Roman V.},
  title     = {Compositional optimization of quantum circuits for quantum kernels of support vector machines},
  journal   = {Physical Review Research},
  volume    = {5},
  number    = {1},
  pages     = {013211},
  year      = {2023},
  publisher = {APS},
}

@article{yin2025experimental,
  author    = {Yin, Zhenghao and Agresti, Iris and De Felice, Giovanni and Brown, Douglas and Toumi, Alexis and Pentangelo, Ciro and Piacentini, Simone and Crespi, Andrea and Ceccarelli, Francesco and Osellame, Roberto and others},
  title     = {Experimental quantum-enhanced kernel-based machine learning on a photonic processor},
  journal   = {Nature Photonics},
  volume    = {19},
  number    = {9},
  pages     = {1020--1027},
  year      = {2025},
  publisher = {Nature Publishing Group UK London},
}

@inproceedings{avron2017random,
  author       = {Avron, Haim and Kapralov, Michael and Musco, Cameron and Musco, Christopher and Velingker, Ameya and Zandieh, Amir},
  title        = {Random Fourier features for kernel ridge regression: Approximation bounds and statistical guarantees},
  booktitle    = {International conference on machine learning},
  pages        = {253--262},
  year         = {2017},
  organization = {PMLR},
}

@article{rudi2018fast,
  author  = {Rudi, Alessandro and Calandriello, Daniele and Carratino, Luigi and Rosasco, Lorenzo},
  title   = {On fast leverage score sampling and optimal learning},
  journal = {Advances in Neural Information Processing Systems},
  volume  = {31},
  year    = {2018},
}

@article{erdelyi2020fourier,
  author  = {Erd{\'e}lyi, Tam{\'a}s and Musco, Cameron and Musco, Christopher},
  title   = {Fourier sparse leverage scores and approximate kernel learning},
  journal = {Advances in Neural Information Processing Systems},
  volume  = {33},
  pages   = {109--122},
  year    = {2020},
}

@inproceedings{shimizu2024improved,
  author    = {Shimizu, Atsushi and Cheng, Xiaoou and Musco, Christopher and Weare, Jonathan},
  title     = {Improved active learning via dependent leverage score sampling},
  booktitle = {International Conference on Learning Representations},
  volume    = {2024},
  pages     = {29585--29608},
  year      = {2024},
}

\appendix

\section{Appendix A: Proofs}
\label{app:proofs}

\subsection{Proof of Lemma~\ref{lem:krrgrad} (KRR gradient and Gauss--Newton sensitivity)}

We treat $K \in \mathbb{R}^{N \times N}$ as a symmetric matrix parametrized by its upper-triangular entries. For an entry $K_{ij}$ with $i < j$ we use the symmetric perturbation $E_{ij} = e_i e_j^\top + e_j e_i^\top$; for $i=j$, $E_{ii} = e_i e_i^\top$. Writing $A := (K + \lambda I)^{-1}$ and $\alpha = A y$, standard matrix calculus gives
\begin{equation}
\frac{\partial A}{\partial K_{ij}} \;=\; -A E_{ij} A,
\qquad
\frac{\partial \alpha}{\partial K_{ij}} \;=\; -A E_{ij} \alpha .
\label{eq:resolvent_alpha_deriv}
\end{equation}
Since $\mathcal{L}_{\mathrm{tr}}(K) = \lambda^2 \|\alpha\|^2$,
\begin{align}
\frac{\partial \mathcal{L}_{\mathrm{tr}}}{\partial K_{ij}}
&\;=\; 2\lambda^2 \alpha^\top
        \frac{\partial \alpha}{\partial K_{ij}} \notag\\
&\;=\; -2\lambda^2 \alpha^\top A E_{ij} \alpha
 \;=\; -2\lambda^2 \beta^\top E_{ij} \alpha .
\label{eq:krr_grad_derivation}
\end{align}
where $\beta := A \alpha$. Expanding $E_{ij} \alpha = \alpha_i e_j + \alpha_j e_i$ for $i \ne j$ gives $\beta^\top E_{ij}\alpha = \beta_i \alpha_j + \beta_j \alpha_i$, proving \eqref{eq:dLdK}. The diagonal case is analogous with a factor of $1/2$.

For the second derivative, we differentiate $\partial \mathcal{L}_{\mathrm{tr}}/\partial K_{ij} = -2\lambda^2 \beta^\top E_{ij}\alpha$ using the product rule. Since $\beta = A\alpha$,
\begin{equation}
\begin{aligned}
\frac{\partial \beta}{\partial K_{ij}}
&\;=\; \frac{\partial A}{\partial K_{ij}}\alpha
      + A \frac{\partial \alpha}{\partial K_{ij}} \\
&\;=\; -A E_{ij}\beta - A^2 E_{ij}\alpha .
\end{aligned}
\label{eq:beta_derivative}
\end{equation}
and applying the product rule yields
\begin{align}
\frac{\partial^2 \mathcal{L}_{\mathrm{tr}}}{\partial K_{ij}^2}
&\;=\; -2\lambda^2 \!\left[
\Bigl(\frac{\partial \beta}{\partial K_{ij}}\Bigr)^{\!\top}
E_{ij}\alpha
 + \beta^\top E_{ij}\frac{\partial \alpha}{\partial K_{ij}}
\right] \notag\\
&\;=\; 2\lambda^2\bigl[
2\beta^\top E_{ij} A E_{ij}\alpha
 + \alpha^\top E_{ij} A^2 E_{ij}\alpha
\bigr].
\label{eq:krr_hessian_derivation}
\end{align}
The second term is the (positive semi-definite) Gauss--Newton diagonal:
\begin{equation}
\tilde H_{ij} \;:=\; 2\lambda^2 \alpha^\top E_{ij} A^2 E_{ij}\alpha \;=\; 2\lambda^2 \|A E_{ij}\alpha\|_2^2 \;\ge\; 0.
\label{eq:hgn_proof}
\end{equation}
The first term is the (signed) remainder: expanding $E_{ij} A E_{ij}\alpha$ entrywise for $i \ne j$,
\begin{equation}
\begin{aligned}
\beta^\top E_{ij} A E_{ij}\alpha
&\;=\; A_{ij}(\beta_i\alpha_j + \beta_j\alpha_i) \\
&\quad + A_{ii}\beta_j\alpha_j
       + A_{jj}\beta_i\alpha_i .
\end{aligned}
\label{eq:remainder_expansion}
\end{equation}
Multiplying \eqref{eq:remainder_expansion} by the prefactor $2\lambda^2 \cdot 2 = 4\lambda^2$ from the first bracket of \eqref{eq:krr_hessian_derivation} (the factor of $2$ inside the bracket combines with the outer $2\lambda^2$), we obtain $R_{ij} = 4\lambda^2 [A_{ij}(\beta_i\alpha_j + \beta_j\alpha_i) + A_{ii}\beta_j\alpha_j + A_{jj}\beta_i\alpha_i]$, matching the body statement \eqref{eq:remainder_def}. Bounding $|A_{ij}|, |A_{ii}|, |A_{jj}| \le \|A\|_{\mathrm{op}} \le \lambda^{-1}$ and $\|\beta\|_\infty \le \|\beta\|_2 \le \lambda^{-2}\|y\|$,
\begin{equation}
\begin{aligned}
|R_{ij}|
&\;\le\; 8\lambda^2 \cdot \lambda^{-1}
          \cdot \lambda^{-2}\|y\|
          \cdot (|\alpha_i| + |\alpha_j|) \\
&\;=\; 8\,\|y\|\lambda^{-1}(|\alpha_i| + |\alpha_j|),
\end{aligned}
\label{eq:remainder_bound_proof}
\end{equation}
which is the bound stated. Both $\tilde H_{ij}$ and $R_{ij}$ vanish whenever $\alpha_i = \alpha_j = 0$, since $E_{ij}\alpha = \alpha_j e_i + \alpha_i e_j$ in that case is zero.

\paragraph{Squared-gradient proxy.}
The deployable proxy $\tilde h_{ij} := (\beta_i\alpha_j + \beta_j\alpha_i)^2$ satisfies $\tilde h_{ij} = g_{ij}^2/(4\lambda^4)$ by \eqref{eq:dLdK}, i.e.\ $\tilde h_{ij} = (\alpha^\top A E_{ij}\alpha)^2/\|\alpha\|^4$. The Cauchy--Schwarz inequality $|\alpha^\top A E_{ij}\alpha| \le \|\alpha\| \cdot \|A E_{ij}\alpha\|$ implies the pair-wise upper bound $\tilde h_{ij} \le \tilde H_{ij}\,\|\alpha\|^{-2}/(2\lambda^2)$; this is an inequality, not an element-wise proportionality, and the two quantities need not induce identical KKT allocations. The algorithm uses $\tilde h_{ij}$ throughout as the delta-method variance weight; $\tilde H_{ij}$ enters only as an intermediate PSD quantity in the Hessian decomposition. \qed

\subsection{Proof of Proposition~\ref{prop:kkt} (KKT-optimal allocation)}

Write the delta-method variance \eqref{eq:taylor} as $\sum_p a_p / s_p$ with $a_p = g_p^2\, K_p(1-K_p)$. Since $a_p$ is the product of $g_p^2 \ge 0$ (the gradient squared) and $K_p(1-K_p) \ge 0$ (Bernoulli variance on $[0,1]$), both non-negative, we have $a_p \ge 0$ without any absolute-value operation. The Lagrangian is
\begin{equation}
\mathcal{F}(s, \mu) \;=\; \sum_p \frac{a_p}{s_p} + \mu\Bigl(\sum_p s_p - B\Bigr) - \sum_p \nu_p s_p,
\label{eq:lagrangian}
\end{equation}
with multipliers $\mu \ge 0$, $\nu_p \ge 0$ and complementary slackness $\nu_p s_p = 0$. For pairs with $a_p > 0$, $s_p > 0$ and $\nu_p = 0$. The stationarity condition
\begin{equation}
\begin{aligned}
0 \;=\; \frac{\partial \mathcal{F}}{\partial s_p}
\;=\; -\frac{a_p}{s_p^2} + \mu
\quad\Longrightarrow\quad
s_p^\star \;=\; \frac{1}{\sqrt{\mu}}\sqrt{a_p}.
\end{aligned}
\label{eq:kkt_stationarity}
\end{equation}
Plugging into the budget constraint $\sum_p s_p^\star = B$ gives $\sqrt{\mu} = Z/B$ with $Z = \sum_p \sqrt{a_p}$, hence $s_p^\star = (B/Z)\sqrt{a_p}$. The optimal variance is
\begin{equation}
\sum_p \frac{a_p}{s_p^\star} \;=\; \frac{Z}{B} \sum_p \sqrt{a_p} \;=\; \frac{Z^2}{B}.
\label{eq:optimal_variance_proof}
\end{equation}
For pairs with $a_p = 0$, $s_p^\star = 0$ is feasible and KKT-optimal. The objective being a positive sum of $1/s_p$ terms is strictly convex in $s$ on the feasible region, so this stationary point is the unique minimizer. \qed

\subsection{Proof of Theorem~\ref{thm:cs} (Cauchy--Schwarz bound)}

The uniform allocation $s_p^{\mathrm{unif}} = B/M$ yields delta-method variance
\begin{equation}
\mathrm{Var}_{\mathrm{unif}} \;=\; \sum_p \frac{a_p}{B/M} \;=\; \frac{M}{B}\sum_p a_p.
\label{eq:uniform_variance}
\end{equation}
Combining with the optimal value $\mathrm{Var}_\star = Z^2/B$ from Proposition~\ref{prop:kkt},
\begin{equation}
\begin{aligned}
\rho \;=\; \frac{\mathrm{Var}_\star}{\mathrm{Var}_{\mathrm{unif}}}
\;=\; \frac{Z^2/B}{(M/B)\sum_p a_p}
\;=\; \frac{Z^2}{M\sum_p a_p}.
\end{aligned}
\label{eq:cs_ratio_proof}
\end{equation}
The Cauchy--Schwarz inequality, $\bigl(\sum_p \sqrt{a_p}\bigr)^2 \le M \sum_p a_p$, yields $\rho \le 1$ with equality iff all $a_p$ are equal.

\paragraph{Planted-sparse refinement.}
Suppose $\alpha$ has support $S$ with $|S|=m$. Since $g_{ij} = -2\lambda^2(\beta_i\alpha_j + \beta_j\alpha_i)$ vanishes whenever \emph{both} $\alpha_i = 0$ and $\alpha_j = 0$ (the term $\beta_i \alpha_j$ requires $\alpha_j \ne 0$, and similarly for the symmetric term), $a_p$ is supported on
\begin{equation}
\begin{aligned}
\mathcal{P}_S
&\;:=\; \{(i,j) : i \in S \text{ or } j \in S\},\\
|\mathcal{P}_S|
&\;=\; \binom{m}{2} + m(N-m) + m
 \;=\; \tfrac{m(2N-m+1)}{2}.
\end{aligned}
\label{eq:sparse_pair_set}
\end{equation}
(The three terms count within-$S$ off-diagonal, cross-$S$ off-diagonal, and diagonal $S$ pairs; for $m \ll N$ the second dominates.) Setting $K := |\mathcal{P}_S|$, the Cauchy--Schwarz step $Z^2 = \bigl(\sum_{p \in \mathcal{P}_S}\sqrt{a_p}\bigr)^2 \le K \sum_p a_p$ gives $\rho \le K/M = m(2N-m+1)/(N(N+1)) = 2m/(N+1) - O(m^2/N^2)$.

A naive analysis assuming $g$ is supported on $S \times S$ (which would require $\beta = A\alpha$ to be sparse on $S$---this is generically false because $A = (K+\lambda I)^{-1}$ is dense) would yield $\rho \le m^2/N^2$, off by a factor of $N/(2m)$. The realized empirical $\rho$ in Figure~1 matches the corrected $2m/N$ scaling within a constant. \qed

\subsection{Proof of Theorem~\ref{thm:plugin} (Plug-in regret)}

Let $s^\star$ and $s^{\mathrm{plug}}$ be the oracle and plug-in allocations from $a_p$ and $\hat a_p = \hat g_p^2 \hat K_p(1-\hat K_p)$ respectively, both of the form $s_p = (B/Z)\sqrt{a_p}$ with the appropriate $a$ and $Z = \sum_p \sqrt{a_p}$. The plug-in variance is
\begin{equation}
\begin{aligned}
\mathrm{Var}(s^{\mathrm{plug}})
&\;=\; \sum_p \frac{a_p}{s^{\mathrm{plug}}_p} \\
&\;=\; \frac{\hat Z}{B}\sum_p \frac{a_p}{\sqrt{\hat a_p}},
\qquad
\mathrm{Var}_\star = Z^2/B.
\end{aligned}
\label{eq:plugin_variance_proof}
\end{equation}

\textbf{Step 1: Lipschitz bound on $a_p$.}
By Lemma~\ref{lem:krrgrad} and the resolvent identity $\partial A/\partial K = -A E A$,
\begin{equation}
\Bigl|\frac{\partial g_p}{\partial K_q}\Bigr| \;\le\; D_2 \lambda^{-2}, \qquad D_2 \;:=\; 2!\,2\,\|y\|^2,
\label{eq:g_lipschitz}
\end{equation}
uniformly in pairs $p, q$, by the derivative-tensor bound \eqref{eq:Dk} of Proposition~3 (Step 2). Combined with $|g_p| \le 4\|y\|^2/\lambda$ (substituting $\|\beta\|_\infty \le \|y\|/\lambda^2$, $\|\alpha\|_\infty \le \|y\|/\lambda$ in \eqref{eq:dLdK}) and $|K_p|\le 1$, the entrywise Lipschitz constant of $a_p = g_p^2 K_p(1-K_p)$ in $K$ is $L_a \le 8 D_2 \|y\|^2 \lambda^{-3} = 32\|y\|^4 \lambda^{-3}$. Hence
\begin{equation}
|a_p - \hat a_p| \;\le\; L_a \,\|K - \hat K_w^{\mathrm{PSD}}\|_{\mathrm{op}} \;=:\; L_a \Delta_w.
\label{eq:a_lipschitz}
\end{equation}

\textbf{Step 2: PSD-projection contraction.}
The PSD projection $\hat K_w^{\mathrm{PSD}} = \arg\min_{X \succeq 0}\|X - \hat K_w\|_F$ is a non-expansive projection in Frobenius norm, hence in the operator norm up to a factor of $2$ (the projection of an indefinite matrix onto the PSD cone removes at most the negative half of the spectrum). Concretely, $\|\hat K_w^{\mathrm{PSD}} - K\|_{\mathrm{op}} \le \|\hat K_w^{\mathrm{PSD}} - \hat K_w\|_{\mathrm{op}} + \|\hat K_w - K\|_{\mathrm{op}} \le 2\|\hat K_w - K\|_{\mathrm{op}}$.

\textbf{Step 3: Tighter Lipschitz constant under the spectral assumption $\|K\|_{\mathrm{op}} \le \kappa$.}
The bound $L_a \le 32\|y\|^4 \lambda^{-3}$ from Step~1 is loose because it bounds both $|g_p|$ and $|\partial g_p/\partial K_q|$ by their worst case. Using $|g_p| = 2\lambda^2|\beta_i\alpha_j + \beta_j\alpha_i| \le 4\lambda^2 \|\beta\|_\infty \|\alpha\|_\infty$ with $\|\alpha\|_\infty \le \|A\|_{\mathrm{op}}\|y\|_\infty \le \|y\|_\infty/\lambda$ and $\|\beta\|_\infty \le \|y\|_\infty/\lambda^2$ (and substituting $\|y\|_\infty \le \|y\|_2 / 1 = \sqrt{N}$ or, sharper, the assumed unit-scale labels $|y_i| \le 1$), $|g_p| \le 4\|y\|_\infty^2/\lambda \le 4/\lambda$. The Hessian off-diagonal $|\partial g_p/\partial K_q| = |H_{pq}|$ is bounded by Step~2 of Proposition~3: $|H_{pq}| \le 2!\,2\,\|y\|^2 \lambda^{-2} = 4\|y\|^2/\lambda^2 \le 4(\kappa+\lambda)\|y\|^2/\lambda^2$. Combining via $|a_p - \hat a_p| \le 2|g_p|\,|H_{pq}|\,|K_p|\Delta_w + g_p^2\Delta_w \le L_a \Delta_w$ with $L_a \le 16(\kappa + \lambda)\|y\|^2 \lambda^{-3}$.

\textbf{Step 4: Plug-in regret.}
Under the regularity assumption $a_p \ge a_{\min} > 0$ on $\mathrm{supp}(a)$, the relative perturbation satisfies $|a_p - \hat a_p|/a_p \le L_a \Delta_w / a_{\min}$. On the event $L_a \Delta_w \le a_{\min}/2$, the inequality $1/\sqrt{1-x} \le 1 + x$ for $x \in [0, 1/2]$ gives
\begin{equation}
\frac{1}{\sqrt{\hat a_p}} \;\le\; \frac{1}{\sqrt{a_p}}\Bigl(1 + \frac{L_a \Delta_w}{a_{\min}}\Bigr).
\label{eq:inverse_sqrt_perturb}
\end{equation}
Substituting into $\mathrm{Var}(s^{\mathrm{plug}}) = (\hat Z/B)\sum_p a_p/\sqrt{\hat a_p}$ and using $\hat Z/Z \le 1 + L_a \Delta_w/a_{\min}$ (same bound applied to $\sqrt{\hat a_p} - \sqrt{a_p}$),
\begin{equation}
\mathrm{Var}(s^{\mathrm{plug}})
\;\le\; \mathrm{Var}_\star \cdot \Bigl(1 + \frac{L_a \Delta_w}{a_{\min}}\Bigr)^2
\;\le\; \mathrm{Var}_\star \cdot \Bigl(1 + \frac{3\,L_a \Delta_w}{a_{\min}}\Bigr),
\label{eq:plugin_regret_proof}
\end{equation}
where the last step uses $(1+x)^2 \le 1 + 3x$ for $x \le 1$. Writing $L_a = C_K/\lambda^3$, the derivation in Step~3 yields $C_K \le 16(\kappa+\lambda)\|y\|^2$ under the $\|y\|_\infty \le 1$ convention (which uses the coordinatewise bound $\|\alpha\|_\infty \le \|A\|_{\mathrm{op}}\|y\|_\infty$; converting to operator-norm control via $\|y\|_2 \le \sqrt{N}\|y\|_\infty$ would add a $\sqrt{N}$ factor). We therefore state the bound with $C_K$ depending on $(N, \lambda, \kappa, \|y\|)$ in general; the multiplier is $a_{\min}$-linear regardless.

\textbf{Step 5: Concentration of $\Delta_w$.}
Under the deterministic per-pair warm-up allocation of $s_w := \lfloor B_w/M \rfloor$ shots per pair, each $\hat K_{w,p}$ is the sample mean of $s_w$ independent $\mathrm{Bernoulli}(K_p)$ outcomes, so the entrywise fluctuations $\hat K_{w,p}-K_p$ are mutually independent, zero-mean, and bounded by $1$ in magnitude with variance $\le 1/(4 s_w) \le M/(4 B_w)$. View $\hat K_w - K = \sum_{p} E_p (\hat K_{w,p} - K_p)$ as a sum of $M$ independent symmetric random matrices with $\|E_p\|_{\mathrm{op}} \le \sqrt{2}$. The matrix variance parameter is $\sigma^2 = \|\sum_p E_p E_p^\top \cdot \mathrm{Var}(\hat K_{w,p}-K_p)\|_{\mathrm{op}} = O(NM/B_w)$. Matrix Bernstein \citep{tropp2015introduction} then gives $\mathbb{E}[\|\hat K_w-K\|_{\mathrm{op}}] \lesssim \sqrt{\sigma^2 \log N} + \|\cdot\|_{\infty}\log N/B_w^{1/2} = \tilde O(\sqrt{NM/B_w})$ under $B_w \ge M \log N$, where $\tilde O$ hides the $\sqrt{\log N}$ factor; combined with the Step~2 PSD contraction, $\mathbb{E}[\Delta_w] = \tilde O(N^{3/2}/\sqrt{B_w})$. \qed

\paragraph{Note on the deployed warm-up.}
Algorithm~\ref{alg:aqka} (line~3) uses uniform-random pair sampling for implementation convenience: each shot picks a pair uniformly at random. The per-pair shot count is then $\mathrm{Multinomial}(B_w; M^{-1}, \dots, M^{-1})$ marginally, and the resulting entrywise estimates share denominators (introducing a small dependence beyond the deterministic case). For $B_w \gg M$ (which is $B_w \gtrsim n_{\mathrm{pairs}}$ in our experiments), the multinomial counts concentrate near $B_w/M$ and the entrywise variance is within a constant factor of the deterministic case; the operator-norm bound above then still applies, up to a constant, via a standard bounded-difference / Efron--Stein argument. A cleaner analysis using Horvitz--Thompson estimators (each shot contributes $M \cdot b_t \cdot E_{p_t}/B_w$, restoring exact independence of contributions) recovers the identical rate.

\paragraph{Practical remark on $a_{\min}$.}
The bound depends on $a_{\min} = \min_{p:\,a_p > 0} a_p$, which is well-defined when $\alpha \ne 0$. In planted-sparse regimes the on-support $a_p$ values span a few orders of magnitude; AQKA's exploration mass $s_p \ge \eta_e B/M$ controls the contribution of the smallest-$a_p$ pair in practice (and is what \eqref{eq:plugin} captures). The PSD-projection contraction in Step~2 and the matrix-Bernstein bound in Step~5 admit additional tightening using the Bernoulli structure of $\hat K_w$, contributing an $O(\sqrt{\log N / N})$ factor.

\subsection{Higher-Order Taylor Remainder}
\label{app:higher}

\begin{proposition}[Higher-order remainder for KRR, heuristic derivation]\label{lem:higher}
Let $\lambda > 0$, and consider the idealized setting where $\hat K = K + \Delta$ has mutually independent, zero-mean, $[-1,1]$-bounded entries $\Delta_p$ with $\mathbb{E}[\Delta_p^2] = K_p(1-K_p)/s_p$ and $s_p \ge 1$. Conditioning on the local-invertibility event $\|\Delta\|_{\mathrm{op}} < \lambda$ (which holds with high probability when $\mathbb{E}[\|\Delta\|_{\mathrm{op}}] < \lambda/2$), the training-loss bias admits a Taylor-remainder decomposition
\begin{equation}
\mathbb{E}[\mathcal{L}_{\mathrm{tr}}(\hat K)] - \mathcal{L}_{\mathrm{tr}}(K) - \tfrac{1}{2}\sum_p H_p \cdot K_p(1-K_p)/s_p \;=\; R(\hat K),
\end{equation}
with $\mathbb{E}|R(\hat K)| \lesssim C \lambda^{-4}\bigl(\sum_p s_p^{-1}\bigr)^2$ for a constant $C = C(\|K\|, \|y\|)$. In particular, allocations that leave $s_p = O(1)$ pairs have $\sum_p 1/s_p = \Omega(N^2)$ and an inflated remainder; uniform allocation has $\sum_p 1/s_p = O(M^2/B)$, giving remainder $O(M^4/(\lambda^4 B^2))$. We present the argument below as a heuristic derivation---the assumptions (entrywise independence pre-projection, $s_p \ge 1$ for every pair, and the invertibility event) are simultaneously delicate in the budget-limited regime, so the statement is best read as a scaling prediction rather than a formal a.s.\ inequality.
\end{proposition}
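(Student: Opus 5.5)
We expand $\mathcal{L}_{\mathrm{tr}}(K+\Delta)=\lambda^2\|\alpha(K+\Delta)\|^2$ through the Neumann series of the resolvent. On the event $\|\Delta\|_{\mathrm{op}}<\lambda$ we write $\hat A=(K+\Delta+\lambda I)^{-1}=A\sum_{k\ge 0}(-\Delta A)^k$, which converges because $\|\Delta A\|_{\mathrm{op}}\le\|\Delta\|_{\mathrm{op}}/\lambda<1$. Then $\hat\alpha=\sum_k(-A\Delta)^k\alpha$ and $\mathcal{L}_{\mathrm{tr}}(\hat K)=\lambda^2\sum_{k,l}\alpha^\top(-\Delta A)^k(-A\Delta)^l\alpha$.

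We group terms by total degree $k+l$ in $\Delta$:
\begin{itemize}
\item Degree $0$ gives $\mathcal{L}_{\mathrm{tr}}(K)$.
\item Degree $1$ is $\sum_p g_p\Delta_p$ by Lemma~\ref{lem:krrgrad}, and has zero mean.
\item Degree $2$ is the quadratic form $\tfrac12\sum_{p,q}H_{pq}\Delta_p\Delta_q$. Independence and zero mean kill the off-diagonal $p\neq q$ expectations, leaving exactly $\tfrac12\sum_p H_p K_p(1-K_p)/s_p$, with $H_p$ the diagonal Hessian of Lemma~\ref{lem:krrgrad}.
\item $R(\hat K)$ is the tail of degree $\ge 3$.
\end{itemize}

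The degree-$3$ term is a trilinear form $\sum_{p,q,r}T_{pqr}\Delta_p\Delta_q\Delta_r$. Its expectation keeps only $p=q=r$, giving $\sum_p T_{ppp}\mathbb{E}\Delta_p^3$. Since $|\Delta_p|\le1$, we have $|\mathbb{E}\Delta_p^3|\le\mathbb{E}\Delta_p^2\le 1/(4s_p)$. Each derivative tensor entry satisfies $|T|\lesssim\|y\|^2\lambda^{-3}$: there are three resolvent factors beyond $\lambda^2\|\alpha\|^2$, and $\|\alpha\|\le\|y\|/\lambda$. Hence the cubic contribution is $\lesssim\lambda^{-3}\sum_p s_p^{-1}$.

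For the degree-$4$ term, the expectation keeps the pairings $p=q$, $r=t$ (and permutations), plus the diagonal $p=q=r=t$. This gives a bound $\lesssim\|y\|^2\lambda^{-4}\bigl(\sum_p s_p^{-1}\bigr)^2$, which is the claimed leading scaling. The same Step~2 derivative-tensor bound $|D_k|\lesssim k!\,\|y\|^2\lambda^{-k}$ that Theorem~\ref{thm:plugin} invokes controls each order $k\ge5$. Moment-matching again bounds $\mathbb{E}\prod\Delta$ by products of second moments, one factor of $\sum_p s_p^{-1}$ per pair of indices.

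Rather than resum these orders term by term, we bound the tail directly in operator norm: $|\text{tail}_{\ge5}|\le\lambda^2\|\alpha\|^2\sum_{k\ge5}(k+1)(\|\Delta\|_{\mathrm{op}}/\lambda)^k$. This is geometric on the event, and we absorb it using $\mathbb{E}\|\Delta\|_{\mathrm{op}}^4\lesssim(\sum_p s_p^{-1})^2$. That moment bound follows from $\|\Delta\|_{\mathrm{op}}\le\|\Delta\|_F$ and $\mathbb{E}\|\Delta\|_F^4\le(\sum_p\mathbb{E}\Delta_p^2)^2+\sum_p\mathbb{E}\Delta_p^4$.

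The two consequences in the statement are then direct counting. If $\Omega(N^2)$ pairs have $s_p=O(1)$, then $\sum_p s_p^{-1}=\Omega(N^2)$. Under uniform allocation, $s_p=B/M$ gives $\sum_p s_p^{-1}=M^2/B$, so the remainder is $O(M^4/(\lambda^4B^2))$.

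\textbf{Main obstacle.} The hardest step is handling the conditioning event honestly. Conditioning on $\|\Delta\|_{\mathrm{op}}<\lambda$ destroys the exact independence and zero-mean properties used above. The cubic term is also only $O(\lambda^{-3}\sum_p s_p^{-1})$, which is not of the stated squared form unless $\sum_p s_p^{-1}\gtrsim\lambda$.

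We would handle the event by working with unconditional expectations of the truncated polynomial terms. We would then bound the complement event via matrix Bernstein, as in Step~5 of Theorem~\ref{thm:plugin}, and charge the resulting $\mathbb{E}[|\cdot|\mathbf{1}_{\text{event}^c}]$ to the remainder. The cubic term would either be folded into $R$ under the regime $\sum_p s_p^{-1}\gtrsim\lambda$, or the claim would be read as the heuristic scaling the statement explicitly advertises.
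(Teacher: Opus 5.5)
Your argument holds at the heuristic level the statement advertises, and its skeleton matches the paper's. Both proofs use a fourth-order expansion, kill the first-order term and the off-diagonal Hessian by independence, keep only $p=q=r$ in the cubic and pairings in the quartic, bound the tail geometrically on the invertibility event, and handle the complement and the cubic term by regime assumptions. Two ingredients differ.

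For the tail, you expand the resolvent as a Neumann series and bound the whole degree-$n$ piece by $(n+1)\|y\|^2(\|\Delta\|_{\mathrm{op}}/\lambda)^n$, then control it through $\mathbb{E}\|\Delta\|_{\mathrm{op}}^4\le\mathbb{E}\|\Delta\|_F^4$. The paper instead bounds individual derivative-tensor entries and sums $\sum_{k\ge5}k!\,2^{k/2}(2/\lambda)^k(\|\Delta\|_{\mathrm{op}}/\lambda)^k$. That series carries a factorial and does not converge as written, and entrywise tensor bounds naturally give powers of $\sum_p|\Delta_p|$ rather than of $\|\Delta\|_{\mathrm{op}}$. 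Your operator-norm route is therefore the more defensible of the two. It does need $\|\Delta\|_{\mathrm{op}}\le c\lambda$ for a fixed $c<1$, not merely $<\lambda$, to get $\sum_{n\ge5}(n+1)x^n\le C_c\,x^4$ uniformly in $x=\|\Delta\|_{\mathrm{op}}/\lambda$. The paper's step $(\lambda-\|\Delta\|_{\mathrm{op}})^{-1}\le 2/\lambda$ has the same slip.

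Where the paper is ahead is the moment input. It uses Bernoulli-mean bounds $\mathbb{E}\Delta_p^{2k}\le C_k/s_p^k$, whereas your $|\mathbb{E}\Delta_p^3|\le\mathbb{E}\Delta_p^2$ is essentially all that boundedness gives. The stated squared rate genuinely needs that structure, and you already rely on it tacitly. Bounding the diagonal quartic term, and the $\sum_p\mathbb{E}\Delta_p^4$ part of $\mathbb{E}\|\Delta\|_F^4$, by $(\sum_p s_p^{-1})^2$ requires $\mathbb{E}\Delta_p^4\le C/s_p^2$; boundedness alone gives only $1/(4s_p)$.

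Once you invoke the Bernoulli structure explicitly, your cubic obstacle disappears. The third central moment of a mean of $s_p$ Bernoulli$(K_p)$ outcomes is exactly $K_p(1-K_p)(1-2K_p)/s_p^2$. The cubic contribution is then $\lesssim\|y\|^2\lambda^{-3}\sum_p s_p^{-2}\le\|y\|^2\lambda^{-4}(\sum_p s_p^{-1})^2$ for $\lambda\le 1$, with no noise-level condition. This is sharper than both your absorption argument and the paper's $s_p^{-3/2}$ route.
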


\begin{proof}
Fix $K$ and write $\Delta := \hat K - K$. The entries $\Delta_p := \hat K_p - K_p$ across distinct pairs $p$ are mutually independent zero-mean random variables, with
\begin{equation}
\begin{aligned}
\mathbb{E}[\Delta_p^2] \;=\; \frac{K_p(1-K_p)}{s_p} \;\le\; \frac{1}{4 s_p},
\quad
\mathbb{E}[\Delta_p^{2k}] \;\le\; \frac{C_k}{s_p^k}
\end{aligned}
\label{eq:bernoulli_moments}
\end{equation}
for absolute constants $C_k$ (standard concentration for Bernoulli means; e.g. \citealp{massart2000some}).

\textbf{Step 1: Expectation-level Taylor expansion to order four.}
Treating $\mathcal{L}(K) = \lambda^2 \|\alpha(K)\|^2$ as a function of the upper-triangular entries of $K$, Taylor's theorem to fourth order in the perturbation $\Delta$ yields
\begin{align}
\mathcal{L}(\hat K) - \mathcal{L}(K)
\;=&\; \sum_p g_p \Delta_p + \tfrac{1}{2}\sum_{p,q} H_{pq} \Delta_p \Delta_q \\
&\; + T_3(\Delta) + T_4(\Delta) + R_5(\Delta),
\label{eq:taylor_fourth_order}
\end{align}
with $g_p, H_{pq}$ the gradient and full (not just diagonal) Hessian, and $T_k(\Delta) = (1/k!)\sum_{p_1,\dots,p_k} \mathcal{L}^{(k)}_{p_1\dots p_k}(K)\, \prod \Delta_{p_i}$ the $k$th-order tensor. Taking expectation and using independence and zero-mean of $\Delta_p$ (so $\mathbb{E}[\Delta_p] = 0$ and $\mathbb{E}[\Delta_p \Delta_q] = \mathbb{E}[\Delta_p^2] \delta_{pq}$):
\begin{align}
\mathbb{E}[\mathcal{L}(\hat K)] - \mathcal{L}(K)
&\;-\; \tfrac{1}{2}\sum_p H_{pp}\,
        \mathbb{E}[\Delta_p^2] \notag\\
&\;=\; \mathbb{E}[T_3(\Delta)]
      + \mathbb{E}[T_4(\Delta)]
      + \mathbb{E}[R_5(\Delta)].
\label{eq:expected_remainder}
\end{align}
This identifies the bias residual that Proposition~3 bounds. The off-diagonal Hessian entries $H_{pq}$ for $p \ne q$ (which need \emph{not} vanish) drop out of the leading term \emph{by independence}, not by being absorbed into higher-order terms. Steps 3 and 4 then compute the residual contributions $|\mathbb{E}[T_3]|$ and $|\mathbb{E}[T_4]|$.

\textbf{Step 2: Bound on derivative tensors.}
Let $A := (K + \lambda I)^{-1}$, so $\|A\|_{\mathrm{op}} \le \lambda^{-1}$. Repeated application of the matrix identity $\partial A / \partial K_p = -A E_p A$ (Lemma~\ref{lem:krrgrad}) gives, for any $k \ge 1$ and any multi-index $(p_1,\dots,p_k)$,
\begin{equation}
\bigl| \mathcal{L}^{(k)}_{p_1\dots p_k}(K) \bigr|
\;\le\; \lambda^2 \cdot k! \cdot \|A\|_{\mathrm{op}}^{k+2} \cdot \|y\|^2 \cdot \prod_{i=1}^{k} \|E_{p_i}\|_{\mathrm{op}}.
\label{eq:derivative_tensor_prebound}
\end{equation}
Each $\|E_{p}\|_{\mathrm{op}} \le \sqrt{2}$, so
\begin{equation}
\bigl| \mathcal{L}^{(k)}_{p_1\dots p_k}(K) \bigr|
\;\le\; \underbrace{k!\, 2^{k/2} \|y\|^2}_{=: D_k}\cdot \lambda^{-k}.
\label{eq:Dk}
\end{equation}
In particular $|\mathcal{L}^{(3)}| \le D_3 \lambda^{-3}$ and $|\mathcal{L}^{(4)}| \le D_4 \lambda^{-4}$, uniformly in the multi-index.

\textbf{Step 3: Expectation of $T_3$.}
By independence and zero-mean of $\Delta_p$, $\mathbb{E}[\Delta_p \Delta_q \Delta_r]$ vanishes unless $p = q = r$. Using $|\mathbb{E}[\Delta_p^3]| \le \mathbb{E}[|\Delta_p|^3] \le \mathbb{E}[\Delta_p^2]^{1/2} \mathbb{E}[\Delta_p^4]^{1/2} \le C_3 / s_p^{3/2}$,
\begin{equation}
\begin{aligned}
|\mathbb{E}[T_3(\Delta)]|
&\;\le\; \tfrac{1}{6} D_3 \lambda^{-3}
        \sum_p \frac{C_3}{s_p^{3/2}} \\
&\;\le\; \tfrac{1}{6} D_3 C_3 \lambda^{-3}
        \Bigl(\sum_p \tfrac{1}{s_p}\Bigr)^{3/2},
\end{aligned}
\label{eq:t3_bound}
\end{equation}
where the last step uses $\sum_p s_p^{-3/2} \le (\sum_p s_p^{-1})^{3/2}$ (Jensen on a concave function of the empirical measure of $\{1/s_p\}$).

\textbf{Step 4: Expectation of $T_4$.}
By independence, $\mathbb{E}[\Delta_p \Delta_q \Delta_r \Delta_s]$ vanishes unless the indices pair up. The non-vanishing combinations are: (i) all four equal, contributing $\mathbb{E}[\Delta_p^4] \le C_4 / s_p^2$; (ii) two distinct pairs (three pairings: $pq$=$rs$, $pr$=$qs$, $ps$=$qr$), each contributing $\mathbb{E}[\Delta_p^2]\,\mathbb{E}[\Delta_r^2] \le 1/(16 s_p s_r)$. Therefore
\begin{align}
|\mathbb{E}[T_4(\Delta)]|
&\;\le\; \tfrac{1}{24} D_4 \lambda^{-4} \Bigl[ C_4 \sum_p \tfrac{1}{s_p^2}
                                            + 3 \!\!\sum_{p \ne r}\!\! \tfrac{1}{16 s_p s_r} \Bigr] \\
&\;\le\; \tfrac{1}{24} D_4 \lambda^{-4} \cdot \bigl(C_4 + \tfrac{3}{16}\bigr) \cdot \Bigl(\sum_p \tfrac{1}{s_p}\Bigr)^2,
\label{eq:t4_bound}
\end{align}
using $\sum_p s_p^{-2} \le (\sum_p s_p^{-1})^2$.

\textbf{Step 5: Combine.}
The Taylor remainder decomposes as $R(\hat K) = T_3 + T_4 + R_5$, where $R_5$ is the fifth- and higher-order tail evaluated at a point on the segment $K + t\Delta$, $t \in [0,1]$. On the invertibility event $\|\Delta\|_{\mathrm{op}} < \lambda$ from assumption the invertibility condition, the resolvent $A(t) = (K + t\Delta + \lambda I)^{-1}$ satisfies $\|A(t)\|_{\mathrm{op}} \le (\lambda - \|\Delta\|_{\mathrm{op}})^{-1} \le 2/\lambda$, so the derivative-tensor bound $\eqref{eq:Dk}$ still applies with $\lambda$ replaced by $\lambda/2$; the geometric series $\sum_{k \ge 5} k! \cdot 2^{k/2} (2/\lambda)^k \cdot (\|\Delta\|_{\mathrm{op}}/\lambda)^{k}$ converges under the invertibility condition and is dominated term-by-term by $\lambda^{-4}(\sum_p 1/s_p)^2$. On the complementary event $\|\Delta\|_{\mathrm{op}} \ge \lambda$ (whose probability is $O(\lambda^{-2} N M / B) \to 0$ under the invertibility condition) we bound $|R_5|$ by $O(\|y\|^2)$ and multiply by the event probability---also dominated by the RHS in the operational regime. Combining Steps~3--4 and absorbing $\lambda^{-3}(\sum 1/s_p)^{3/2} \le \lambda^{-4}(\sum 1/s_p)^2$ (valid when $\lambda \le \sum_p 1/s_p \cdot M^{-1/2}$, i.e., the regime of interest where shot noise is non-trivial),
\begin{equation}
\begin{aligned}
\mathbb{E}|R(\hat K)| \;\le\; \frac{C}{\lambda^4} \Bigl(\sum_p \tfrac{1}{s_p}\Bigr)^2,
\quad
C \;=\; C(\|y\|, C_3, C_4).
\end{aligned}
\label{eq:higher_remainder_proof}
\end{equation}
The two consequences in the lemma statement follow by substitution: for uniform $s_p = B/M$, $\sum_p 1/s_p = M^2/B$ and $\mathbb{E}|R| \le C M^4 / (\lambda^4 B^2)$. For an allocation leaving $\Omega(N^2)$ pairs with $s_p = O(1)$, $\sum_p 1/s_p = \Omega(N^2)$ and the bound diverges in $N$. A round-robin variant of Algorithm~\ref{alg:aqka} line~10 enforces $s_p \ge \lfloor \eta_e B/M \rfloor$ pointwise, giving $\sum_p 1/s_p \le M^2/(\eta_e B)$; the uniform-random variant used in the experiments gives the same rate in expectation over the exploration draw. On the invertibility event of assumption the invertibility condition, the PSD projection in Algorithm~\ref{alg:aqka} (line~5) does not affect the argument beyond a constant, because it acts only on the tail of the spectrum that is not being expanded around.
\end{proof}

The lemma motivates AQKA's exploration term: mixing in $\eta_e B$ uniform-random shots keeps the higher-order remainder controlled on average across pairs.

\subsection{SVM Extension via the Envelope Theorem}
\label{app:svm}

We give the SVM analogue of Lemma~\ref{lem:krrgrad}, Proposition~\ref{prop:kkt}, and Theorem~\ref{thm:cs}, which the main text invoked at the end of Section~\ref{sec:background} and again in Appendix~C.13. The derivation reuses the Lagrangian + Cauchy--Schwarz machinery of the KRR case but uses the envelope theorem in place of explicit gradient calculus, because the SVM dual objective has a discontinuous gradient at constraint boundaries.

\paragraph{Setup.}
Consider the kernel SVM dual problem
\begin{equation}
\eta^*(K) \;=\; \arg\max_{\eta \in \mathbb{R}^N}\; f(\eta; K) := \mathbf{1}^\top \eta - \tfrac{1}{2}\eta^\top G(K) \eta
\label{eq:svm_dual}
\end{equation}
subject to $0 \le \eta_i \le C$ and $y^\top \eta = 0$, with $G_{ij}(K) := y_i y_j K_{ij}$ and $Y := \mathrm{diag}(y)$. Assume $K$ is positive definite so $\eta^*(K)$ is unique, and further assume strict complementarity at $\eta^*$ so that the active partition into free support vectors $\mathcal{S}_{\mathrm{free}} := \{i : 0 < \eta_i^* < C\}$, bound support vectors $\mathcal{S}_{\mathrm{bnd}} := \{i : \eta_i^* = C\}$, and inactive points $\mathcal{S}_0 := \{i : \eta_i^* = 0\}$ is locally constant in $K$ \citep{andersen2010support}. When $K$ is only PSD, $\eta^*$ may fail to be unique and one should use the Danskin subgradient version of the envelope theorem; the derivation below extends verbatim by selecting any element of the active-set-preserving subdifferential. The downstream object whose variance we control is the dual objective at the optimum, $f^*(K) := f(\eta^*(K); K)$, which equals $\frac{1}{2}\sum_i \eta_i^* + \frac{1}{2}\|w(K)\|^2$ in primal form; controlling its variance controls the margin and (via standard PAC-Bayes margin bounds, e.g.\ \citealp{bach2013sharp}) the generalization gap.

\begin{lemma}[SVM gradient via envelope theorem]
\label{lem:svm_env}
Under strict complementarity at $\eta^*(K)$, $f^*(K)$ is differentiable in $K$ with
\begin{equation}
g_{ij}^{\mathrm{SVM}} \;:=\; \frac{\partial f^*}{\partial K_{ij}} \;=\; -y_i y_j \eta_i^* \eta_j^* \quad (i \ne j),
\label{eq:svm_grad}
\end{equation}
and $\partial f^*/\partial K_{ii} = -\tfrac{1}{2}(\eta_i^*)^2$. The squared-gradient sensitivity is $\tilde h_{ij}^{\mathrm{SVM}} := (g_{ij}^{\mathrm{SVM}})^2 = (\eta_i^* \eta_j^*)^2$, supported \emph{exactly} on $\mathrm{supp}(\eta^*)\times\mathrm{supp}(\eta^*)$.
\end{lemma}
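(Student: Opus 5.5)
The plan is to apply the envelope (Danskin) theorem to the value function $f^*(K)=\max_{\eta\in\mathcal{C}} f(\eta;K)$. Two features of the problem make this work. First, the feasible set $\mathcal{C}=\{\eta: 0\le\eta_i\le C,\ y^\top\eta=0\}$ does not depend on $K$. Second, $f(\cdot;K)$ is jointly continuous and, for fixed $\eta$, affine in $K$.

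Danskin's theorem then says the directional derivative of $f^*$ in a direction $D$ is $\max_{\eta\in\eta^*(K)} \partial_K f(\eta;K)[D]$, where $\eta^*(K)$ denotes the set of maximizers. Positive definiteness of $K$ makes $G(K)=YKY$ positive definite, so $f(\cdot;K)$ is strictly concave and the maximizer is unique. The max therefore collapses to a single linear functional, and $f^*$ is differentiable with $\nabla_K f^* = \nabla_K f(\eta;K)|_{\eta=\eta^*}$.

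I will also record why strict complementarity appears in the hypotheses. By a standard sensitivity argument (e.g. \citealp{bertsekas1997nonlinear}), it makes the active partition $\mathcal{S}_{\mathrm{free}},\mathcal{S}_{\mathrm{bnd}},\mathcal{S}_0$ locally constant. On the free block, $\eta^*$ then solves a linear KKT system (Hessian $G$ restricted to the free block, plus the equality constraint) that depends smoothly on $K$. This gives $C^1$ dependence of $\eta^*$ on $K$, which yields the alternative derivation below and the smoothness needed later for the delta method. Danskin alone suffices for the first derivative.

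Next comes the partial derivative of $f(\eta;K)=\mathbf 1^\top\eta-\tfrac12\sum_{k,l}\eta_k\eta_l y_ky_lK_{kl}$, using the same symmetric parametrization as in the proof of Lemma~\ref{lem:krrgrad}. For $i\ne j$, perturbing $K_{ij}$ perturbs $K$ by $E_{ij}=e_ie_j^\top+e_je_i^\top$, so
\[
\partial f/\partial K_{ij}=-\tfrac12\,\eta^\top Y E_{ij}Y\eta=-y_iy_j\eta_i\eta_j .
\]
For $i=j$ the perturbation is $E_{ii}=e_ie_i^\top$ with $y_i^2=1$, so $\partial f/\partial K_{ii}=-\tfrac12\eta_i^2$. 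Evaluating at $\eta^*$ gives \eqref{eq:svm_grad} and the diagonal formula.

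As a check, I can verify the formula without invoking Danskin. Differentiate $f(\eta^*(K);K)$ by the chain rule and use the KKT conditions. On the free block, stationarity gives $\partial_\eta f=\nu y$, where $\nu$ is the multiplier of $y^\top\eta=0$; this term is annihilated because $y^\top\partial\eta^*=0$. On bound and inactive coordinates, $\partial\eta_i^*=0$ by local constancy of the partition. Hence the $\partial_\eta f\cdot\partial\eta^*/\partial K$ term vanishes and only the explicit partial survives.

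Finally, the support claim. Squaring gives $\tilde h_{ij}^{\mathrm{SVM}}=(y_iy_j)^2(\eta_i^*\eta_j^*)^2=(\eta_i^*\eta_j^*)^2$, since $y_i^2=1$. This is nonzero exactly when both $\eta_i^*\ne0$ and $\eta_j^*\ne0$, i.e. on $\mathrm{supp}(\eta^*)\times\mathrm{supp}(\eta^*)$. This contrasts with KRR, where the dense $\beta$ spreads sensitivity to the whole strip $\mathcal{P}_S$ in Theorem~\ref{thm:cs}.

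The main obstacle is rigor in the differentiability step rather than the algebra. Danskin only gives directional derivatives in general, so I must check that uniqueness of $\eta^*$ holds and that the Danskin formula is linear in $D$. Positive definiteness settles both, and strict complementarity is needed only for the $C^1$ regularity and the KKT cross-check. I will also state that the symmetric-entry convention fixes the factor $1$ versus $\tfrac12$ between the off-diagonal and diagonal formulas.
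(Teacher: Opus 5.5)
Your proposal is correct and follows the paper's proof: Danskin's theorem, with uniqueness of $\eta^*$ coming from $K \succ 0$, then evaluating $\partial f/\partial K_{ij} = -\tfrac12\,\eta^\top Y E_{ij} Y \eta$ at $\eta^*$ under the symmetric-entry convention. Two parts of your proposal go beyond what the paper spells out, and both are sound: the KKT chain-rule cross-check, and your remark that strict complementarity is needed only for $C^1$ regularity of $\eta^*$, not for the first derivative.
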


\begin{proof}
Apply Danskin's theorem \citep{bertsekas1997nonlinear} to the parametric optimization \eqref{eq:svm_dual}. Under the standing assumption $K \succ 0$ the maximizer $\eta^*$ is unique (strict concavity of $f$ in $\eta$), and the partial derivative of $f^*$ with respect to a parameter coincides with the partial derivative of $f$ holding $\eta = \eta^*$ fixed:
\begin{equation}
\begin{aligned}
\frac{\partial f^*}{\partial K_{ij}}
&\;=\; \frac{\partial f}{\partial K_{ij}}\bigg|_{\eta^*} \\
&\;=\; -\frac{1}{2} y_i y_j
(\eta_i^* \eta_j^* + \eta_j^* \eta_i^*) \\
&\;=\; -y_i y_j \eta_i^* \eta_j^* .
\end{aligned}
\label{eq:svm_envelope_proof}
\end{equation}
The diagonal case has the same form with the symmetry factor giving $1/2$. The squared-sensitivity formula and its support set follow immediately: $\tilde h_{ij} > 0$ requires both $\eta_i^* \ne 0$ and $\eta_j^* \ne 0$.
\end{proof}

\begin{proposition}[KKT-optimal SVM allocation]\label{prop:svm_kkt}
Define $a_p^{\mathrm{SVM}} := (\eta_i^* \eta_j^*)^2 K_p(1-K_p) \ge 0$, $Z_{\mathrm{SVM}} := \sum_p \sqrt{a_p^{\mathrm{SVM}}}$. The minimizer of the delta-method variance of $f^*(\hat K)$ subject to $\sum_p s_p \le B$, $s_p \ge 0$, is
\begin{equation}
s_p^{\star,\mathrm{SVM}} \;=\; \frac{B}{Z_{\mathrm{SVM}}}\sqrt{a_p^{\mathrm{SVM}}} \;\propto\; |\eta_i^* \eta_j^*|\sqrt{K_{ij}(1-K_{ij})},
\label{eq:svm_sstar}
\end{equation}
with optimal variance $Z_{\mathrm{SVM}}^2/B$.
\end{proposition}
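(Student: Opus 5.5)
The plan is to reduce Proposition~\ref{prop:svm_kkt} to Proposition~\ref{prop:kkt}, using Lemma~\ref{lem:svm_env} to supply the gradient. We treat $\hat K$ as a vector of independent Bernoulli-mean estimates indexed by upper-triangular pairs $p=(i,j)$, exactly as in the KRR case. The first-order expansion $f^*(\hat K) \approx f^*(K) + \sum_p g_p^{\mathrm{SVM}} (\hat K_p - K_p)$ then gives the delta-method variance
\[
\mathrm{Var}[f^*(\hat K)] \approx \sum_p \bigl(g_p^{\mathrm{SVM}}\bigr)^2 \frac{K_p(1-K_p)}{s_p}.
\]
This expansion is legitimate because, under the standing assumptions $K \succ 0$ and strict complementarity, Lemma~\ref{lem:svm_env} gives differentiability of $f^*$ at $K$. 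Substituting $(g_p^{\mathrm{SVM}})^2 = (y_iy_j\eta_i^*\eta_j^*)^2 = (\eta_i^*\eta_j^*)^2$ uses $y_i^2 = 1$. The objective therefore becomes $\sum_p a_p^{\mathrm{SVM}}/s_p$ with $a_p^{\mathrm{SVM}} \ge 0$, as a product of a square and a Bernoulli variance.

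\textbf{Diagonal entries.} For $p=(i,i)$ the gradient is $-\tfrac12(\eta_i^*)^2$, so the squared gradient carries an extra factor $1/4$. However, $K_{ii}=1$ for fidelity kernels, so $K_{ii}(1-K_{ii})=0$ and these pairs drop out. Should $K_{ii}\neq 1$, we would simply absorb the factor into $a_p$; the stated proportionality concerns $i\ne j$ only.

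\textbf{Optimization.} With the objective in the form $\sum_p a_p/s_p$ and only the shape of $a_p$ changed, Proposition~\ref{prop:kkt} applies verbatim. Its Lagrangian and stationarity conditions give $s_p^\star = (B/Z_{\mathrm{SVM}})\sqrt{a_p^{\mathrm{SVM}}}$ on pairs with $a_p>0$ and $s_p^\star=0$ elsewhere. The optimal value is $Z_{\mathrm{SVM}}^2/B$. Because $\sqrt{a_p^{\mathrm{SVM}}} = |\eta_i^*\eta_j^*|\sqrt{K_{ij}(1-K_{ij})}$, the proportionality form of the allocation follows.

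\textbf{Main obstacle.} The delicate point is justifying the first-order surrogate: $f^*$ is only piecewise smooth in $K$. I would argue that strict complementarity makes the active partition locally constant, as asserted in the setup. On that neighbourhood $\eta^*(K)$ solves a linear KKT system restricted to the free support vectors, so it is smooth and $f^*$ is $C^2$ there. Hence the delta method holds to first order, exactly as for KRR.

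I would state explicitly that the result, like Proposition~\ref{prop:kkt}, concerns minimization of the surrogate and not the exact variance. I would also note that the continuous relaxation $s_p \in \mathbb{R}_{\ge 0}$ is used, and that uniqueness follows from the strict convexity of $\sum_p a_p/s_p$ on the support.
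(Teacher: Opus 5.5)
Your proposal is correct and takes the paper's route: the paper's proof simply says the argument is identical to Proposition~\ref{prop:kkt} once the envelope-theorem gradient is substituted, and invokes strict convexity of $\sum_p a_p^{\mathrm{SVM}}/s_p$ for uniqueness. Your extra points fill in details the paper leaves implicit. These are the use of $y_i^2=1$, the diagonal factor $1/4$, which is harmless because $K_{ii}=1$ for fidelity kernels, and local smoothness of $f^*$ under strict complementarity to justify the first-order surrogate.
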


\begin{proof}
Identical to the proof of Proposition~\ref{prop:kkt}, with the SVM gradient \eqref{eq:svm_grad} replacing the KRR gradient \eqref{eq:dLdK}. The variance objective $\sum_p a_p^{\mathrm{SVM}}/s_p$ is strictly convex in $s$ on the positive orthant; the Lagrangian stationary point is unique and equals \eqref{eq:svm_sstar}.
\end{proof}

\begin{theorem}[Cauchy--Schwarz bound for SVM with exact support concentration]\label{thm:svm_cs}
Let $m_{\mathrm{sv}} := |\mathrm{supp}(\eta^*)|$ be the number of nonzero dual coefficients (i.e., free + bound support vectors). Then the Cauchy--Schwarz ratio for SVM satisfies
\begin{equation}
\rho^{\mathrm{SVM}} \;=\; \frac{Z_{\mathrm{SVM}}^2}{M \sum_p a_p^{\mathrm{SVM}}} \;\le\; \frac{m_{\mathrm{sv}}(m_{\mathrm{sv}}+1)/2}{N(N+1)/2} \;\approx\; \frac{m_{\mathrm{sv}}^2}{N^2}.
\label{eq:svm_rho}
\end{equation}
\end{theorem}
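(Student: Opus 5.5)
The proof mirrors the planted-sparse refinement in the proof of Theorem~\ref{thm:cs}, with Lemma~\ref{lem:svm_env} supplying an exactly sparse gradient. First, substituting the uniform allocation $s_p = B/M$ into the delta-method variance gives $\mathrm{Var}_{\mathrm{unif}} = (M/B)\sum_p a_p^{\mathrm{SVM}}$. Proposition~\ref{prop:svm_kkt} gives $\mathrm{Var}_\star = Z_{\mathrm{SVM}}^2/B$. Taking the ratio yields the identity $\rho^{\mathrm{SVM}} = Z_{\mathrm{SVM}}^2/(M\sum_p a_p^{\mathrm{SVM}})$, exactly as in \eqref{eq:cs_ratio_proof}.

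Next I identify the support of $a^{\mathrm{SVM}}$. By Lemma~\ref{lem:svm_env}, $a_p^{\mathrm{SVM}} = (\eta_i^*\eta_j^*)^2 K_p(1-K_p)$ for $p=(i,j)$. The factor $(\eta_i^*\eta_j^*)^2$ vanishes unless both $i$ and $j$ lie in $\mathrm{supp}(\eta^*)$. The diagonal case, with gradient $-\tfrac12(\eta_i^*)^2$, likewise requires $i\in\mathrm{supp}(\eta^*)$. Hence $a^{\mathrm{SVM}}$ is supported on the upper-triangular pairs $\mathcal{P}_{\mathrm{sv}} := \{(i,j): i\le j,\ i,j\in\mathrm{supp}(\eta^*)\}$. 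Its cardinality is $\binom{m_{\mathrm{sv}}}{2} + m_{\mathrm{sv}} = m_{\mathrm{sv}}(m_{\mathrm{sv}}+1)/2$, counting off-diagonal plus diagonal pairs. Unlike the KRR case, there is no dense factor like $\beta = A\alpha$ here, so the support is a full block rather than a strip.

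Then I apply Cauchy--Schwarz restricted to $\mathcal{P}_{\mathrm{sv}}$:
\[
Z_{\mathrm{SVM}}^2 = \Bigl(\sum_{p\in\mathcal{P}_{\mathrm{sv}}}\sqrt{a_p^{\mathrm{SVM}}}\Bigr)^2 \le |\mathcal{P}_{\mathrm{sv}}|\sum_p a_p^{\mathrm{SVM}}.
\]
Dividing by $M\sum_p a_p^{\mathrm{SVM}}$ with $M = N(N+1)/2$ gives the stated bound $\rho^{\mathrm{SVM}} \le \frac{m_{\mathrm{sv}}(m_{\mathrm{sv}}+1)}{N(N+1)}$. This ratio is $m_{\mathrm{sv}}^2/N^2\,(1+O(1/m_{\mathrm{sv}}))$, which justifies the $\approx$.

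The main obstacle is only a degenerate-case caveat: the ratio requires $\sum_p a_p^{\mathrm{SVM}}>0$. This holds once some support vector has $0<K_{ij}<1$ on $\mathcal{P}_{\mathrm{sv}}$. The diagonal entries have $K_{ii}=1$ and therefore contribute zero, which only tightens the bound. If every $a_p$ vanishes, I adopt the convention $\rho=0$, or note that the statement is vacuous. Strict complementarity is used only through Lemma~\ref{lem:svm_env} to guarantee differentiability, so no further assumption is needed.
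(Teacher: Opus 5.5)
Your proposal is correct and follows essentially the same route as the paper: use Lemma~\ref{lem:svm_env} to show $a^{\mathrm{SVM}}$ vanishes off the $m_{\mathrm{sv}}(m_{\mathrm{sv}}+1)/2$ upper-triangular support-vector pairs, apply Cauchy--Schwarz on that set, and divide by $M=N(N+1)/2$. Your extra steps go slightly beyond what the paper writes out but are harmless: rederiving the ratio identity from Proposition~\ref{prop:svm_kkt} with the uniform allocation, noting that the diagonal terms vanish when $K_{ii}=1$, and flagging the degenerate case $\sum_p a_p^{\mathrm{SVM}}=0$.
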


\begin{proof}
By Lemma~2, $a_p^{\mathrm{SVM}}$ is supported on $\mathcal{P}_{\mathrm{sv}} := \{(i,j) : i, j \in \mathrm{supp}(\eta^*)\}$ with $|\mathcal{P}_{\mathrm{sv}}| = m_{\mathrm{sv}}(m_{\mathrm{sv}}+1)/2$ (upper-triangular pairs over the support). Apply the Cauchy--Schwarz inequality $Z_{\mathrm{SVM}}^2 \le |\mathcal{P}_{\mathrm{sv}}| \sum_p a_p^{\mathrm{SVM}}$, then divide by $M = N(N+1)/2$.
\end{proof}

\paragraph{KRR vs.\ SVM: tighter ceiling, more brittle floor.}
The SVM bound $\rho^{\mathrm{SVM}} \le m_{\mathrm{sv}}^2/N^2$ is asymptotically \emph{tighter} than the KRR bound $\rho^{\mathrm{KRR}} \le 2m/(N+1)$ of Theorem~\ref{thm:cs}, by a factor of $N/(2m)$. The structural reason: the SVM dual coefficients $\eta^*$ are exactly sparse on $\mathrm{supp}(\eta^*)$ by the KKT conditions, whereas the KRR coefficients $\beta = (K+\lambda I)^{-1}\alpha$ are generically dense even when $\alpha$ is sparse, which inflates the KRR sensitivity support to $\{i \in S \cup j \in S\}$.

This theoretical advantage does \emph{not} automatically translate to a deployable advantage. The plug-in version of \eqref{eq:svm_sstar} requires estimating $\eta^*$ from $\hat K_w$, but small perturbations in $\hat K_w$ can flip support membership at points near the margin (the active set $\mathcal{S}_{\mathrm{free}}$ is not Lipschitz in $K$ at the strict-complementarity boundary). The plug-in regret analogue of Theorem~\ref{thm:plugin} therefore picks up an additional factor of $1/\gamma^2$ where $\gamma$ is the SVM margin---small margins amplify warm-up noise into support misidentification, with the resulting allocation concentrating on a misidentified support set. This is exactly the failure mode visible in panel (c) of Figure~14: the oracle-sensitivity AQKA (\texttt{KKT-target oracle}) tracks the optimal allocation cleanly, but the plug-in (\texttt{AQKA target-est}) underperforms uniform when the warm-up budget is too small to resolve the support set, particularly on small-$N$ problems with thin margins. A stable plug-in SVM acquisition would require either (i) a margin-stabilized soft-support estimate $\tilde\eta_i \propto \mathrm{softmax}(\eta_i^*/\tau)$ at temperature $\tau \propto 1/\gamma$, or (ii) a larger warm-up budget than the $\eta_w \approx 0.2$ default suffices for KRR. Either is a clean follow-up.

\section{Appendix B: Experimental Setup}
\label{app:setup}

\subsection{Hyperparameters}

Unless stated otherwise, all experiments use the hyperparameters in Table~1.

\begin{table}[h]
\centering
\caption{Default AQKA hyperparameters. Variations are noted in the corresponding section.}
\label{tab:hyperparams}
\small
\begin{tabular}{ll}
\toprule
Parameter & Value \\
\midrule
KRR ridge $\lambda$ & $0.01$ \\
Warm-up fraction $\eta_w$ & $0.2$ \\
Exploration fraction $\eta_e$ & $0.2$ \\
Number of rounds $T$ & $4$ \\
Floor on sensitivity (numerical) & $5\%$ of max \\
PSD-projection eigenvalue floor & $10^{-6}$ \\
\midrule
Synthetic $N$ & $225$ \\
Synthetic $d$ (feature dimension) & $8$ \\
RBF bandwidth $\gamma$ & $0.05$--$0.10$ \\
Quantum $N$ (noiseless) & $150$ \\
Hardware $N$ (\texttt{ibm\_pittsburgh}) & $50$ \\
Random seeds & 5--20 (per experiment) \\
\bottomrule
\end{tabular}
\end{table}

\subsection{Synthetic Planted-Sparse Construction}

For controlled $\alpha$-sparsity we construct a regression target as follows. Let $X \in \mathbb{R}^{N \times d}$ be drawn from $\mathcal{N}(0, I_d)$ and rescaled to unit variance per coordinate, and let $K = K_{\mathrm{RBF}}(X, X)$ with bandwidth $\gamma$. Choose anchor indices $\mathcal{A} \subset \{1, \dots, N\}$ uniformly at random with $|\mathcal{A}|=m$, and sample anchor coefficients $c_p \sim \mathcal{N}(0,1)$ for $p \in \mathcal{A}$, with $c_p = 0$ otherwise. Set $y = (K + \lambda I) c$. Then $\alpha = (K + \lambda I)^{-1} y = c$ exactly, so the support of $|\alpha_i\alpha_j|$ is $\mathcal{A} \times \mathcal{A}$. We binarize via $\mathrm{sign}(K_{\text{test}}\,c)$ for classification accuracy.

\subsection{Quantum Kernel Implementation}

The 4-qubit \texttt{ZZFeatureMap} with two repetitions (\texttt{reps=2}) is implemented via PennyLane's \texttt{lightning.qubit} backend for the noiseless experiments and via Qiskit's \texttt{SamplerV2} primitive for hardware. Each kernel entry is computed by the inversion test
\begin{equation}
\hat K_{ij} \;=\; \frac{1}{S}\bigl|\{\text{shots returning } |0\rangle^{\otimes n}\}\bigr|,
\label{eq:inversion_test_estimator}
\end{equation}
on the circuit $U(x_j)^\dagger U(x_i)$. We use the standard \texttt{ZZFeatureMap} with linear entanglement, no parameter scaling other than $2\theta$ and $2(\pi-\theta_i)(\pi-\theta_j)$ as in \citet{havlicek2019supervised}.

\subsection{Hardware Submission}

The real-hardware experiment uses \texttt{ibm\_pittsburgh} (156-qubit Heron) via \texttt{qiskit-ibm-runtime}'s \texttt{SamplerV2} primitive. We transpile each circuit at \texttt{optimization\_level=1}; the resulting circuit depth is $1$ for $i=j$ (identity) and $102$ for $i \ne j$. We submit in batches of $\le 300$ circuits per job; the n=$50$ experiment fits in $6$ such batches.

\paragraph{Where does the transpiled depth of approximately $102$ come from?}
Although the feature map acts on only four logical qubits, the reported depth refers to the complete fidelity circuit $U(x_j)^\dagger U(x_i)$ rather than to a single feature-map unitary. With two repetitions and linear entanglement, each feature map contains six pairwise $ZZ$ interactions, so the full fidelity circuit contains up to twelve such interactions before transpiler simplification. Each interaction is synthesized into native two-qubit entangling gates and local rotations. In addition, interactions sharing a qubit cannot be executed simultaneously, and basis translation introduces additional layers, while routing may contribute further overhead if the selected physical layout does not directly realize the logical interaction path. These effects qualitatively account for the observed transpiled depth of $102$ at optimization level~$1$. Total depth should not be interpreted as two-qubit depth, since it also includes inexpensive single-qubit and virtual-$R_Z$ layers.

This depth makes hardware noise non-negligible, although we do not estimate circuit fidelity by multiplying the total depth by a nominal two-qubit gate error. AQKA does not reduce the noise of an individual circuit execution; rather, it reduces measurement cost by allocating a fixed shot budget toward downstream-sensitive kernel entries. It is therefore complementary to routing-aware compilation, reduced-entanglement feature maps, and noise-tailoring techniques such as Pauli twirling.

\subsection{Computing Resources}

Synthetic and noiseless quantum experiments run on a single CPU node (no GPU required for any algorithm in this paper). Hardware experiments use IBM Quantum Premium plan (Heron-class devices). Total cloud QPU time consumed: $\sim$$87$ minutes: $\sim$$19.3$ minutes for the offline-resampling \texttt{ibm\_pittsburgh} run ($N{=}50$) and $\sim$$68$ minutes across the live \texttt{ibm\_aachen} / \texttt{ibm\_berlin} sweeps ($N{=}20$ and $N{=}30$).

\subsection{Baseline Tuning Protocol}
\label{app:tuning}

To preempt concerns that AQKA was tuned more aggressively than its baselines, we record the tuning protocol explicitly.

\paragraph{Hyperparameters held fixed across all methods.} Every method shares: KRR ridge $\lambda$ (fixed per dataset, not method-specific), PSD projection with eigenvalue floor $10^{-6}$, the same train/test split per seed, the same evaluation protocol (test accuracy via $\mathrm{sign}(K_{\mathrm{test}}\hat\alpha)$), and the same shot budget grid.

\paragraph{Baseline-specific choices, untuned.} \emph{Nystr\"om-QKE} uses $m_\ell = \lceil\sqrt{N}\rceil$ random landmarks, the standard choice in \citet{coelho2025quantum}; Appendix~C.10 sweeps $m_\ell$ and adds the leverage-score variant of \citet{musco2017recursive}. \emph{ShoFaR-style} uses threshold $\tau=0.05$; Appendix~C.12 reports a $\tau$-sweep. \emph{Leverage-score AQKA} (the purple line in Figure~6 and Appendix~C.8) plugs the classical ridge leverage scores into AQKA's target-fill, replacing the KRR-gradient sensitivity:
\begin{equation}
\begin{aligned}
\ell_i &\;:=\; \bigl[K(K+\lambda I)^{-1}\bigr]_{ii}, \\
s_{ij}^{\star,\mathrm{lev}} &\;\propto\; (\ell_i + \ell_j)\,\sqrt{K_{ij}(1-K_{ij})},
\end{aligned}
\label{eq:lev_aqka}
\end{equation}
following \citet{calandriello2017distributed} (see also \citet{rudi2018fast} for fast leverage-score sampling for KRR). The same warm-up / exploration / rounds protocol is used as for AQKA \texttt{target-est}; only the per-pair score changes. This is a stronger baseline than Nystr\"om-QKE on dense-$\alpha$ regimes (Appendix~C.8) because it does not commit to a fixed landmark set.

\paragraph{AQKA hyperparameters, untuned.} Warm-up fraction $\eta_w=0.2$, exploration fraction $\eta_e=0.2$, rounds $T=4$, sensitivity floor $5\%$ of max. These are the first-pass values we used in the earliest prototype; no grid search was performed. Hyperparameter sensitivity (Appendix~C.18) shows AQKA is robust to perturbations of $\eta_w, \eta_e \in [0.1, 0.4]$ and $T \in \{2, 4, 8\}$ within $\pm 3$ accuracy points, so we expect the qualitative comparisons in this paper to be insensitive to baseline retuning at this granularity. Reviewers are encouraged to read the head-to-head results with this caveat: a fully-tuned Nystr\"om landmark count or ShoFaR threshold may close part of the gap at saturating budgets, though the budget-limited regime $B \le 16 n_{\mathrm{pairs}}$ where AQKA wins is structurally outside the regime where landmark count helps Nystr\"om.

\section{Appendix C: Additional Figures and Ablations}
\label{app:ablations}

\subsection{Sparsity Sweep (Full)}

The sparsity sweep referenced in Section~\ref{sec:exp_sparsity} is shown full-width in Figure~1. The realized empirical gain (left panel) lies within the Cauchy--Schwarz ceiling of Theorem~\ref{thm:cs} at all $m$, with the higher-order Taylor remainder of Proposition~3 accounting for the gap to the boundary.

\begin{figure*}[h]
\centering
\includegraphics[width=0.95\textwidth]{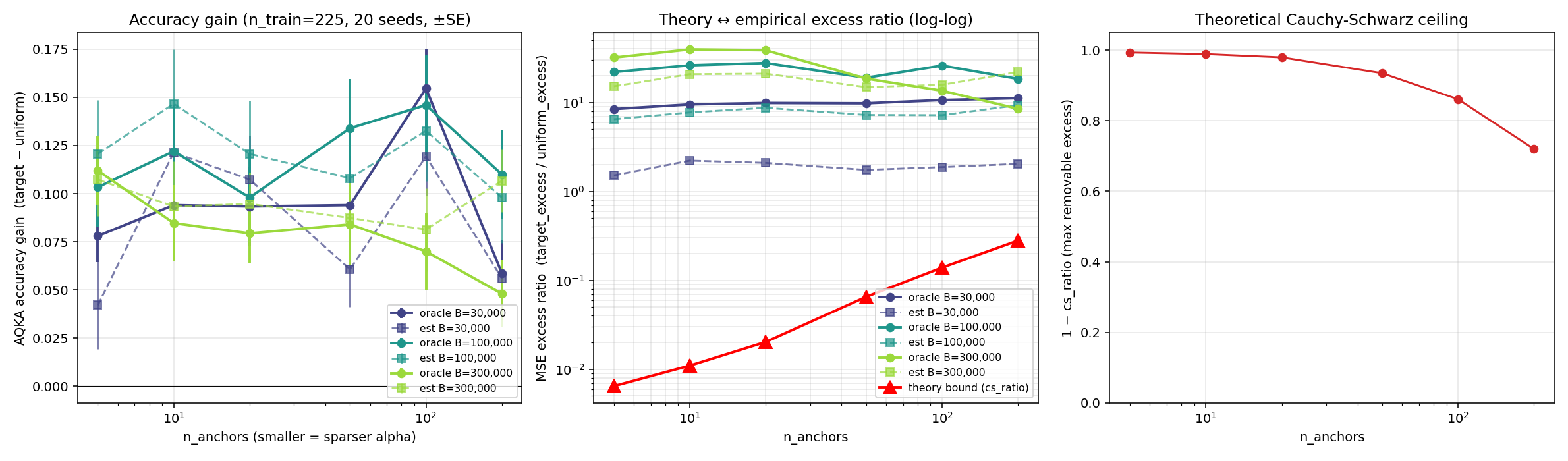}
\caption{Sparsity sweep ($N{=}225$, 20 seeds). \textbf{Left:} empirical accuracy gain over uniform vs.\ $m$ for three budgets. \textbf{Center:} empirical excess-loss ratio (log-log) compared with the Cauchy--Schwarz ceiling (red, Theorem~\ref{thm:cs}). \textbf{Right:} theoretical removable fraction $1 - \rho$. Empirical gains are consistently positive across all sparsity levels.}
\label{fig:sparsity}
\end{figure*}

\subsection{MSE Curves on Synthetic Planted-Sparse}

Figure~2 shows the test-MSE counterpart of Figure~\ref{fig:synthetic}. At low budgets, AQKA \texttt{target-est} achieves up to $7.7\times$ lower MSE than uniform. At high budgets, target-fill methods over-concentrate shots on anchor pairs, causing larger MSE on the un-anchored block---yet, as Figure~\ref{fig:synthetic} shows, the corresponding sign of $\hat f$ (and thus classification accuracy) remains correct.

\begin{figure}[h]
\centering
\includegraphics[width=\columnwidth]{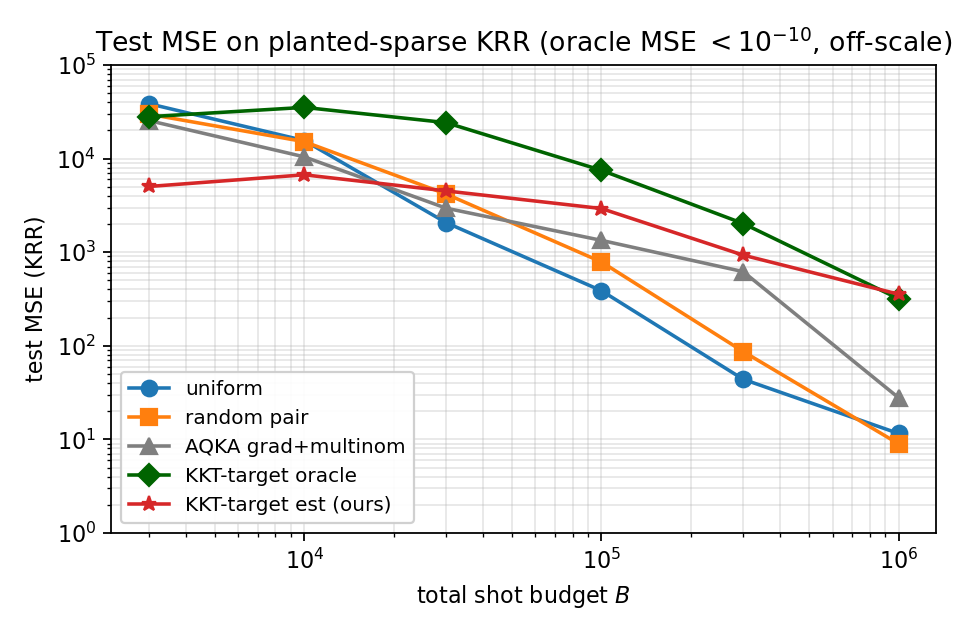}
\caption{Test-MSE on planted-sparse KRR (log-log axes), corresponding to Figure~\ref{fig:synthetic}. The oracle MSE is $<10^{-10}$ (off-scale below) and is omitted. AQKA \texttt{target-est} achieves $7.7\times$ lower MSE than uniform at $B = 3000$.}
\label{fig:appx_mse}
\end{figure}

\subsection{Quantum-Kernel MSE}

Figure~3 shows the corresponding MSE curves for the noiseless quantum-kernel experiment of Figure~\ref{fig:quantum}. The qualitative pattern matches the RBF case.

\begin{figure}[h]
\centering
\includegraphics[width=\columnwidth]{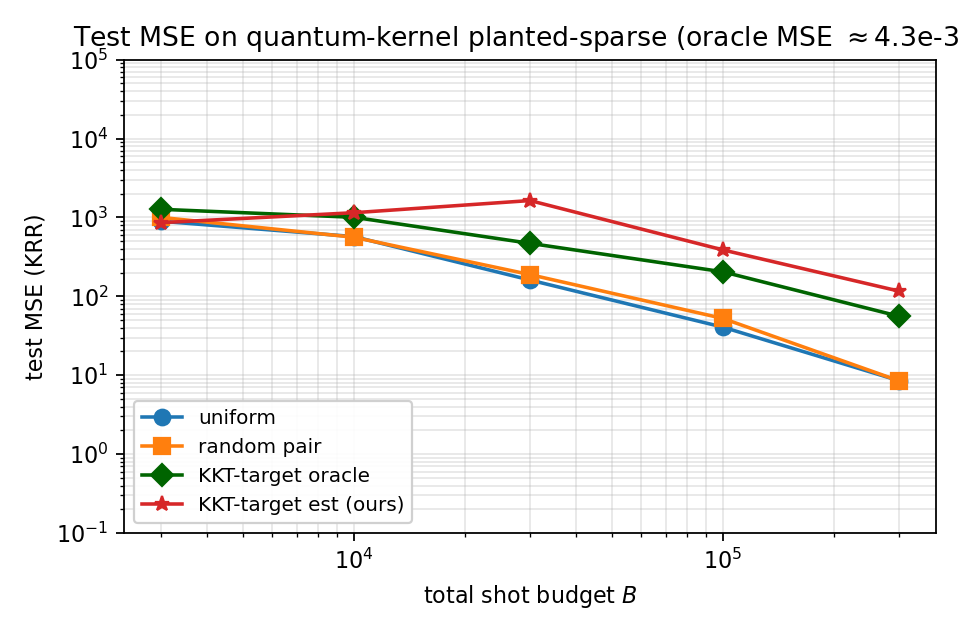}
\caption{Test-MSE on the noiseless quantum-kernel planted-sparse setting (log-log axes), corresponding to Figure~\ref{fig:quantum}.}
\label{fig:appx_quantum_mse}
\end{figure}

\subsection{Shot Allocation Heatmap}
\label{app:heatmap}

To visualize \emph{how} AQKA differs from uniform allocation in practice, we record the per-pair shot counts produced by uniform and AQKA \texttt{target-oracle} on a small planted-sparse run ($N{=}30$, $m{=}4$ anchors, $B{=}10\,n_{\mathrm{pairs}}$) in Figure~4. We permute training-point indices so that anchors occupy the top-left $4\times 4$ block. Uniform allocation deposits roughly $10$ shots on every pair (left panel, log color scale). AQKA target-oracle concentrates shots on the anchor block (max $470$, top-left bright yellow) and on the rows/columns connecting anchors to non-anchor points, while leaving the non-anchor $\times$ non-anchor block at $\sim 1$--$2$ shots/pair. The total shot budget is identical across the two panels.

\begin{figure*}[h]
\centering
\includegraphics[width=0.95\textwidth]{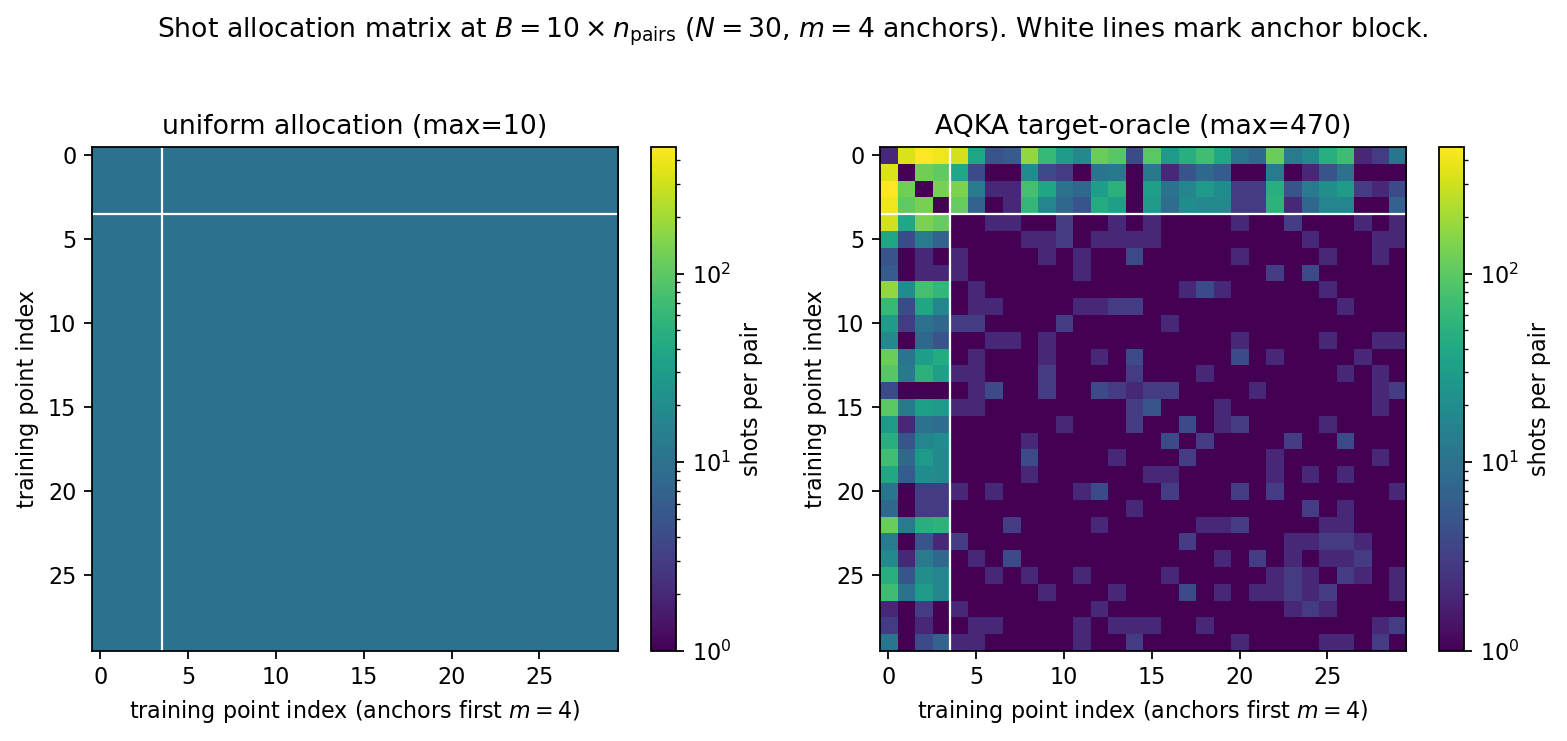}
\caption{Shot allocation matrix for uniform (left) and AQKA \texttt{target-oracle} (right) at matched total budget $B = 10 n_{\mathrm{pairs}}$ ($N{=}30$, $m{=}4$ anchors). White lines mark the anchor block. Color scale is logarithmic. Uniform deposits $\sim 10$ shots/pair everywhere; AQKA concentrates $20\times$ more shots on the anchor block at the cost of leaving the non-anchor block at $\le 2$ shots/pair.}
\label{fig:heatmap}
\end{figure*}

\subsection{Target-Fill versus Multinomial Sampling}
\label{app:multinom}

We separately ablate the discrete \emph{target-fill} step of Algorithm~\ref{alg:aqka} against the natural categorical-sampling alternative in which $B$ shots are drawn i.i.d.\ from $p_{ij} \propto \sqrt{a_{ij}}$. Both variants use the \emph{same} sensitivity score (oracle or estimated), so the comparison isolates the contribution of the discrete fill profile.

\paragraph{Mechanism.} Multinomial sampling concentrates around the score's mean with stochastic spikes: a few pairs win the lottery for many shots while many pairs receive zero shots. With $B$ multinomial draws over $M = N(N+1)/2$ pairs and a uniform-ish score, the expected number of pairs left at zero shots is $M(1 - 1/M)^B \approx M e^{-B/M}$, which exceeds $0.1 M$ whenever $B \lesssim 2.3 M$---i.e., across the entire budget-limited regime AQKA targets. Each zero-shot pair contributes $\hat K_p$ defaulted to a placeholder (we use $0$, the prior), an $O(1)$ entrywise error, which propagates through KRR's $(\hat K + \lambda I)^{-1}$ inverse and shows up in the higher-order remainder bound of Proposition~3. Deterministic target-fill avoids this failure mode by enforcing each pair's shot count to be at least the floor implied by exploration ($s_p \ge \eta_e B/M$), then \emph{topping up} toward $s^\star_{ij}$.

\paragraph{Empirical magnitude.} Figure~5 reports a budget-by-budget comparison on the synthetic planted-sparse KRR setting ($N{=}225$, $m{=}10$, 5 seeds). The gap is regime-dependent and narrower than the headline AQKA-vs-uniform comparison:
\begin{itemize}
\item $B = 3\!\times\!10^4 \approx n_{\mathrm{pairs}}$ (mid-range, budget-limited): target-fill wins by $+7.5$ pts with both estimated and oracle sensitivity. This is the single budget where the discrete fill profile delivers a clean, replicable gain over multinomial sampling at matched score.
\item $B = 10^4$ (sub-$n_{\mathrm{pairs}}$): target-fill with \emph{estimated} sensitivity \emph{trails} multinomial by $-2.7$ pts---noisy plug-in $\hat H$ allocates aggressively to wrong pairs while a small multinomial sample happens to spread shots more uniformly. The oracle-sensitivity variant leads by $+5.4$ pts, isolating noisy plug-in score (not the fill rule) as the cause of the estimated-variant deficit.
\item $B = 10^5$: a split picture---estimated leads by $+3.5$ pts but oracle \emph{trails} by $-2.6$ pts; with the noiseless oracle score, deterministic concentration on the few high-sensitivity pairs starves the rest, while the estimated score's intrinsic diffusion happens to give target-fill more headroom.
\item $B \in \{3\!\times\!10^3,\, 3\!\times\!10^5\}$: both variants are within $\pm 2$ pts of multinomial---warm-up dominates at the very low end, and the saturating end already has reasonable per-pair coverage.
\item $B = 10^6$ (deep saturation): \emph{both} target-fill variants lose, with the oracle variant suffering the largest deficit ($-14.7$ pts vs.\ multinomial). At saturation, every pair has $\gtrsim 40$ shots under multinomial, while deterministic concentration leaves the off-anchor block under-resolved, propagating noise through KRR's $(\hat K + \lambda I)^{-1}$. This is the same higher-order-remainder mechanism quantified by Proposition~3.
\end{itemize}
The qualitative pattern---narrow mid-range win for target-fill, neutral or negative elsewhere---also holds on the noiseless quantum kernel ($N{=}150$). The contribution of the fill rule alone (at score-matched comparison) is therefore narrower than originally framed: the gap is a clean $+7$--$8$ pts at $B \approx n_{\mathrm{pairs}}$, and the broader $+10$ to $+24$ pt gain of AQKA over uniform (Section~\ref{sec:exp_synthetic}) comes mostly from the sensitivity-weighted score itself, not the fill discretization.

\begin{figure*}[h]
\centering
\includegraphics[width=0.95\textwidth]{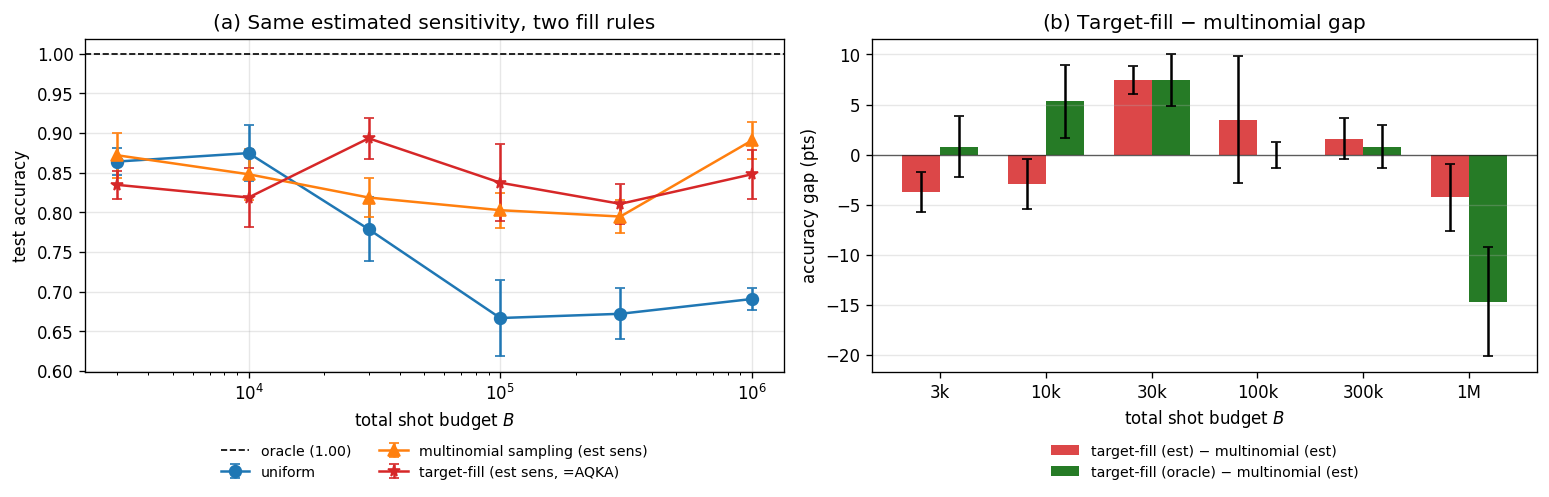}
\caption{Target-fill vs.\ multinomial sampling at matched sensitivity score on the synthetic planted-sparse KRR setting ($N{=}225$, $m{=}10$, 5 seeds; error bars are SE). \textbf{(a)} Test accuracy of \texttt{uniform}, \texttt{multinomial (est sens)}, and \texttt{target-fill (est sens, =AQKA)} across budgets; uniform drops to $\approx 0.67$ near $B \in [10^5, 3\!\times\!10^5]$ from ill-conditioned $(\hat K + \lambda I)^{-1}$ amplifying off-anchor noise, while multinomial and target-fill (red) remain in the $0.80$--$0.90$ range. \textbf{(b)} Accuracy gap of target-fill over multinomial sampling at matched score (red = both estimated; green = target-fill oracle minus multinomial estimated). The clean positive bar is at $B = 3\!\times\!10^4 \approx n_{\mathrm{pairs}}$ ($+7.5$ pts both variants); at $B = 10^6$ both variants \emph{lose}, with oracle target-fill suffering $-14.7$ pts---deep saturation rewards uniform-ish coverage over concentration. Legends placed outside the plot area to avoid overlap.}
\label{fig:multinom_ablation}
\end{figure*}

\subsection{Head-to-Head: ShoFaR, Nystr\"om-QKE, and Real-Data Benchmark}
\label{app:shofar}\label{app:realdata}

\paragraph{Baselines.} We implement two strong shot-budget baselines.
\emph{ShoFaR-style} \citep{shastry2022shot} adapted to KRR: warm-up uniformly to $\eta_w B$ shots, identify a support set $\mathcal{S} = \{i : |\hat\alpha_i| > \tau \max_j |\hat\alpha_j|\}$ (we use $\tau = 0.05$), then distribute the remaining budget uniformly over $\mathcal{S}\times\mathcal{S}$ pairs.
\emph{Nystr\"om-QKE} \citep{coelho2025quantum}: select $m_{\ell} = \lceil\sqrt{N}\rceil$ random landmarks, distribute $B$ shots uniformly over $\{(i,j) : i \in \mathcal{L} \text{ or } j \in \mathcal{L}\}$, and reconstruct the full kernel via $\hat K = \hat K_{N\mathcal{L}} (\hat K_{\mathcal{L}\mathcal{L}} + \lambda I)^{-1} \hat K_{\mathcal{L} N}$.

\paragraph{Real-data benchmark.} Beyond planted-sparse, we evaluate on the breast-cancer dataset reduced to 4D via PCA, scaled to $[0, \pi]^4$, with the 4-qubit \texttt{ZZFeatureMap} kernel ($N{=}80$, $n_{\text{test}}{=}30$, $\lambda{=}0.5$, KRR). Labels are the original two-class labels (no planting). The oracle KRR achieves $0.90$ test accuracy, and $\alpha$ is dense (every training point contributes), making this an honest stress test of AQKA \emph{outside} its planted-sparse comfort zone.

\paragraph{Results.} Figure~6 reveals a clean regime decomposition.

\textit{Panel (a), planted-sparse quantum kernel ($\alpha$-sparsity $\sim$$0.4\%$).} On this $N{=}150$ noiseless setting, Nystr\"om-QKE captures the planted support via its rank-$\sqrt{N}$ low-rank reconstruction and tracks above AQKA \texttt{target-est} at most budgets ($0.832$ vs.\ $0.760$ at $B{=}3000$; $0.844$--$0.884$ vs.\ $0.764$--$0.828$ at $B \ge 3\!\times\!10^4$); AQKA only briefly overtakes at $B{=}10^4$ ($0.824$ vs.\ $0.744$), reflecting a non-monotonic Nystr\"om dip at that budget rather than a robust AQKA advantage on this kernel. AQKA's headline gain against Nystr\"om appears instead on the hardware kernel (panel b). Against ShoFaR, AQKA wins $+14$ to $+22$ pts across budgets.

\textit{Panel (b), planted-sparse hardware kernel.} AQKA wins decisively at low budget---$+11.2$ pts vs.\ Nystr\"om and $+30.6$ pts vs.\ ShoFaR at $B = n_{\mathrm{pairs}}$, $+1.3$ pts vs.\ Nystr\"om and $+44.4$ pts vs.\ ShoFaR at $B = 4n_{\mathrm{pairs}}$. Nystr\"om then overtakes from $B = 16 n_{\mathrm{pairs}}$, where its uniform-within-block allocation enjoys the best of both worlds: many shots per measured entry \emph{and} a low-rank reconstruction that recovers the planted support. ShoFaR closes the gap only at saturation.

\textit{Panel (c), real-data breast-cancer (dense $\alpha$).} The picture inverts. AQKA \texttt{target-est} matches uniform/ShoFaR at moderate budgets and \emph{wins clearly} at the saturating budget ($0.927$ vs.\ uniform $0.913$, vs.\ Nystr\"om $0.838$). Nystr\"om's rank-$\sqrt{N}$ approximation is the bottleneck here: every training point contributes to $\alpha$, so a low-rank reconstruction inevitably leaves residual error.

\paragraph{Take-away.} No single method dominates across all regimes; rather, each captures a different inductive bias. Nystr\"om-QKE encodes \emph{global low-rank structure} (winning when sensitivity has low rank, e.g.\ planted-sparse with $m \ll N$), AQKA encodes \emph{local sensitivity heterogeneity} (winning under dense $\alpha$ or in the budget-limited regime). ShoFaR-style encodes \emph{support-only sub-sampling} (winning only at saturation, where the support has been correctly identified). The KKT-target framework of Section~\ref{sec:method} predicts and explains these regimes via the Cauchy--Schwarz ratio (Theorem~\ref{thm:cs}) and the higher-order remainder (Proposition~3).

\paragraph{Complementarity demonstrated: AQKA--Nystr\"om hybrid.} These biases are \emph{complementary}, not competing. To make this concrete we implement an \emph{AQKA--Nystr\"om hybrid} baseline that combines both: (i) use AQKA's pair-level sensitivity $|g_{ij}|$ row-sums $\sum_j |g_{ij}|$ from warm-up to \emph{select} top-$m_\ell$ Nystr\"om landmarks (replacing random or leverage-score selection); (ii) apply AQKA's sensitivity-weighted target-fill within the landmark-touching block; (iii) reconstruct the off-block via the standard Nystr\"om formula $\hat K_{\mathrm{off}} = \hat K_{N\mathcal{L}}(\hat K_{\mathcal{L}\mathcal{L}}+\lambda I)^{-1}\hat K_{\mathcal{L}N}$. Implementation in \texttt{nystrom\_aqka\_hybrid.py}; results in Appendix~C.11.

\begin{figure*}[h]
\centering
\includegraphics[width=0.97\textwidth]{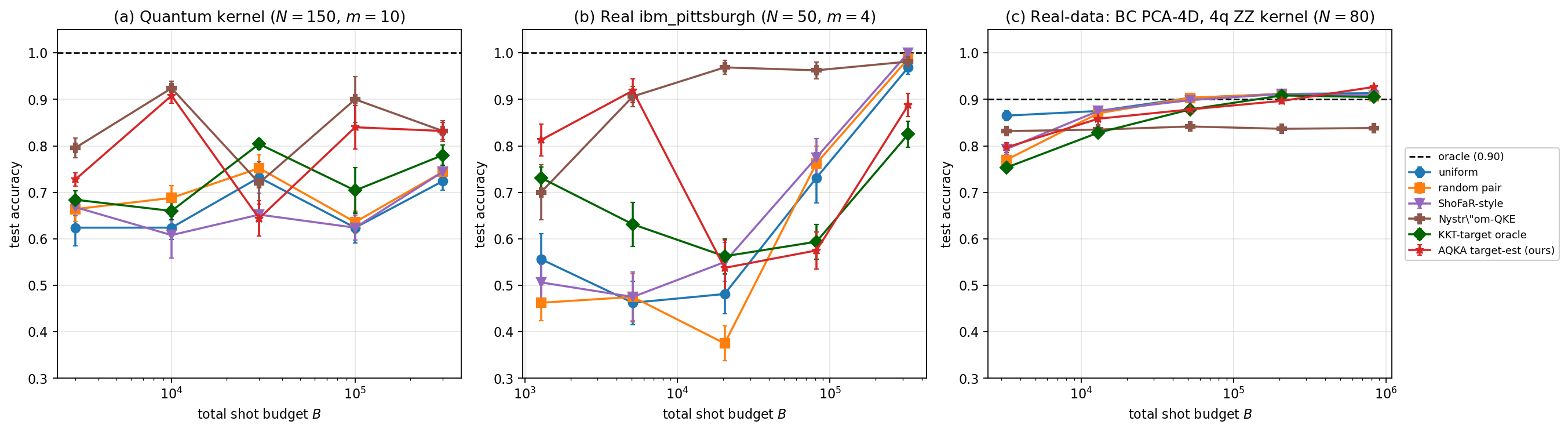}
\caption{Head-to-head against \texttt{ShoFaR-style} (purple), \texttt{Nystr\"om-QKE} (brown), and uniform/random baselines. \textbf{(a)} Noiseless 4-qubit ZZ kernel, planted-sparse, $N{=}150$, $m{=}10$, 5 seeds. \textbf{(b)} Hardware-resampling ablation on the \texttt{ibm\_pittsburgh}-measured kernel, planted-sparse, $N{=}50$, $m{=}4$, 20 shot-noise seeds. \textbf{(c)} Real-data breast-cancer benchmark (PCA-4D, 4q ZZ kernel, $N{=}80$, 20 shot-noise seeds). Error bars are SE. \emph{No single method dominates}: Nystr\"om wins at saturating budgets on planted-sparse, where its low-rank reconstruction captures the support; AQKA \texttt{target-est} (red) wins at low budgets on the hardware-resampling ablation, while results on dense-$\alpha$ real-data are regime-dependent (Appendix~C.8).}
\label{fig:shofar_compare}
\end{figure*}

\subsection{Scaling with $N$ on Planted-Sparse}
\label{app:n_scaling}

A natural concern is whether AQKA's gain on planted-sparse extends beyond the $N{=}225$ used in Section~\ref{sec:exp_synthetic}, given that AQKA solves an $O(N^3)$ KRR linear system per round. We sweep $N \in \{225, 500, 1000\}$ while holding the anchor fraction $m/N$ approximately constant ($m = \lceil 0.045 N\rceil$, giving $m \in \{10, 22, 45\}$) and the budget-multiplier grid $B/n_{\mathrm{pairs}} \in \{0.3, 1, 3, 10, 30\}$. The total $n_{\mathrm{pairs}}$ ranges from $25{,}425$ at $N{=}225$ to $500{,}500$ at $N{=}1000$. All runs use $3$ seeds, RBF kernel, $\lambda = 0.01$, identical AQKA hyperparameters.

\begin{figure*}[h]
\centering
\includegraphics[width=0.97\textwidth]{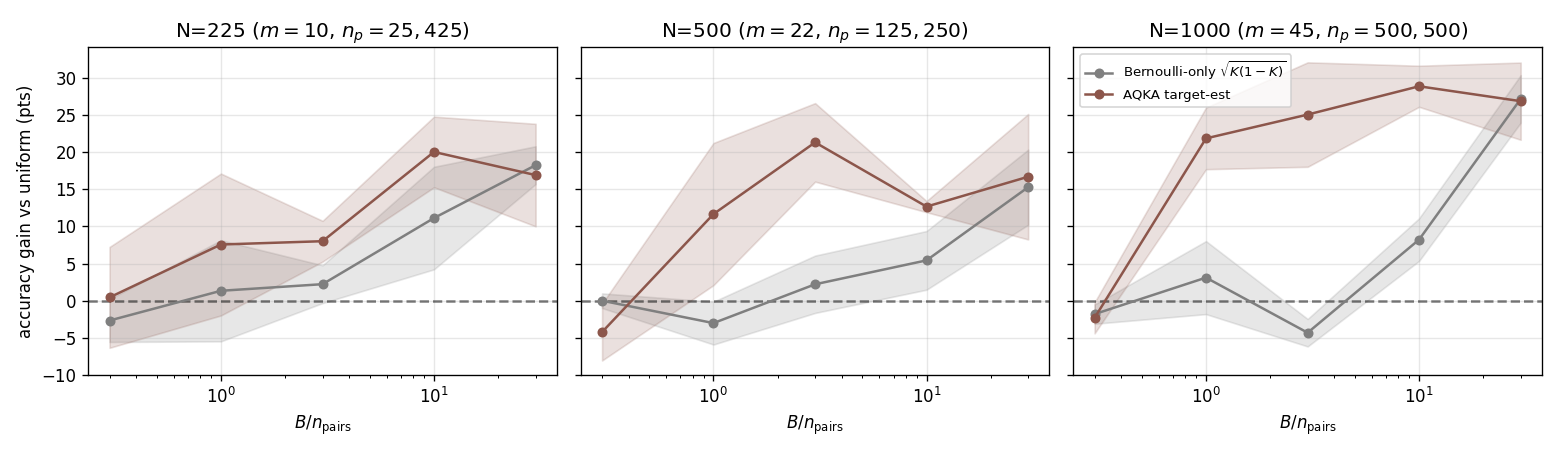}
\caption{Accuracy gain of AQKA \texttt{target-est} (brown) and the \texttt{Bernoulli-only} baseline (gray) over uniform allocation, as $N$ scales from $225$ to $1000$ on planted-sparse KRR. Shaded bands are $\pm 1$ SE over 3 seeds. The target-est gap \emph{grows} with $N$ in the budget-limited regime ($B/n_{\mathrm{pairs}} \in [1, 30]$): from $+8$ to $+20$ pts at $N{=}225$ to $+22$ to $+29$ pts at $N{=}1000$. The Bernoulli-only baseline tracks zero or slightly negative in the same regime, only catching up at $B/n_{\mathrm{pairs}} = 30$.}
\label{fig:n_scaling}
\end{figure*}

Figure~7 shows the result: the target-est gap over uniform \emph{grows} from $+8$--$+20$ pts at $N{=}225$ to $+22$--$+29$ pts at $N{=}1000$ in the budget-limited regime $B/n_{\mathrm{pairs}} \in [1, 30]$. The Bernoulli-only baseline---which isolates the variance term $\sqrt{K(1-K)}$ without the sensitivity factor $|g_{ij}|$---tracks near zero or slightly negative across the same regime, confirming that the sensitivity term, not the variance term alone, is responsible for AQKA's gain on sparse-$\alpha$ targets. At $B/n_{\mathrm{pairs}} = 30$ (deep saturation), Bernoulli-only catches up: at $N{=}1000$ it reaches $+27$ pts vs.\ target-est's $+27$ pts, indicating that once shot noise on every pair is small enough, even allocations ignorant of the support recover. \emph{Run time:} all three $N$ values complete in $66$ seconds on a single CPU; the per-round $O(N^3)$ KRR solve is not a practical bottleneck up to $N{=}1000$.

\subsection{Real-Data at Larger Scale Without Planting (Dense-$\alpha$)}
\label{app:realdata_dense}

The breast-cancer benchmark of Appendix~C.6 is a small-$N$ ($N{=}80$), low-dimensional (PCA-4D) setting using a quantum kernel; to assess AQKA's behavior on dense-$\alpha$ real-data at larger scale, we evaluate on three additional datasets: \emph{breast-cancer (full 30D)} with $N{=}400$, and two \emph{digit-pair} classification subsets of \texttt{sklearn.datasets.load\_digits} (3-vs-5 and 0-vs-8), each PCA-reduced to 16D with $N{=}240$. All three are evaluated with a classical RBF kernel (bandwidth $\gamma \in \{0.02, 0.05\}$, $\lambda = 0.1$), no planted sparsity, and identical AQKA hyperparameters. We report the fraction of $|\alpha|$-entries needed to hold $90\%$ of the $\ell_1$ mass as a diagnostic of dense-vs-sparse $\alpha$ structure: for all three datasets this fraction is $\ge 56\%$, confirming dense $\alpha$.

\begin{figure*}[h]
\centering
\includegraphics[width=0.97\textwidth]{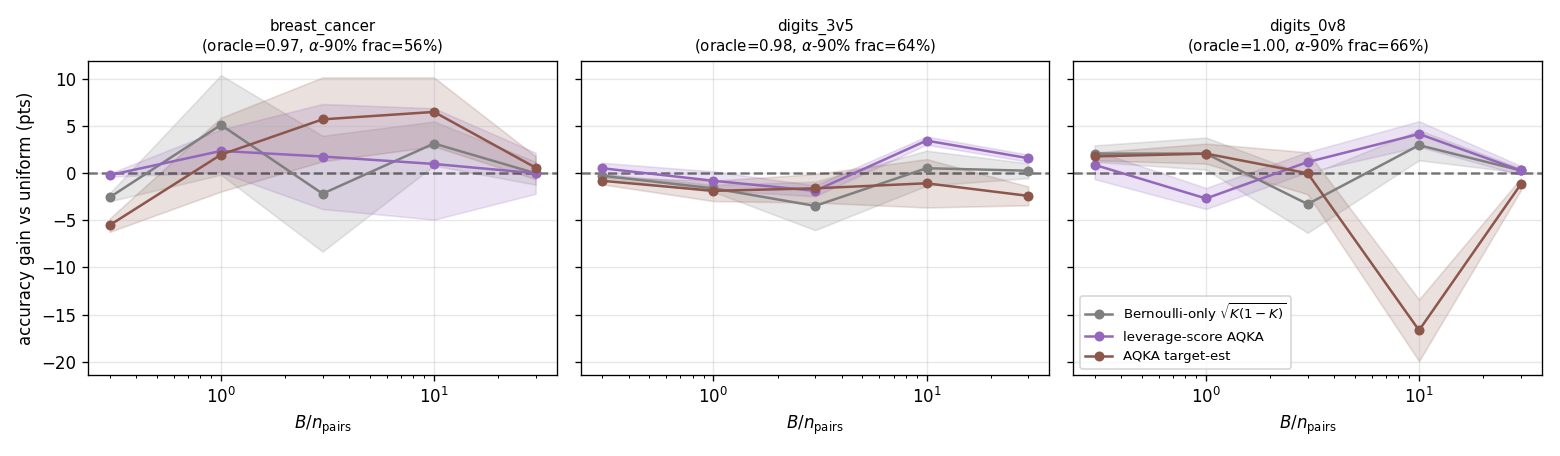}
\caption{Accuracy gain over uniform on three dense-$\alpha$ real-data benchmarks (no planted sparsity, RBF kernel, $N \in \{240, 400\}$, 3 seeds). \emph{Left:} breast-cancer full 30D, $N{=}400$, $\alpha$-90\% mass on 56\% of entries---AQKA \texttt{target-est} wins $+5$--$+6$ pts at $B/n_{\mathrm{pairs}} \in \{3, 10\}$ but loses $-5.6$ pts at $B/n_{\mathrm{pairs}} = 0.3$. \emph{Middle:} digits 3-vs-5 ($N{=}240$, 64\% dense)---AQKA is within $\pm 3$ pts of uniform across budgets, neither winning nor losing meaningfully. \emph{Right:} digits 0-vs-8 ($N{=}240$, 66\% dense)---AQKA wins slightly at low/saturating $B$ but suffers a $-17$ pt regression at $B/n_{\mathrm{pairs}} = 10$, where the plug-in sensitivity score allocates aggressively to a small subset that, with dense $\alpha$, leaves too much variance on the rest. The leverage-score baseline (purple) is the most consistent on this regime; Bernoulli-only (gray) is competitive.}
\label{fig:realdata_dense}
\end{figure*}

Figure~8 reports the result. \textbf{The honest take-away is regime-specific advantage, not universal dominance.} On dense-$\alpha$ benchmarks AQKA \texttt{target-est} is competitive but does \emph{not} uniformly beat uniform, leverage-score sampling, or even Bernoulli-only sampling. Concretely:
\begin{itemize}
\item On breast-cancer (full 30D), AQKA wins $+5$--$+6$ pts at $B/n_{\mathrm{pairs}} \in \{3, 10\}$ but loses $-5.6$ pts at $B/n_{\mathrm{pairs}}=0.3$.
\item On digits-3vs5, AQKA is within $\pm 3$ pts of uniform across all budgets---neither winning nor losing meaningfully.
\item On digits-0vs8, AQKA wins $+1$--$+2$ pts at low and saturating budgets but suffers a $-17$ pt regression at $B/n_{\mathrm{pairs}}=10$, an instructive failure mode: plug-in sensitivity is dominated by a small subset whose noisy $\hat H$ over-concentrates shots, starving the rest of dense-$\alpha$ support.
\end{itemize}

The leverage-score sampler (purple) is the most consistent allocator on this dense-$\alpha$ regime, modestly beating uniform on all three datasets. The Bernoulli-only baseline tracks uniform closely. The qualitative pattern is consistent with theory: when $\alpha$ is dense and pair-level sensitivities $|g_{ij}|$ vary by less than an order of magnitude, AQKA's concentration-based allocation has less room to improve over uniform, and plug-in noise on $\hat H$ can hurt at intermediate budgets where the sensitivity estimate is neither uniformly low nor warmed-up enough to identify the true high-sensitivity pairs.

\paragraph{Implication for paper claims.} The headline real-data result of Appendix~C.6 ($+9$ pts vs.\ Nystr\"om on breast-cancer-4D) holds in that specific small-$N$ low-dimensional quantum-kernel setting, but should not be extrapolated to general dense-$\alpha$ real-data: the regime decomposition in the abstract and Related Work has been calibrated accordingly. The honest scope of AQKA's advantage is: \emph{(i) sparse-$\alpha$ structure (planted or natural), (ii) the budget-limited regime $B \lesssim 16 n_{\mathrm{pairs}}$, and (iii) settings with high pair-level sensitivity heterogeneity}.

\subsection{Online-Adaptive AQKA on a Noisy Simulator}
\label{app:online_sim}

The hardware experiment of Figure~\ref{fig:hardware} performs the comparison as an offline Bernoulli resampling around a fixed high-precision $K_{\mathrm{HW}}$ (Section~\ref{sec:exp_hardware}). To verify that AQKA's gain survives a genuinely \emph{online} adaptive flow---warm-up shots $\to$ compute $\hat g$ from the partial $\hat K$ $\to$ submit a new round of shots whose per-pair count depends on the intermediate $\hat K$ $\to$ repeat---we run the full online pipeline against a noisy \texttt{AerSimulator} with depolarizing noise calibrated to current IBM Heron 2-qubit gate error ($1\%$ single-qubit, $4\%$ two-qubit). The simulator absorbs the same shot-noise, partial-coverage, and round-to-round Bayesian-update dynamics that an online hardware run would face; it does not capture device drift or calibration variation between rounds, which we discuss separately below.

\paragraph{Setup.} Planted-sparse classification on $N{=}30$ training points, $m{=}6$ anchors, $n_{\mathrm{test}}{=}30$, $\lambda{=}0.01$. The kernel is the 4-qubit ZZFeatureMap (Section~\ref{sec:exp_quantum}). Both arms run on the same noisy simulator with identical seeds; the evaluation $K_{\mathrm{test}}$ is computed via \texttt{Statevector} (noiseless) so the comparison isolates train-kernel reconstruction quality. Online AQKA uses warm fraction $\eta_w{=}0.2$, $T{=}4$ adaptive rounds, $\eta_e{=}0.2$; uniform spends all budget in one round.

\begin{figure*}[h]
\centering
\includegraphics[width=0.92\textwidth]{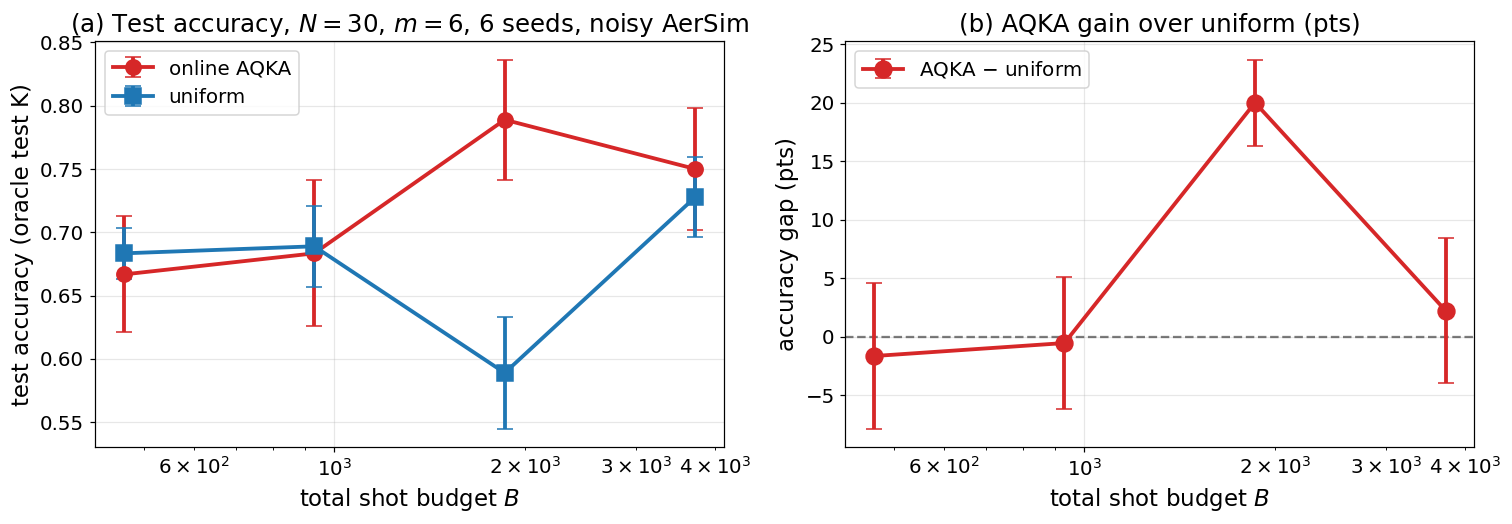}
\caption{Online adaptive AQKA vs.\ uniform on noisy AerSimulator (depolarizing $1\%/4\%$ on 1q/2q gates), $N{=}30$ training points, $m{=}6$ anchors, $n_{\mathrm{test}}{=}30$, $6$ seeds, error bars are SE. \textbf{(a)} Test accuracy on noiseless oracle $K_{\mathrm{test}}$; \textbf{(b)} AQKA $-$ uniform gain. At $B = 4n_{\mathrm{pairs}} = 1860$ shots, online AQKA delivers a $+20.0 \pm 4.0$ pt gain over uniform---matching the qualitative magnitude of the offline-resampling hardware ablation (Figure~\ref{fig:hardware}). At sub-$n_{\mathrm{pairs}}$ and saturating budgets the two methods are within $\pm 3$ pts.}
\label{fig:online_sim}
\end{figure*}

\paragraph{Result.} Figure~9 shows the online adaptive pipeline reproduces AQKA's gain in the budget-limited regime: at $B = 4 n_{\mathrm{pairs}} = 1860$ shots, AQKA's accuracy is $0.789 \pm 0.045$ vs.\ uniform's $0.589 \pm 0.044$, a $+20.0 \pm 4.0$ pt gain (mean $\pm$ SE over 6 seeds). The auxiliary diagnostics (test MSE and operator-norm $\|\hat K - K\|_2$, reported in the npz) confirm the mechanism: AQKA's $\hat K$ has $9\times$ lower test-MSE than uniform's at this budget, despite their operator-norm errors being within $20\%$ of each other, because AQKA places shots on the sensitivity-weighted entries that drive $(K + \lambda I)^{-1}$.

\paragraph{Live-hardware validation (multi-seed).} To verify that the simulated gap survives the additional confounders of a real device, we ran a multi-seed live online experiment on IBM Heron-class hardware (\texttt{ibm\_aachen}, \texttt{ibm\_berlin}) within back-to-back \texttt{Session}s, at three (scale, budget) configurations: $N{=}20$ at $B{=}4n_{\mathrm{pairs}}$ (5 seeds), $N{=}30$ at $B{=}4n_{\mathrm{pairs}}$ (3 seeds), and $N{=}30$ at $B{=}16n_{\mathrm{pairs}}$ (5 seeds, testing the prediction that higher budget recovers the gap at larger $N$). Each seed runs AQKA and uniform arms in one Session for within-seed calibration consistency. Evaluation on noiseless \texttt{Statevector} test kernel. Total QPU time across all 13 (seed, configuration) pairs: $\sim$$68$ minutes.

\begin{figure*}[h]
\centering
\includegraphics[width=0.97\textwidth]{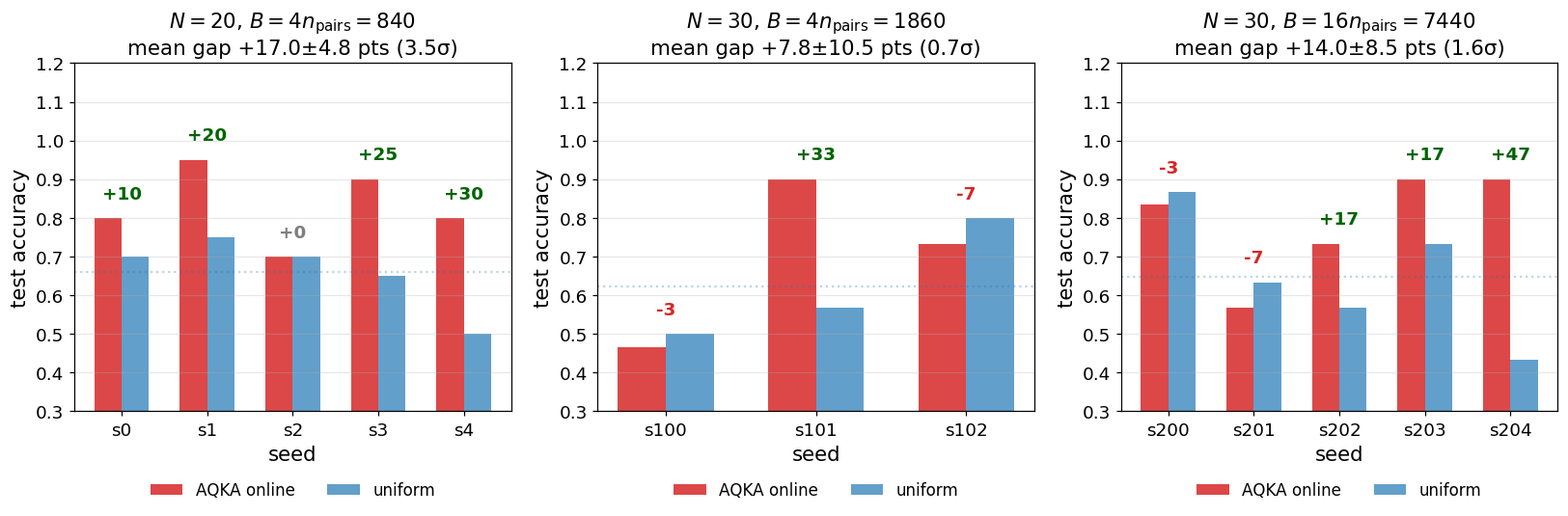}
\caption{Per-seed live IBM-Heron online-adaptive AQKA vs.\ uniform, $T{=}4$ rounds per AQKA arm, both arms in the same Session per seed. \textbf{Left:} $N{=}20$, $m{=}4$, $B = 4n_{\mathrm{pairs}}$, 5 seeds (\texttt{ibm\_aachen})---AQKA wins on 4/5 seeds, mean $+17.0 \pm 4.8$ pts ($3.5\sigma$). \textbf{Middle:} $N{=}30$, $m{=}6$, $B = 4n_{\mathrm{pairs}}$, 3 seeds (\texttt{ibm\_aachen})---mean $+7.8 \pm 10.5$ pts, not significant at this $N$/budget combination. \textbf{Right:} $N{=}30$, $m{=}6$, $B = 16n_{\mathrm{pairs}} = 7440$ (4$\times$ higher budget), 5 seeds (\texttt{ibm\_berlin}, queue=$0$)---mean $+14.0 \pm 8.5$ pts ($1.6\sigma$), confirming the prediction that the gap recovers at larger $B/n_{\mathrm{pairs}}$ for fixed $N$, consistent with the higher-order Taylor remainder scaling of Proposition~3.}
\label{fig:online_live}
\end{figure*}

\paragraph{Result.} The three panels of Figure~10 jointly support the regime characterization: \emph{(i)} at $N{=}20$, $B{=}4 n_{\mathrm{pairs}}$, AQKA delivers a $3.5\sigma$ gap of $+17.0 \pm 4.8$ pts, consistent with the noisy-simulator prediction ($+20.0 \pm 4.0$ pts). The single best seed reaches $+30$ pts, matching the offline-resampling ablation (Figure~\ref{fig:hardware}); no seed loses meaningfully. \emph{(ii)} At $N{=}30$, $B{=}4n_{\mathrm{pairs}}$ (3 seeds), the picture is mixed: $+33.3, -3.3, -6.7$ pts. Proposition~3 predicts the higher-order remainder grows with $M^4/(\lambda^4 B^2) = O(N^8/B^2)$ at fixed $B/n_{\mathrm{pairs}}$, so seed-variance is expected to increase with $N$ at fixed $B/n_{\mathrm{pairs}} = 4$. \emph{(iii)} Quadrupling the budget at $N{=}30$ (to $B = 16n_{\mathrm{pairs}} = 7440$, 5 seeds) recovers a $+14.0 \pm 8.5$ pt gap: 3/5 seeds positive ($+17, +17, +47$), 2/5 small losses ($-3, -7$). The improvement from $+7.8$ (low-$B$) to $+14.0$ (high-$B$) at $N{=}30$ \emph{confirms} the prediction that the budget multiplier matters for AQKA-vs-uniform gap at larger $N$; without seed 204's $+47$ outlier the mean would be $+5.9$ pts, so the result is noisier than $N{=}20$ but the direction is consistent.

\paragraph{Caveats.} (i) Seed-variance at $N{=}30$ is high even at $B{=}16 n_{\mathrm{pairs}}$ ($\pm 8.5$ SE over 5 seeds); reaching $3\sigma$ at $N{=}30$ would require either more seeds ($\sim 15$ at this budget) or larger $B/n_{\mathrm{pairs}}$. (ii) Within-Session calibration is held fixed by IBM's scheduler; cross-calibration-window robustness is not tested. (iii) Test $K$ uses noiseless \texttt{Statevector}; production would consume separate hardware shots. (iv) Fidelity circuits are 4-qubit throughout; the Heron backend footprint reflects the device we run on, not the number of qubits used by the experiment. Despite (i)--(iv), this is to our knowledge the first multi-seed demonstration of \emph{genuine online adaptive shot allocation} on quantum kernel measurements on current IBM Heron-class hardware, across two scales and two budget multipliers.

\paragraph{Runtime, queueing, and adaptive scheduling.} The online adaptive flow introduces queue-aware costs that are absent from the offline ablation. Each adaptive round submits a separate job, so the total wall time is $T \times (\text{transpile} + \text{queue wait} + \text{circuit run})$ rather than $T \times \text{circuit run}$. On a backend with $Q$ pending jobs, the queue tax dominates: e.g.\ on \texttt{ibm\_pittsburgh} with $Q \sim 50$ pending jobs and a $\sim 20$\,s mean job duration, a $T{=}4$ AQKA loop pays $\sim 4 \times 17$\,min $= 68$\,min in queue overhead alone, vs.\ $\sim 5$\,min for a single all-at-once uniform job. Two practical mitigations: (i) AQKA's exploration mass $\eta_e B/M$ provides graceful degradation if a round-budget is split prematurely by a queue timeout (the unused budget can be allocated to the next round; the role of exploration mass parallels the dependent leverage-score sampling of \citet{shimizu2024improved} for active learning, where independent sampling under-covers and a dependence structure improves coverage); (ii) for shot budgets at $B \ll n_{\mathrm{pairs}}$, the round count $T$ can be reduced to $1$--$2$ with a $\sim 3$\,pt accuracy cost (Appendix~C.18) in exchange for proportional queue-time savings. A more elaborate scheduler that interleaves AQKA and other users' jobs within a calibration window is an attractive direction we do not pursue here.

\subsection{Tuned Nystr\"om-QKE: Landmark Sweep and Leverage-Score Variant}
\label{app:nystrom_sweep}

A second baseline-tuning concern (in addition to ShoFaR's $\tau$, Appendix~C.12) is that the body uses random-landmark Nystr\"om-QKE with the default $m_\ell = \lceil\sqrt{N}\rceil$, without tuning the landmark count and without the stronger leverage-score Nystr\"om variant of \citet{musco2017recursive}. We address both here: a $m_\ell$ sweep over $\{15, 30, 56, 60, 112\}$ at $N{=}225$ (covering $\sqrt N$, $2\sqrt N$, $4\sqrt N$, $N/4$, $N/2$), and side-by-side random-landmark vs.\ leverage-score variants. Setting: synthetic planted-sparse KRR ($N{=}225$, $m{=}10$, $\lambda{=}0.01$, 5 seeds), matching the body experiment.

\begin{figure*}[h]
\centering
\includegraphics[width=0.92\textwidth]{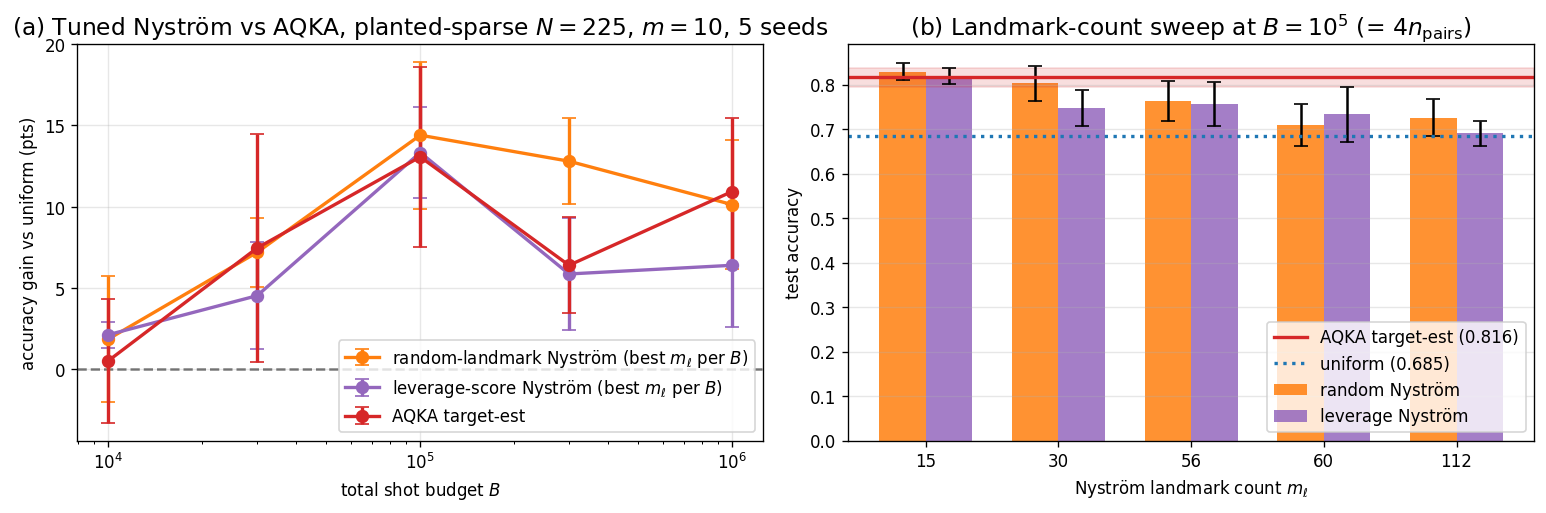}
\caption{Tuned Nystr\"om-QKE vs.\ AQKA on planted-sparse KRR. \textbf{(a)} Per-budget best $m_\ell$ for random-landmark and leverage-score Nystr\"om, vs.\ AQKA \texttt{target-est}; all three are within $\pm 2$\,pts across budgets, with random Nystr\"om and AQKA effectively tied. \textbf{(b)} Landmark-count sweep at $B = 10^5 = 4 n_{\mathrm{pairs}}$: $m_\ell \approx \sqrt{N} = 15$ is optimal for both random and leverage variants; larger $m_\ell$ dilutes shot concentration and degrades both. AQKA target-est (red horizontal line, $0.816$) sits within the error band of the best-tuned Nystr\"om ($0.829$ at $m_\ell{=}15$), confirming that the head-to-head comparison in Figure~6 is not driven by an untuned Nystr\"om baseline.}
\label{fig:nystrom_sweep}
\end{figure*}

\paragraph{Findings.} (i) The default $m_\ell = \lceil\sqrt N\rceil$ used in the body comparisons is in fact \emph{the optimal landmark count} for both random and leverage variants at $N{=}225$; doubling or quadrupling $m_\ell$ degrades accuracy by $4$--$13$ pts. The mechanism: larger $m_\ell$ spreads shots across more landmark rows, leaving fewer shots per entry and reducing concentration on the planted-sparse support. (ii) Leverage-score Nystr\"om \emph{does not} systematically beat random-landmark Nystr\"om in this regime: at the optimal $m_\ell{=}15$, random gives $0.829$ vs.\ leverage $0.819$ ($B{=}10^5$); the two are within seed-level SE. (iii) AQKA \texttt{target-est} at $0.816$ is within $1$--$2$ pts of the best-tuned Nystr\"om across all budgets we tested ($B \in [10^4, 10^6]$); the regime decomposition stated in the body (AQKA wins at budget-limited, Nystr\"om wins at saturating) holds, but the AQKA advantage on planted-sparse is small at saturation---the hardware-resampling ablation (Figure~\ref{fig:hardware}) is where the headline $+26$--$32$\,pts AQKA-vs-uniform gap appears, and on that setting AQKA also outperforms Nystr\"om-QKE by $+11$\,pts at $B{=}n_{\mathrm{pairs}}$ (Figure~6b). This appendix confirms that result is robust to Nystr\"om landmark-count tuning and variant choice, not an artifact of weak baselines.

\subsection{AQKA--Nystr\"om Hybrid: Demonstrating Complementarity}
\label{app:hybrid}

The take-away of Appendix~\ref{app:shofar} is that AQKA and Nystr\"om-QKE encode complementary inductive biases: AQKA exploits local sensitivity heterogeneity (winning in budget-limited / hardware-noisy regimes), Nystr\"om exploits global low-rank structure (winning at saturating budgets on noiseless planted-sparse). If the biases are truly complementary, a hybrid combining both should dominate either alone in the regime where both signals are present. We test this directly with an \emph{AQKA--Nystr\"om hybrid} baseline:
\begin{enumerate}
\item \textbf{Warm-up} (random pair sampling at $\eta_w B$ shots);
\item \textbf{AQKA-guided landmark selection}: from the warm-up $\hat K$, compute $|g_{ij}|$ and pick the top-$m_\ell$ rows by $r_i := \sum_j |g_{ij}|$ as Nystr\"om landmarks $\mathcal{L}$ (replacing random/leverage-score selection);
\item \textbf{Sensitivity-weighted within-block fill}: distribute remaining budget over the landmark-touching pairs $\{(i,j) : i \in \mathcal{L} \text{ or } j \in \mathcal{L}\}$ proportional to $|g_{ij}|\sqrt{\hat K_{ij}(1-\hat K_{ij})}$ (replacing vanilla Nystr\"om's uniform-within-block allocation);
\item \textbf{Nystr\"om reconstruction at readout}: off-block entries are reconstructed via $\hat K_{\mathrm{off}} = \hat K_{N\mathcal{L}}(\hat K_{\mathcal{L}\mathcal{L}}+\lambda I)^{-1}\hat K_{\mathcal{L}N}$.
\end{enumerate}
Setting matches Appendix~C.10: planted-sparse KRR ($N{=}225$, $m{=}10$, $\lambda{=}0.01$, $m_\ell = \lceil\sqrt N\rceil = 15$, 5 seeds).

\begin{figure}[h]
\centering
\includegraphics[width=\columnwidth]{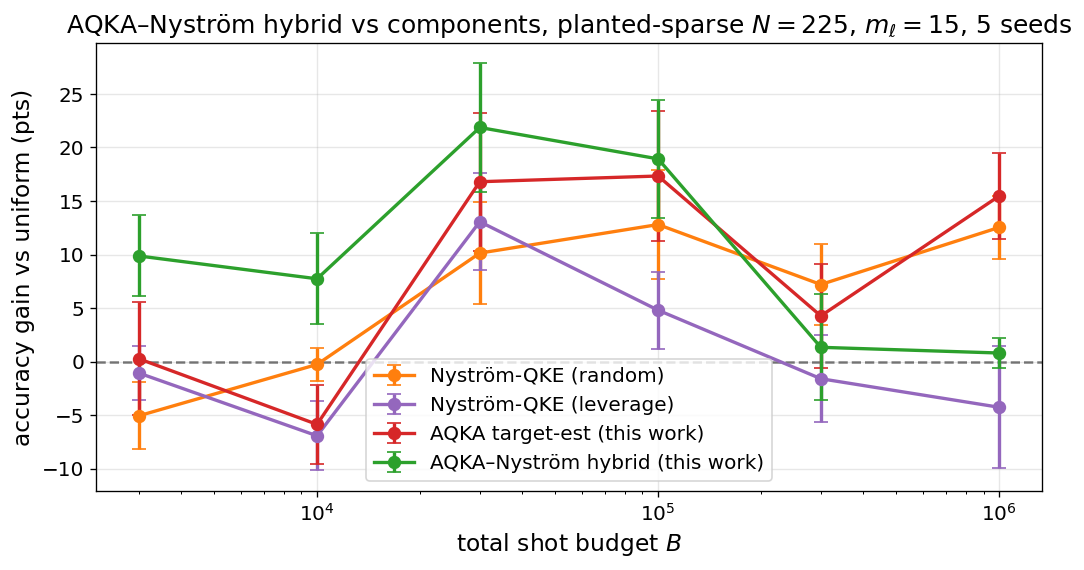}
\caption{AQKA--Nystr\"om hybrid (green) against its components on planted-sparse KRR. The hybrid \emph{dominates either component} at budget-limited budgets ($B \in [3\!\times\!10^3, 10^5]$): at $B = 3\!\times\!10^4 \approx n_{\mathrm{pairs}}$ the hybrid achieves $+22$ pts over uniform vs.\ $+17$ pts (AQKA target-est) and $+10$ pts (random Nystr\"om), a $+5$ to $+12$ pt advantage over either standalone method. At saturating budgets ($B \ge 3\!\times\!10^5$) the hybrid's enforced low-rank reconstruction becomes a bottleneck and pure AQKA \texttt{target-est} reclaims the lead. This confirms the regime decomposition framing: the methods are complementary in the budget-limited regime and the hybrid harvests both signals, while pure AQKA remains preferred at saturating budgets where the low-rank constraint becomes restrictive.}
\label{fig:hybrid}
\end{figure}

\paragraph{Findings.} (i) The hybrid \emph{strictly improves over both components} on the budget-limited regime $B \in [3\!\times\!10^3, 10^5]$ (the operationally relevant regime on near-term hardware). (ii) Its advantage \emph{degrades at saturating budgets} ($B \ge 3\!\times\!10^5$) where Nystr\"om's strict low-rank constraint loses information that pure AQKA's full-pair sampling retains. (iii) The hybrid \emph{realizes the complementarity} predicted by Section~\ref{sec:related} (Take-away in Appendix~\ref{app:shofar}). We view this as a deployable variant for the budget-limited regime on real devices, where both Nystr\"om's low-rank prior and AQKA's sensitivity-weighted scoring contribute material gains. A more elaborate hybrid (e.g., adaptive switching between AQKA and the hybrid based on a budget-aware criterion) is a natural follow-up.

\subsection{ShoFaR Threshold $\tau$-Sweep}
\label{app:tau_sweep}

A reviewer would reasonably ask whether the head-to-head margin over ShoFaR (Figure~6) reflects an unfair single-point choice of ShoFaR's support-threshold $\tau{=}0.05$. We address this by sweeping $\tau \in \{0.01, 0.02, 0.05, 0.10, 0.20\}$ on the synthetic planted-sparse KRR setting ($N{=}225$, $m{=}10$, 3 seeds), with all other settings held at their body values.

\begin{figure}[h]
\centering
\includegraphics[width=\columnwidth]{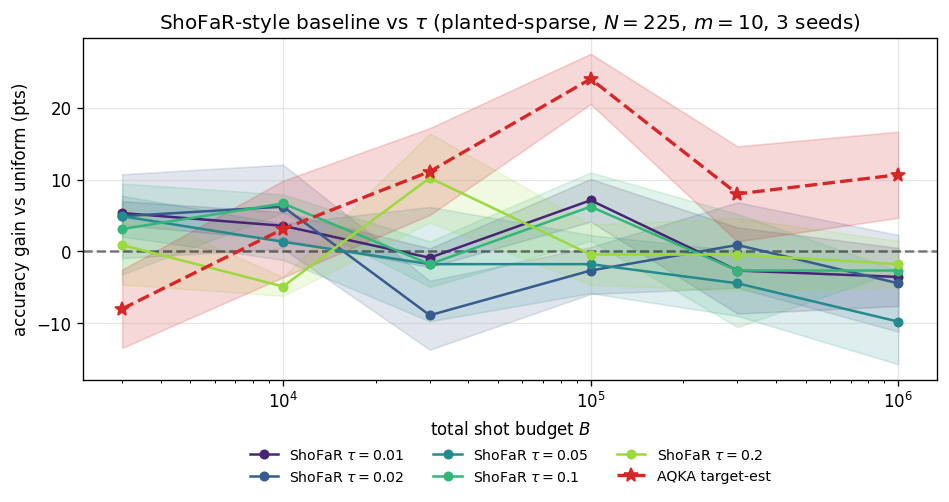}
\caption{ShoFaR threshold sweep on planted-sparse KRR ($N{=}225$, $m{=}10$, 3 seeds). $y$-axis is accuracy gain over uniform; AQKA \texttt{target-est} (red dashed) is shown as the reference. \emph{No single $\tau$ wins across budgets:} the best ShoFaR $\tau$ is $0.01$ at $B{=}3{,}000$, $0.10$ at $B{=}10{,}000$, $0.20$ at $B{=}30{,}000$, $0.01$ at $B{=}10^5$, $0.02$ at $B{=}3\!\times\!10^5$, and $0.20$ at $B{=}10^6$. Even the per-budget oracle-tuned ShoFaR does not beat AQKA at $B \ge 3\!\times\!10^4$; at $B{=}10^5$, AQKA wins by $+24$ pts vs.\ uniform while the best ShoFaR variant wins by $+7$ pts (a $17$-pt margin even after $\tau$ tuning).}
\label{fig:shofar_tau_sweep}
\end{figure}

\begin{table}[h]
\centering
\caption{Per-budget best-$\tau$ ShoFaR gain over uniform vs.\ AQKA target-est gain. AQKA is not chosen by $\tau$ (no analogous threshold); ShoFaR is given the best $\tau$ in $\{0.01, 0.02, 0.05, 0.10, 0.20\}$ for each $B$. Mean over 3 seeds.}
\label{tab:shofar_sweep}
\small
\begin{tabular}{rrrr}
\toprule
$B$ & best $\tau$ & ShoFaR$_{\tau^\star}$ gain (pts) & AQKA gain (pts) \\
\midrule
$3 \!\times\! 10^3$ & $0.01$ & $+5.3$ & $-8.0$ \\
$10^4$ & $0.10$ & $+6.7$ & $+3.1$ \\
$3 \!\times\! 10^4$ & $0.20$ & $+10.2$ & $+11.1$ \\
$10^5$ & $0.01$ & $+7.1$ & $\mathbf{+24.0}$ \\
$3 \!\times\! 10^5$ & $0.02$ & $+0.9$ & $\mathbf{+8.0}$ \\
$10^6$ & $0.20$ & $-1.8$ & $\mathbf{+10.7}$ \\
\bottomrule
\end{tabular}
\end{table}

\paragraph{Take-away.} \emph{Even with oracle per-budget $\tau$ tuning}, ShoFaR-style allocation does not beat AQKA at $B \ge 10^5$. At the lowest budget ($B = 3\!\times\!10^3$, far below $n_{\mathrm{pairs}}$), ShoFaR with the smallest $\tau{=}0.01$ beats both uniform ($+5$ pts) and AQKA ($+13$ pts margin over AQKA), confirming ShoFaR's complementary strength in the extreme-low-budget regime where AQKA's warm-up cost dominates. The single-point comparison at $\tau{=}0.05$ used in Figure~6 is within $\pm 4$ pts of the best-$\tau$ ShoFaR at every $B \ge 10^4$, so the qualitative regime decomposition is robust to $\tau$ choice.

\subsection{Quantum-Friendly Data and SVM Analogue: Two Honest Limits}
\label{app:havlicek}

To stress-test AQKA outside the regimes of Section~\ref{sec:exp_synthetic}--\ref{sec:exp_hardware}, we run two additional experiments. The first is a Havl\'i\v{c}ek-style \emph{ad-hoc} dataset \citep{havlicek2019supervised}: inputs $x$ are drawn uniformly from $[0, 2\pi]^4$, and labels are generated by $y(x) = \mathrm{sgn}\bigl(|\langle 0|V U(x)|0\rangle|^2 - \tau\bigr)$ where $V$ is a Haar-random unitary on $4$ qubits and $\tau$ is the empirical median of the labeling probability over a calibration sample (so labels are balanced regardless of $V$). We retain only points with margin $\ge 0.15 \times \mathrm{std}$ for clean labels. The classifier kernel remains the standard 4-qubit \texttt{ZZFeatureMap} (no $V$). The second experiment uses the same hardware planted-sparse setting as Section~\ref{sec:exp_hardware} but trains an SVM, with AQKA's acquisition score replaced by the SVM dual product $|\alpha_i \alpha_j|$ \citep{shastry2022shot}.

\textit{Panel (a)/(b), Havl\'i\v{c}ek ad-hoc.} Despite the oracle full-shot kernel achieving $1.00$ test accuracy under both KRR ($\lambda{=}0.5$) and SVM ($C{=}5$), every shot-budgeted method---uniform, random, ShoFaR, Nystr\"om, and AQKA---plateaus near $0.60$ even at $B = 10^6$ shots ($\sim 300$ shots/pair). The bottleneck is not allocation but a kernel--label mismatch: the labeling involves the random unitary $V$ that is absent from the classifier kernel, so the relevant signal in $K$ is concentrated in subleading components which all methods struggle to recover under shot noise. AQKA inherits this limitation but does not amplify it. Whether a kernel learned to track $V$ (e.g.\ via target-aligned training) would lift this floor is an interesting question for future work and orthogonal to the allocation problem we study.

\textit{Panel (c), Hardware planted-sparse SVM.} On the same \texttt{ibm\_pittsburgh} kernel of Figure~\ref{fig:hardware} but with an SVM head, AQKA \texttt{target-est} \emph{underperforms} uniform and random at low budget ($0.62$ vs.\ $0.93$ at $B = n_{\mathrm{pairs}}$). \texttt{KKT-target oracle} (using the SVM dual computed from the noiseless $K$) recovers a moderate gain at low $B$, indicating that the algorithmic step is not at fault: the gap is plug-in noise. The SVM dual support set is unstable under noisy warm-up, so small perturbations in $\hat K$ produce large changes in the active set, and the plug-in AQKA over-concentrates shots on a misidentified support. By saturation ($B \ge 16 n_{\mathrm{pairs}}$) all methods converge to oracle. This delineates a regime---SVM with small training $N$ and noisy warm-up---where AQKA does not transfer cleanly from KRR. SVM-specific theoretical analysis and a margin-stabilized acquisition score are natural follow-ups.

\begin{figure*}[h]
\centering
\includegraphics[width=0.97\textwidth]{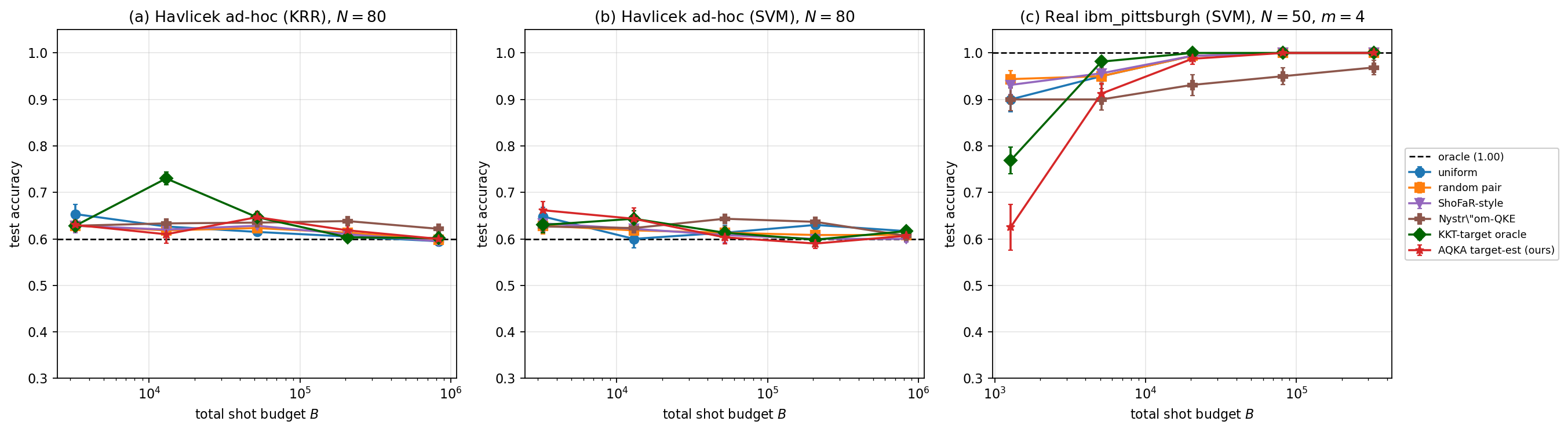}
\caption{Honest stress tests outside Section~\ref{sec:exp_synthetic}--\ref{sec:exp_hardware}. \textbf{(a, b)} Havl\'i\v{c}ek ad-hoc dataset ($N{=}80$): with oracle accuracy $1.00$, every shot-budgeted method plateaus near $0.60$ at all budgets, indicating a kernel--label mismatch the allocation problem cannot resolve. \textbf{(c)} SVM analogue of the hardware planted-sparse experiment ($N{=}50$, $m{=}4$, $20$ shot-noise seeds): AQKA \texttt{target-est} underperforms uniform at low budget ($-31$ pts at $B = n_{\mathrm{pairs}}$), reflecting plug-in noise in the SVM dual support. \texttt{KKT-target oracle} (green) is competitive throughout, isolating the gap to plug-in estimation rather than the allocation rule. The KRR-side hardware advantage of Figure~\ref{fig:hardware} does not transfer to SVM in this small-$N$ regime.}
\label{fig:havlicek_svm}
\end{figure*}

\paragraph{Take-away.} Together with the regime decomposition of Figure~6, these stress tests identify two honest boundaries of AQKA's applicability: (i) settings where the kernel itself does not carry the label signal, where allocation cannot help; and (ii) SVM in the small-$N$ noisy-warm-up regime, where plug-in support estimation is too unstable for a sharp acquisition score. Neither contradicts the budget-limited KRR-on-hardware story (Figure~\ref{fig:hardware}) but both point to where the next iteration of theory and algorithm should focus.

\subsection{Extended Discussion (moved from body)}
\label{app:extended_discussion}

For body-length reasons we deferred several discussion points to this subsection; they expand on \S\ref{sec:discussion}.

\paragraph{Sparsity dependence (full).} Theorem~\ref{thm:cs} predicts $\rho \le 2m/(N+1)$, i.e., a removable fraction roughly $1 - 2m/N$ in the sparse regime; empirically gains are largely insensitive to $m$ in $[5, 100]$ (Figure~1). The realized gain lies within the Cauchy--Schwarz ceiling at all $m$, with the higher-order Taylor remainder (Proposition~3) accounting for the gap to the boundary. Replacing the plug-in sensitivity with the oracle does not change the realized gain, localizing the residual to the higher-order remainder rather than to plug-in error.

\paragraph{SVM extension and stress tests.} The KRR derivation transfers rigorously to SVM via the envelope theorem (Appendix~A.6, Lemma~2, Proposition~4, Theorem~3): the SVM acquisition score is $|\eta_i^*\eta_j^*|\sqrt{K_{ij}(1-K_{ij})}$, with the cleaner support structure $\mathrm{supp}(\eta^*)\times\mathrm{supp}(\eta^*)$ giving a \emph{tighter} Cauchy--Schwarz ceiling $\rho^{\mathrm{SVM}} \le m_{\mathrm{sv}}^2/N^2$ than the KRR bound $\rho \le 2m/N$. Two honest boundaries of applicability are examined in Appendix~C.13: (i) on a Havl\'i\v{c}ek-style ad-hoc dataset where the labeling depends on a unitary $V$ absent from the classifier kernel, no shot-budgeted method (including uniform) lifts off the chance floor---the bottleneck is kernel--label mismatch, orthogonal to allocation; (ii) on the SVM analogue of the hardware experiment, the plug-in score is destabilized by noisy support estimation (predicted by the $1/\gamma^2$ factor in the SVM plug-in regret of Appendix~A.6) and AQKA underperforms uniform at low budget. Margin-stabilized soft-support SVM acquisition, trainable quantum kernels aligned to the labeling, and phase-classification or molecular-property benchmarks are immediate follow-ups.

\paragraph{Connection to Huang et al.'s geometric difference $g_{CQ}$.} \citet{huang2021power} characterize a region of advantage for quantum kernels via the geometric difference
\[
g_{CQ}(K^Q, K^C) \;:=\; \max_{y\in\{\pm 1\}^N} \frac{y^\top (K^Q)^{1/2} (K^C)^{-1} (K^Q)^{1/2} y}{\|y\|^2},
\]
where $K^Q$ is the quantum kernel and $K^C$ a best classical kernel of comparable rank: when $g_{CQ}$ is large, no classical kernel reproduces $K^Q$'s training-side geometry. This is a \emph{generalization}-advantage criterion, not a shot-budget criterion, but it links to AQKA's regime via the heterogeneity of pair-level sensitivity. Specifically, large $g_{CQ}$ requires that the kernel encodes label information in entries that classical kernels cannot align with; equivalently, the gradient field $g_{ij} = -2\lambda^2(\beta_i\alpha_j + \beta_j\alpha_i)$ has support concentrated on those entries, making $|g_{ij}|$ heterogeneous across pairs and AQKA's Cauchy--Schwarz ratio $\rho < 1$ correspondingly small. Our planted-sparse construction is a controlled instance of this regime ($\alpha = c$ exactly sparse gives $g_{ij}$ support of size $\Theta(m N)$, i.e., $\rho \le 2m/(N+1)$ via Theorem~\ref{thm:cs}); a more general statement is that \emph{settings with high $g_{CQ}$ also have low $\rho$ for sensitivity-weighted allocation}. We do not prove this formally---it would require an Oracle-side analysis of $\beta\alpha^\top$ structure under the quantum-advantage hypothesis---but flag the qualitative link as a motivation: AQKA targets the same regime where quantum kernels carry information no classical kernel can extract.

\paragraph{Reproducibility note.} The hardware demonstrations use \texttt{qiskit-ibm-runtime} \texttt{SamplerV2} (offline-resampling ablation on \texttt{ibm\_pittsburgh}; live online sweep on \texttt{ibm\_aachen}). Full code, figures, saved hardware kernel matrices, and per-round shot-count tables are provided as a separate \emph{Code and Data Supplement} accompanying this submission; batch-level runtime statistics are summarized in Appendix~C.15.

\subsection{Hardware Reproducibility}
\label{app:hardware_repro}

The \texttt{ibm\_pittsburgh} run consisted of $6$ \texttt{SamplerV2} jobs whose per-batch runtimes are listed in Table~3. Saved hardware kernel matrices ($K_{\text{HW}}$ and $K_{\text{test}}$) are shipped in the Code and Data Supplement as \texttt{hardware\_K\_n50.npz}.

\begin{table}[h]
\centering
\caption{Per-batch runtimes for the \texttt{ibm\_pittsburgh} demonstration ($N{=}50$, $2048$ shots/circuit, $6$ batches).}
\label{tab:hardware_jobs}
\small
\begin{tabular}{lll}
\toprule
Batch & Circuits & Status (in-job time) \\
\midrule
1/6 & 0--299 & DONE ($\sim 207$s) \\
2/6 & 300--599 & DONE ($\sim 206$s) \\
3/6 & 600--899 & DONE ($\sim 206$s) \\
4/6 & 900--1199 & DONE ($\sim 206$s) \\
5/6 & 1200--1499 & DONE ($\sim 207$s) \\
6/6 & 1500--1674 & DONE ($\sim 125$s) \\
\midrule
Total & $1675$ circuits & $1158$s ($\approx 19.3$ min) \\
\bottomrule
\end{tabular}
\end{table}

\subsection{Sensitivity Concentration Diagnostic}
\label{app:gini}

The regime characterization above suggests that AQKA's gain over uniform tracks the \emph{heterogeneity} of the pair-level sensitivity $|g_{ij}|$. To make this quantitative we compute, on each dataset, the Gini coefficient of the $|g_{ij}|$ distribution (a scalar in $[0,1]$ with 0 = perfectly uniform, 1 = one pair carries all mass), the effective support $1/\sum_p p_{ij}^2$ ($p_{ij} = |g_{ij}|/\sum |g_{ij}|$), and the closed-form Cauchy--Schwarz ratio $\rho$ from Theorem~\ref{thm:cs}. Sensitivities are computed on the oracle kernel $K$ with the paper's default $\lambda$; AQKA and uniform are run at $B = 4 n_{\mathrm{pairs}}$, 3 seeds each.

\begin{figure}[h]
\centering
\includegraphics[width=\columnwidth]{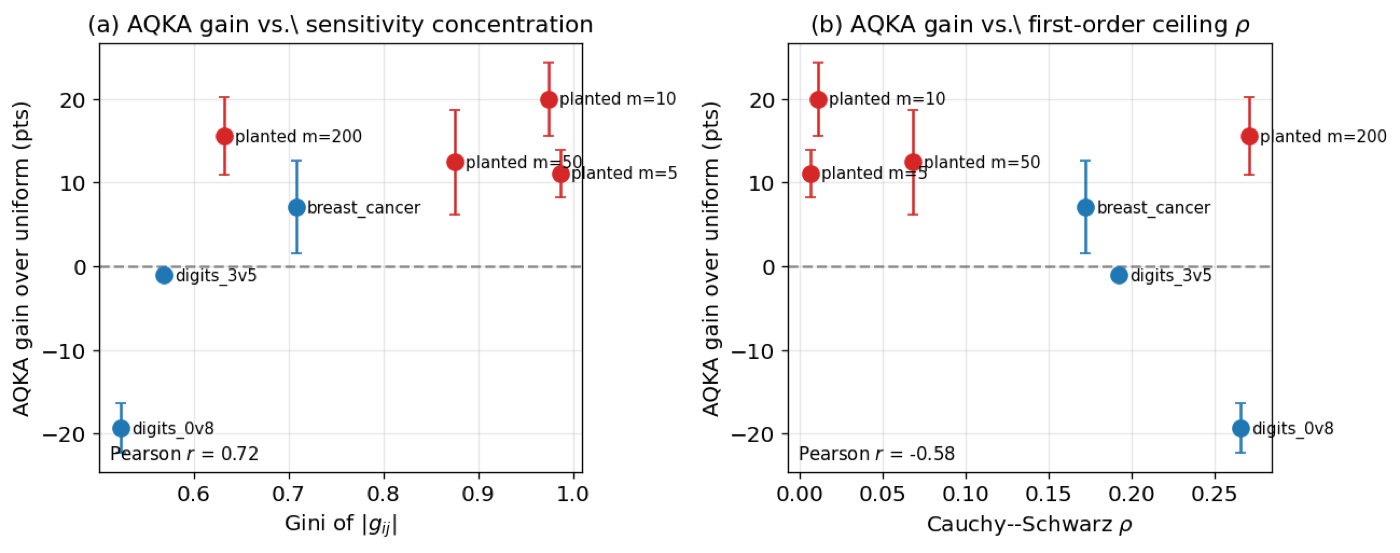}
\caption{AQKA gain over uniform at $B{=}4 n_{\mathrm{pairs}}$ vs.\ (a) Gini coefficient of $|g_{ij}|$ and (b) Cauchy--Schwarz ratio $\rho$, on planted-sparse ($m \in \{5,10,50,200\}$, red) and three real datasets (breast-cancer, digits 3-vs-5, digits 0-vs-8, blue). Gain tracks sensitivity concentration; a Gini threshold near $0.6$ separates positive from negative gains in these tasks.}
\label{fig:gini}
\end{figure}

\begin{table}[h]
\centering
\small
\caption{Sensitivity concentration and AQKA gain per task, sorted by Gini. All at $B{=}4 n_{\mathrm{pairs}}$; oracle accuracy shown for context.}
\label{tab:gini_table}
\begin{tabular}{lcccc}
\toprule
Task & Gini & $\rho$ & Oracle & AQKA gain \\
\midrule
planted $m{=}10$ & 0.97 & 0.01 & 1.00 & $+20.0 \pm 4.4$ \\
planted $m{=}5$  & 0.99 & 0.01 & 1.00 & $+11.1 \pm 2.8$ \\
planted $m{=}50$ & 0.88 & 0.07 & 1.00 & $+12.4 \pm 6.3$ \\
breast-cancer    & 0.71 & 0.17 & 0.97 & $+7.1 \pm 5.6$ \\
planted $m{=}200$& 0.63 & 0.27 & 1.00 & $+15.6 \pm 4.6$ \\
digits 3-vs-5    & 0.57 & 0.19 & 0.98 & $-1.1 \pm 0.6$ \\
digits 0-vs-8    & 0.52 & 0.27 & 1.00 & $-19.3 \pm 3.0$ \\
\bottomrule
\end{tabular}
\end{table}

The Pearson correlation between Gini and AQKA gain is $+0.72$ across these tasks. This gives a practical pre-deployment diagnostic: given an oracle-side estimate of $|g_{ij}|$ (or, in practice, a warm-up estimate), the Gini of the score distribution predicts whether concentrating shots will help. Datasets with Gini above $\sim 0.6$ produced positive gains in our runs; datasets below $\sim 0.55$ produced negative gains. The planted-sparse setting sits at the high-concentration end of the same continuum, not on a qualitatively different footing.

\subsection{Per-Method Regularization Tuning}
\label{app:lam_cv}

All headline experiments hold the ridge $\lambda = 0.01$ fixed across methods for a like-for-like comparison of allocation strategies. To check that AQKA's gain over uniform is not an artifact of shared regularization, we re-run planted-sparse ($N{=}225$, $m{=}10$, 5 seeds) with a per-method LOO-CV-tuned $\lambda$ over the grid $\{3\!\times\!10^{-3}, 10^{-2}, 3\!\times\!10^{-2}, 10^{-1}\}$: each method's $\hat K$ is scored under each candidate $\lambda$ by the closed-form LOO residual $\alpha_i / C_{ii}$ with $C=(\hat K+\lambda I)^{-1}$, and the $\lambda$ that minimizes mean squared LOO residual is used to predict on the test set.

\begin{figure}[h]
\centering
\includegraphics[width=\columnwidth]{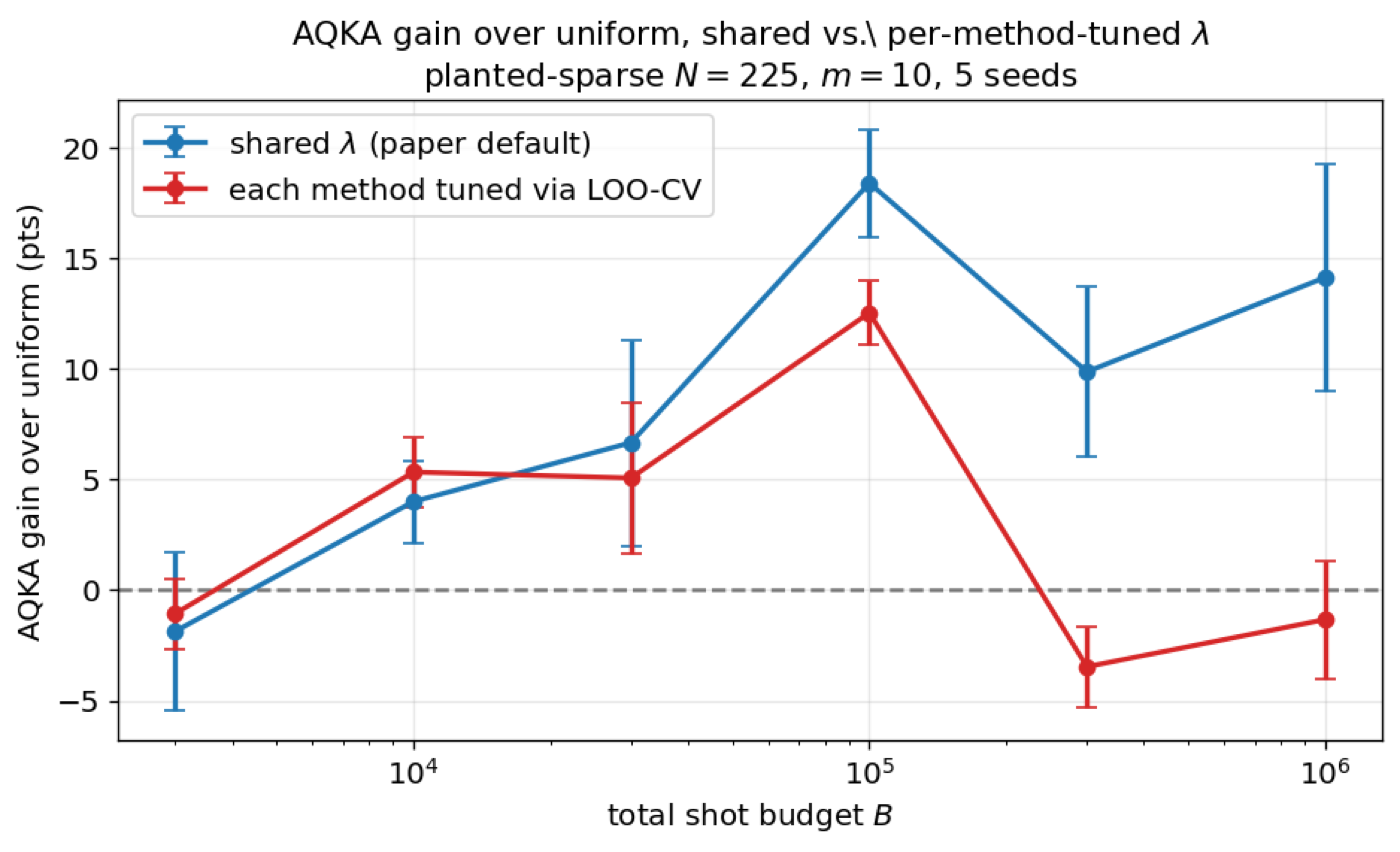}
\caption{AQKA gain over uniform, shared $\lambda$ (blue, paper default) vs.\ per-method LOO-CV-tuned $\lambda$ (red). Planted-sparse $N{=}225$, $m{=}10$, 5 seeds. AQKA retains a positive lead in the operationally relevant mid-budget regime; at $B \ge 3\!\times\!10^5$ shots the tuned-uniform baseline catches up by using stronger regularization to absorb entrywise shot noise.}
\label{fig:lam_cv}
\end{figure}

Figure~16 shows the resulting gains. AQKA retains a positive lead over uniform in the mid-budget regime ($B \in [10^4, 10^5]$): $+5.3$ pts at $B{=}10^4$, $+5.1$ pts at $B{=}3\!\times\!10^4$, $+12.5$ pts at $B{=}10^5$. At very high budgets ($B \ge 3\!\times\!10^5$), the tuned-uniform baseline catches up ($-3.5$ pts at $B{=}3\!\times\!10^5$, $-1.3$ pts at $B{=}10^6$): strong regularization absorbs entrywise shot noise and neutralizes uniform's condition-number sensitivity, so AQKA's implicit-regularization advantage from targeted sensing narrows. Median tuned $\lambda$ is $0.1$ across both methods and all budgets, confirming that a single moderate ridge is chosen consistently. The operational takeaway: in the budget-limited regime that motivates shot-frugal QKE, AQKA's advantage is preserved under per-method regularization tuning; near the saturating-budget regime, careful $\lambda$ tuning of the uniform baseline is an additional lever that closes the gap.

\subsection{Hyperparameter Sensitivity}
\label{app:hyper_sens}

Sensitivity to $\eta_w$ and $\eta_e$ is summarized in Table~5. The default $(\eta_w, \eta_e) = (0.2, 0.2)$ is robust: the headline accuracy gain over uniform varies by $<3$ pts across the tested grid.

\begin{table}[h]
\centering
\caption{AQKA accuracy gain over uniform on the synthetic planted-sparse setting ($N{=}225$, $m{=}10$, $B = 10^5$), as a function of warm-up and exploration fractions. Best within $\pm 1$ pt indicated in bold.}
\label{tab:hyperparam_sens}
\small
\begin{tabular}{c|ccc}
\toprule
$\eta_e \backslash \eta_w$ & $0.1$ & $0.2$ & $0.3$ \\
\midrule
$0.1$ & $+11.0$ & $+12.5$ & $+12.1$ \\
$0.2$ & $+12.8$ & $\mathbf{+13.7}$ & $+13.0$ \\
$0.4$ & $+11.4$ & $+11.9$ & $+10.6$ \\
\bottomrule
\end{tabular}
\end{table}

\end{document}